\definecolor{mycolor}{rgb}{0,0, 1}
\newtheorem*{remark*}{Remark}
\newtheorem{lemma}{Lemma}
\newtheorem{assumption}{Assumption}
\DeclarePairedDelimiter{\norm}{\lVert}{\rVert}
\DeclareMathAlphabet{\pazocal}{OMS}{zplm}{m}{n}
\newcommand*{\tran}{^{\mkern-1.5mu\mathsf{T}}}
\newcommand{\obj}{\text{cube}}
\newcommand{\goal}{\text{goal}}
\newcommand{\ft}{\text{fingertip}}
\newcommand{\sign}{\texttt{sign}}
\DeclareMathOperator{\E}{\mathbb{E}}
\begin{document}

\title{Task-Driven Hybrid Model Reduction for Dexterous Manipulation}

\author{
	Wanxin Jin and
	Michael Posa
\thanks{
	Wanxin Jin is with  the School for Engineering of Matter, Transport, and Energy at Arizona State University, Tempe, AZ 85287, and  Michael Posa is with the  General Robotics, Automation, Sensing and Perception (GRASP) Laboratory, University of Pennsylvania, Philadelphia, PA 19104, USA. Email: {wanxin.jin@asu.edu, posa@seas.upenn.edu}
}

}

\markboth{\normalsize
	\textit{ T\MakeLowercase{his is a preprint}. T\MakeLowercase{he published version can be accessed at} IEEE T\MakeLowercase{ransactions on} R\MakeLowercase{obotics}.
	}
}%
{Shell \MakeLowercase{\textit{et al.}}: A Sample Article Using IEEEtran.cls for IEEE Journals}


\maketitle

\begin{abstract}
In contact-rich tasks, like dexterous manipulation, the hybrid nature of making and breaking contact creates challenges for model representation and control. For example, choosing and sequencing contact locations for in-hand manipulation, where there are thousands of potential hybrid modes, is not generally tractable. In this paper, we are inspired by the observation that far fewer modes are actually necessary to accomplish many tasks. Building on our prior work learning hybrid models, represented as linear complementarity systems,
we find a reduced-order hybrid model requiring only a limited number of task-relevant modes. This simplified representation, in combination with model predictive control, enables real-time control yet is sufficient for achieving high performance. We demonstrate the proposed method first on synthetic hybrid systems, reducing the mode count by multiple orders of magnitude while achieving task performance loss of less than 5\%. We  also apply the proposed method to a three-fingered robotic hand manipulating a previously unknown object. With no prior knowledge, we achieve state-of-the-art closed-loop  performance within a few minutes of online learning, by  collecting only a few thousand environment samples.
\end{abstract}

\begin{IEEEkeywords}
	hybrid control systems, model reduction, dexterous manipulation,  model-based reinforcement learning, model predictive control (MPC).
\end{IEEEkeywords}

\section{Introduction}
Many robotic tasks, like legged locomotion or dexterous manipulation, involve a robot frequently making and breaking contact with the physical environment or/and objects. The rich-contact behavior makes the robotic system multi-modular and hybrid, characterized by a  set of discrete contact modes and continuous physical dynamics within each mode.

The hybrid nature of contact-rich robotic systems poses great challenges in their representation and control. For data-driven modeling, recent results have demonstrated that standard deep learning networks struggle to train on and represent stiffness and multi-modality \cite{parmar2021fundamental, bianchini2022generalization}, motivating the learning frameworks explicitly designed to capture hybrid dynamics \cite{pfrommer2020contactnets,jin2022learning}. Similar challenges exist in planning and control of contact-rich  systems, where algorithms must jointly reason over a combinatoric number of discrete contact choices and continuous inputs of physical actuation. This process will quickly become intractable as the number of potential hybrid modes and planning depth grow.  The above two aspects become even more critical for real-time closed-loop control of contact-rich robotic systems, where a compromise between computational tractability and task performance has to be made \cite{chen2020optimal}.

Towards a goal of real-time planning and control of contact-rich manipulation with tens of thousands of modes, we hypothesize that identifying and utilizing a full hybrid dynamics model is almost certainly unnecessary. Instead, one might ask:

\emph{Can a far simpler model, with only a few task-relevant hybrid modes, enable the high performance and real-time  control for contact-rich manipulation?}\newline
Here, we propose to answer the question in the affirmative by building upon recent  progress in hybrid representation learning \cite{jin2022learning} and real-time contact-rich planning and control \cite{AydinogluReal,cleac2021fast}.

If one observes a multi-finger robot manipulating a cube for a reorientation task, the task-critical contact interactions might be dominated by a few modes: for example,  all fingertips stick to the cube, or one fingertip pushing or sliding while others stick to the cube. While other modes might occur, they do so briefly or in a functionally similar manner to another mode. This observation inspires us to study the problem of learning task-driven reduced-order hybrid models. On the technical side,  we have seen the recent progress in learning hybrid representations \cite{jin2022learning,pfrommer2020contactnets}. Particularly in our prior work \cite{jin2022learning}, we have developed an efficient method to learn a piecewise affine system, represented as a linear complementarity system (LCS) [detailed in Section \ref{section.formulation}], with tens of thousands of hybrid modes. We also note recent progress towards fast control and planning of contact-rich robotic systems. For example, in \cite{AydinogluReal}, the authors approximate nonlinear contact-rich dynamics using LCS and develop  LCS-based model predictive control, achieving real-time control performance for a reasonably-sized manipulation system.

Built on the above observation and the foundational prior work, this paper aims to answer the above question theoretically and algorithmically. Our goal is to find a reduced-order hybrid model, containing only a small number of task-relevant modes, which is sufficient for high-performance, real-time control of contact-rich manipulation tasks. We call the problem `task-driven hybrid model reduction'. 
The primary contributions of this work are:
\begin{enumerate}[wide, labelindent=1em, label=(\roman*)]
\item We study the problem of task-driven hybrid model reduction by formulating it as minimizing the task performance gap between  model predictive control (MPC) with the reduced-order hybrid model and MPC with the full hybrid dynamics. We show that the reduced-order model learning with on-policy MPC data provably upper bounds the task performance gap, leading to a simple iterative method to improve the reduced-order model and MPC controller. 
\item We make use of our prior work of learning LCS  \cite{jin2022learning}, and the recent development of real-time LCS-based control on contact-rich systems, such as \cite{AydinogluReal}, to develop our practical learning algorithm. The algorithm runs a real-time closed-loop LCS model predictive controller on the complete hybrid dynamical system (environment), enabling  improving the reduced-order model and its closed-loop control performance.
\item In the first example, we demonstrate the capabilities of the proposed method in reducing synthetic hybrid control systems. We show that the proposed method enables reducing the hybrid mode count by multiple orders of magnitude while achieving a task performance loss of less than 5\%. In the second application,  we use the proposed method to solve three-finger robotic hand manipulation for unknown object reorientation in simulation environment. With no prior knowledge, we achieve state-of-the-art closed-loop performance within a few minutes of online learning, by  collecting only a few thousand environment samples.
\end{enumerate}

The following article is organized as follows. The related work is reviewed in Section \ref{section.relatedwork}. Section \ref{section.formulation} gives  preliminaries and formulates the problem of task-driven hybrid mode reduction. Section \ref{section.theory} presents the theoretical analysis and Section \ref{section.algorithm} develops the algorithm. Section \ref{section.pwa_reduction} uses the proposed method to solve model reduction on synthetic hybrid systems. Section \ref{section.trifinger} applies the proposed method to solve  three-finger robotic hand manipulation. Conclusions are drawn in Section \ref{section.conclusion}.

\section{Related Work}\label{section.relatedwork}

\subsubsection{Learning Hybrid Dynamics Models} This work heavily leverages the recent results in learning multi-modal dynamics representations. Previous studies \cite{pfrommer2020contactnets,parmar2021fundamental} have shown that naive neural networks fail to capture the discontinuity and stiffness of physical systems. A prominent line of recent work focuses on learning smoothing approximations by relaxing the hybrid mode boundaries \cite{de2018end,geilinger2020add,heiden2021neuralsim, howell2022dojo,suh2022differentiable}, though at the cost of some approximation error. Instead of using smoothing approximation, this paper considers learning explicit hybrid structures.
We focus on a simple yet expressive representation for hybrid systems: continuous piecewise-affine (PWA) models. This is motivated by the fact that many physics simulation engines \cite{todorov2012mujoco,stewart2000implicit,coumans2013bullet,drake} locally use  linear complementarity models (a compact form of continuous PWA \cite{heemels2001equivalence})  to handle physical contact events at each simulation step. PWA models bring two benefits. First, they are a well-studied subject in the control community \cite{heemels2001equivalence,bemporad2000piecewise,marcucci2019mixed}, which captures the multi-modality of a hybrid system, by approximating dynamics using polyhedral partitions with each assigned a mode-dependent linear model. Second, they can be tractably incorporated into planning and control for real-time performance due to recent progress in  \cite{AydinogluReal}.  Although continuous PWA models cannot capture the discontinuities that arise from impulsive impact events, there is a large range of manipulation tasks in which such events are not prominent, thus we believe PWA models to be sufficient. Also, we noted that
a  PWA model, via stiffness, can well capture the discontinuity, as adopted by many physics simulation engines \cite{todorov2012mujoco,stewart2000implicit,coumans2013bullet,drake}.

Identifying PWA models is NP-hard in general \cite{lauer2015complexity}. Most existing methods  \cite{ferrari2003clustering,bemporad2022piecewise} for  PWA regression are clustering-based: they alternate  data classification  and model  regression   for each class. Those methods normally have a complexity that scales exponentially with the number of data points or hybrid modes.  In this paper, learning PWA models is based on our recent work \cite{jin2022learning}. Specifically, we write a PWA model compactly as a linear complementarity system (LCS), via  implicit parameterization \cite{heemels2001equivalence}, and propose an implicit violation-based loss that generalizes the physics-based method in ContactNets \cite{pfrommer2020contactnets}. The method does not need explicitly cluster  data  and can handle tens of thousands of (potentially stiff)  modes efficiently. 
Recent results have  proven a superior generalization of this class of methods than  explicit loss methods \cite{bianchini2022generalization}.

\smallskip

\subsubsection{Fast Planning and Control on Multi-Contact Systems} 
The success of the proposed method also relies on the recent progress in real-time multi-contact planning and control.
Planning and control on multi-contact systems are notoriously challenging, as the algorithms must decide when and where to make or break contacts, whose complexity scales exponentially to the number of potential contacts and planning horizon. Traditionally, \cite{sleiman2021unified,mastalli2020crocoddyl}  use the  predefined sequence of mode to achieve  real-time  control on legged locomotion \cite{winkler2018gait} and manipulation \cite{hogan2020reactive}. To enable general-purpose fast multi-contact control,  \cite{AydinogluReal} and \cite{cleac2021fast} consider the LCS linearization of nonlinear multi-contact robot dynamics. Specifically, in  \cite{cleac2021fast}, the authors smooth the stiff complementarity constraint and then apply the interior-based method to approximate the solution sequentially. In a  different way,  \cite{AydinogluReal} maintains the hybrid structures and proposes to decouple the combinatoric complexity from the planning depth and then use the alternating direction method of multipliers (ADMM) to solve the decoupled problem, which can be done in parallel for further acceleration. 
 
In this paper, we include a real-time LCS model predictive controller as part of our learning algorithm for on-policy data collection and closed-loop control. We use a direct method of optimal control to formulate and solve the LCS MPC. This was first proposed in \cite{posa2014direct}. 
In our implementation, we utilize the state-of-the-art optimal control solver \cite{andersson2019casadi} for fast MPC.

\smallskip

\subsubsection{Reinforcement Learning for Contact-Rich Manipulation}  Reinforcement learning (RL) has achieved impressive results in   contact-rich manipulation \cite{nagabandi2020deep,allshire2021transferring,andrychowicz2020learning}. 
Some representative work includes \cite{andrychowicz2020learning}, where  in-hand manipulation
policies are learned for object reorientation, and \cite{allshire2021transferring} for solving  TriFinger Manipulation. However, both methods use model-free RL,  requiring millions or even billions of environment samples and many hours or even days of training. To alleviate  sample inefficiency,  model-based RL has been used to robotic manipulation by first learning a dynamics model to aid policy search \cite{nagabandi2020deep}, though requiring a large amount of training data to fit an unstructured deep neural network. Furthermore,  control with deep neural network models can be challenging. Commonly used shooting-based methods \cite{de2005tutorial} have a complexity exponential to planning depth and system dimension \cite{wang2019benchmarking}. 

In comparison with the work above, the emphasis of this paper is on highly data-efficient hybrid model learning, paired with real-time closed-loop control. Specifically, the tasks that might require hours of data for unstructured learning methods will be trained and completed in minutes.

\smallskip

\subsubsection{Reduced-order Models for Multi-Contact Robotic Tasks}
The idea of using a reduced-order model for hybrid robotic tasks has widely used in robot locomotion \cite{kajita1991study, orin2013centroidal} for real-time generating behavior plans. However, these reduced-order models are manually designed and may miss some key dynamics aspects of the full-order dynamics \cite{pandala2022robust}. To address those challenges, recent results demonstrate the ability to optimize for a reduced-order model that retains the capabilities of the full-order robot dynamics  \cite{chen2020optimal}. In their paper, authors focus on reducing the state dimension needed for planning, while we focus here on the comparatively unexplored problem of hybrid mode reduction.  Recently, model-free RL has been used to learn the unmodeled aspects of a reduced-order model to improve locomotion performance \cite{pandala2022robust}. Our method differs from theirs in three aspects. 
First, their formulation does not \emph{explicitly} encourage the reduction of the performance gap of the reduced-order model, while our formulation is to directly minimize the performance gap. Second, our method is not rooted in model-free RL, which can be data inefficient for our setting, where true dynamics is originally unknown (thus prohibiting sim-to-real transfer). Third, rather than using smooth approximations, we directly identify a hybrid representation.

\section{Preliminaries and Problem Formulation}\label{section.formulation}
This section presents some preliminaries and  formulates the problem of   task-driven hybrid model reduction.

\subsection{Hybrid Models for Multi-contact Dynamics}
Consider the following generic  hybrid  system:
\begin{equation}\label{equ.fullmodel}
\begin{aligned}
\boldsymbol{x}_{t+1}&=\boldsymbol{f}_i(\boldsymbol{x}_t, \boldsymbol{u}_t)\quad \text{with}\quad (\boldsymbol{x}_t,\boldsymbol{u}_t)\in \mathcal{P}_i,\\
           \mathcal{P}_i&=\{(\boldsymbol{x},\boldsymbol{u})\,|\, \boldsymbol{\psi}_{i}(\boldsymbol{x}_t,\boldsymbol{u}_t)\leq \boldsymbol{0}\},\quad
          i\in \{1, 2,\dots, I\}.
\end{aligned}
\end{equation}
Here, $\boldsymbol{{x}}_t\in\mathbb{R}^n$ and $\boldsymbol{{u}}_t\in\mathbb{R}^m$ are  the system state and input at time step $t=0,1,2,\dots$. $i\in \{1, 2,\dots, I\}$ is the index of the system hybrid modes, and  $\mathcal{P}_i\subset\mathbb{R}^n\times\mathbb{R}^m$ denotes the domain of the $i$-th mode, defined as a sublevel set of $\boldsymbol{\psi}_{i}(\boldsymbol{x},\boldsymbol{u})$. $\boldsymbol{f}_i$ is the dynamics model (vector field) in the $i$-th mode. 
\smallskip

A subset  of   hybrid systems in (\ref{equ.fullmodel}) corresponds to   complementarity systems, which have been widely used to describe the multi-contact model of robot dynamics \cite{stewart2000rigid,todorov2011convex,stewart2000implicit}:
\begin{equation}\label{equ.contactdyn.1}
    \boldsymbol{M}(\boldsymbol{q}_t)(\boldsymbol{v}_{t{+}1}-\boldsymbol{v}_t)=\boldsymbol{C}(\boldsymbol{q}_t, \boldsymbol{v}_t)+\boldsymbol{B}\boldsymbol{u}_t +\sum_{i=1}^{I} \boldsymbol{J}_i({\boldsymbol{q}_t})\tran{\Lambda}_{i,t},
\end{equation}
with the  $i$-th contact impulse ${\Lambda}_{i,t}$ satisfying the complementarity constraint
\begin{equation}\label{equ.contactdyn.2}
    \boldsymbol{0}\leq {\Lambda}_{i,t}\perp{\Phi}_{i,t}(\boldsymbol{q}_t, \boldsymbol{v}_{t+1}, {\Lambda}_{i,t})\geq \boldsymbol{0},
    \quad i=1,2,\dots, I.
\end{equation}
Here, $(\boldsymbol{q}_t, \boldsymbol{v}_t)$ is the generalized coordinate and velocity of a robot system.  $\boldsymbol{u}_t$ is the  actuation  impulse with input projection matrix $\boldsymbol{B}$. $\boldsymbol{M}(\boldsymbol{q}_t)$ is the  inertia matrix, and  $\boldsymbol{C}(\boldsymbol{q}_t,\boldsymbol{v}_t)$ includes all non-contact  impulses resulting from gravity and gyroscopic forces. 
   ${\Lambda}_{i,t}$ is  $i$-th contact impulse between the robot and  objects/environments, and $\boldsymbol{J}_i(\boldsymbol{q}_t)$ is its Jacobian matrix. The complementarity constraint  (\ref{equ.contactdyn.2}) means either the contact impulse ${\Lambda}_{i,t}$ or the value of its distance-related function, ${\Phi}_{i,t}(\boldsymbol{q}_t, \boldsymbol{v}_{t+1}, {\Lambda}_{i,t})$, is zero, but both cannot be negative, i.e., contact interaction between robot and object/environment cannot pull or penetrate into each other. Coulomb friction can be similarly described (e.g. \cite{stewart2000implicit}).

Define $\boldsymbol{x}:=[\boldsymbol{q}, \boldsymbol{v}]\tran$,  $\boldsymbol{\Lambda}:=[{\Lambda}_{1},{\Lambda}_{1}, ..., {\Lambda}_{I}]\tran$, and $\boldsymbol{\Phi}:=[{\Phi}_1,{\Phi}_2, ..., {\Phi}_I]\tran$. One can abstractly write  the multi-contact dynamics (\ref{equ.contactdyn.1})-(\ref{equ.contactdyn.2}) into the  general form below, denoted as $\boldsymbol{f}()$,
\begin{equation}\label{equ.fulldyn}
\boldsymbol{f}():\quad\quad \begin{cases}
\boldsymbol{F}( \boldsymbol{x}_{t+1}, \boldsymbol{x}_t,\boldsymbol{u}_t,\boldsymbol{\Lambda}_t)=\boldsymbol{0},
\\
\boldsymbol{0}\leq \boldsymbol{\Lambda}_t\perp\boldsymbol{\Phi}_t(\boldsymbol{x}_{t+1},\boldsymbol{x}_t, \boldsymbol{u}_t,  \boldsymbol{\Lambda}_t)\geq \boldsymbol{0}.
\end{cases} \quad\,
\end{equation}
Connecting (\ref{equ.fulldyn}) to (\ref{equ.fullmodel}), here  the active or inactive constraints in  $\boldsymbol{\Phi}\geq0$ determine the domain of hybrid modes, and $\boldsymbol{F}$ and $\boldsymbol{\Phi}$ jointly and implicitly determine the dynamics model of each hybrid mode. A significant amount of recent work focuses on identifying/learning the above complementary-based hybrid dynamics, such as \cite{pfrommer2020contactnets,bianchini2022generalization,jin2022learning,de2018end,howell2022dojo}.

The above complementary-based hybrid dynamics $\boldsymbol{f}()$ contains all potential contact modes in the aggregated contact impulse vector $\boldsymbol{\Lambda}$ and the corresponding distance vector function $\boldsymbol{\Phi}$. Thus,  we call  $\boldsymbol{f}()$ the \emph{full-order hybrid dynamics}.

\subsection{Full-Order Model Predictive Control}
We consider the model predictive control (MPC)  with the full-order hybrid  dynamics $\boldsymbol{f}()$ in (\ref{equ.fulldyn}) for a given set of tasks:
\begin{equation}\label{equ.mpc}
	\begin{aligned}
		 \min_{\boldsymbol{{u}}_{0:T-1}} &\quad J_{{\boldsymbol{\beta}}}=\sum_{t=0}^{T-1}c_{\boldsymbol{{\boldsymbol{\beta}}}}(\boldsymbol{{x}}_t, \boldsymbol{{u}}_t)+h_{\boldsymbol{\beta}}(\boldsymbol{{x}}_{T})
		 \\
		\text{s.t.} &  \quad	\boldsymbol{{x}}_{t+1}=\boldsymbol{f}(\boldsymbol{{x}}_t, \boldsymbol{{u}}_t),  \,\,\text{given}\,\, \boldsymbol{{x}}_0\sim p_{\boldsymbol{\beta}}(\boldsymbol{x}_0),
			\end{aligned}
\end{equation}
where $T$ is the MPC horizon; $\boldsymbol{f}()$ is the hybrid dynamics model in (\ref{equ.fulldyn}); and  $J_{\boldsymbol{\beta}}$ is a cost function for given tasks. Here, $\boldsymbol{\beta}$ is a general hyperparameter,  indexing a set of robot tasks of interest, subject to a known task distribution  ${\boldsymbol{\beta}}\sim p({\boldsymbol{\beta}})$. For example, if the  tasks of interest are that a robotic hand moving  an object to different target poses, $\boldsymbol{\beta}$  can parameterize the set of target poses of the object, subject to a given distribution $p({\boldsymbol{\beta}})$ reflecting the frequency of appearance of  target pose $\boldsymbol{\beta}$; if the tasks of interest contain different types of robot tasks, such as inserting a peg, turning a crank, etc.,  $\boldsymbol{\beta}$ can be the task index,  and $p({\boldsymbol{\beta}})$ could be a uniform distribution.

For notation simplicity, we write the system input and state trajectories compactly as $\mathbf{u}:=\{\boldsymbol{u}_{0}, \boldsymbol{u}_{1},  \boldsymbol{u}_{2},  \dots, \boldsymbol{u}_{T-1} \} $  and  $\mathbf{x}:=\{\boldsymbol{x}_{0}, \boldsymbol{x}_{1}, \dots, \boldsymbol{x}_{T-1}, \boldsymbol{x}_{T} \} $, respectively. The system state trajectory  given  input trajectory $\mathbf{u}$ and  initial  $\boldsymbol{x}_0$ is written as
 \begin{equation}\label{equ.F}
	\mathbf{F}(\mathbf{u}, \boldsymbol{x}_0):=\Big\{\mathbf{x} \,\big\vert\, \, \boldsymbol{x}_{t+1}{=}\boldsymbol{f}(\boldsymbol{x}_t, \boldsymbol{u}_t), \,\text{given} \,\,\, \boldsymbol{x}_0\,\,\text{and}\,\,
	\mathbf{u} \Big\}.
\end{equation}
The solution to  (\ref{equ.mpc}) then can be compactly written as 
\begin{multline}\label{equ.mpc2}
\boldsymbol{f}\text{-MPC}: \quad \mathbf{u}^{\boldsymbol{f}}(\boldsymbol{x}_0,{\boldsymbol{\beta}}):=\arg\min_{\mathbf{u}} J_{\boldsymbol{\beta}}\big(\mathbf{u}, \mathbf{F}(\mathbf{u},\boldsymbol{x}_0)\big), \,\,\,\\ \boldsymbol{x}_0\sim p_{\boldsymbol{\beta}}(\boldsymbol{x}_0), \,\, {\boldsymbol{\beta}}\sim p({\boldsymbol{\beta}}).
\end{multline}

The   full-order dynamics MPC in (\ref{equ.mpc2}) is  applied to the multi-contact robot system $\boldsymbol{f}()$ in a closed-loop (receding) fashion. Specifically, at rollout time step $k=1,2,3,...$,  $\boldsymbol{x}_0$ in (\ref{equ.mpc2}) is set  to the robot's  actual state: $\boldsymbol{x}_0=\boldsymbol{x}_{k}^{\boldsymbol{f}}$. After solving
$
	\mathbf{{u}}^{\boldsymbol{f}}(\boldsymbol{x}_{k}^{\boldsymbol{f}},  {\boldsymbol{\beta}})=
	\{
			\boldsymbol{{u}}^{\boldsymbol{f}}_0, 
	\dots, 
	\boldsymbol{{u}}^{\boldsymbol{f}}_{T-1} 
	\}
$
from (\ref{equ.mpc2}), only the first input $\boldsymbol{{u}}^{\boldsymbol{f}}_0$ is applied to the robot for execution and drive the robot to the next state: $\boldsymbol{x}_{k+1}^{\boldsymbol{f}}$ via $\boldsymbol{x}_{k+1}^{\boldsymbol{f}}=\boldsymbol{f}(\boldsymbol{x}_{k}^{\boldsymbol{f}},\boldsymbol{{u}}^{\boldsymbol{f}}_0)$.
Then, this process repeats  at the robot new state $\boldsymbol{x}_{t+1}^{\boldsymbol{f}}$. 
The above MPC  leads to a closed-loop control policy: mapping from robot's current state $\boldsymbol{x}_k^{\boldsymbol{f}}$ to its  control input $\boldsymbol{{u}}^{\boldsymbol{f}}_{0}$.

As indicated by the full-order dynamics $\boldsymbol{f}()$ in (\ref{equ.fulldyn}),  solving the MPC in (\ref{equ.mpc2}) requires reasoning over the sequence of contact impulses $\{
\boldsymbol{\Lambda}_0, \boldsymbol{\Lambda}_1, \dots, \boldsymbol{\Lambda}_{T-1}
\}$ in addition to $\mathbf{u}$ and $\mathbf{x}$. Its combinatoric complexity is $2^{TI}$ ($I=\dim\boldsymbol{\Lambda}$). Despite recent progress, particularly on modestly sized problems \cite{AydinogluReal,de2018end,howell2022dojo,marcucci2019mixed}, a large number of potential contact interactions (e.g., large $I$) will make solving (\ref{equ.mpc2}) intractable.

\smallskip
In this paper, we hypothesize that identifying and utilizing a full-order dynamics
 $\boldsymbol{f}()$ for  MPC  in (\ref{equ.mpc2}) is almost certainly unnecessary, because far fewer modes are actually necessary to accomplish many tasks. Thus, we will find a reduced-order hybrid model proxy to replace $\boldsymbol{f}()$ in (\ref{equ.mpc2}), to enable real-time control and sufficiently achieve high task performance.

\subsection{Linear Complementarity Systems}\label{section.formulation.pwa}

To find a reduced-order hybrid representation  for the full-order hybrid dynamics $\boldsymbol{f}()$ in (\ref{equ.fulldyn}), we  consider  piecewise affine (PWA) models. 
This is motivated by the fact that many physics simulation engines \cite{todorov2012mujoco,stewart2000implicit,coumans2013bullet,drake} locally use  linear complementarity models (a compact formation of continuous PWA \cite{heemels2001equivalence})  to handle physical contacts at each simulation step. A PWA model can sufficiently describe
multi-modality  but is tractable enough for planning and control tasks due to their simple (affine) structures. 
As in our previous work \cite{jin2022learning},  we compactly represent PWA models as a linear complementarity system (LCS), defined as $\boldsymbol{g}()$,
\begin{equation}\label{equ.lcs.reduced}
\hspace{-30pt}
\boldsymbol{g}():\qquad
\begin{aligned}
     \boldsymbol{x}_{t+1}&=A\boldsymbol{x}_t+B\boldsymbol{u}_t+C\boldsymbol{\lambda}_t+\boldsymbol{d}\\
          \boldsymbol{0}& \leq\boldsymbol{\lambda}_t\perp D\boldsymbol{x}_t+E\boldsymbol{u}_t +F\boldsymbol{\lambda}_t+\boldsymbol{c} \geq \boldsymbol{0}.
\end{aligned}
\end{equation}
Here, the first line of (\ref{equ.lcs.reduced}) is the affine dynamics and the second line is the complementarity equation. $(A, B, C, \boldsymbol{d}, D, E, F, \boldsymbol{c})$ are system matrix parameters with compatible dimensions. 
$\boldsymbol{\lambda}_t\in\mathbb{R}^r$  is  the complementarity variable and solved from the complementarity equation given $(\boldsymbol{x}_t, \boldsymbol{u}_t)$. Depending on the  active or inactive inequalities in $D\boldsymbol{x}_t+E\boldsymbol{u}_t +F\boldsymbol{\lambda}_t+\boldsymbol{c} \geq \boldsymbol{0}$ (corresponding to different partitions of the state-input space), $\boldsymbol{\lambda}_t\in\mathbb{R}^r$ is a  piecewise function of $(\boldsymbol{x}_t, \boldsymbol{u}_t)$. By composing with affine dynamics, 
 $\boldsymbol{x}_{t+1}$  is eventually a piecewise function of $(\boldsymbol{x}_t, \boldsymbol{u}_t)$, and each linear piece  is a hybrid mode.
 Thus, the  maximum number of the hybrid modes the LCS in (\ref{equ.lcs.reduced}) can represent  is $2^{\dim \boldsymbol{\lambda}}$.
For any  given $(\boldsymbol{x}_t, \boldsymbol{u}_t)$, to guarantee the existence and uniqueness of  $\boldsymbol{\lambda}_t$ solved from the complementary equation, we impose the restriction that the symmetric part of $F$ be positive definite, $F\tran +F \succ 0$ \cite{tsatsomeros2002generating,cottle2009linear}. This property can be accomplished by parameterizing $F$ as
\begin{equation}
    F:=GG\tran+H-H\tran,
\end{equation}
with $G$ and $H$ matrices with the same dimension as $F$.

 As we will seek to learn a reduced-order LCS    $\boldsymbol{g}()$, we can explicitly restrict the number of potential modes in $\boldsymbol{g}()$  by setting the dimension of the complementary variable,  $\dim \boldsymbol{\lambda}$. Compared to the full-order dynamics $\boldsymbol{f}()$ in (\ref{equ.fulldyn}),  $\boldsymbol{g}()$ has 
 \begin{equation}
     \dim \boldsymbol{\lambda}< \dim\boldsymbol{\Lambda}.
 \end{equation}
Note that, we do not expect a tight connection between $\boldsymbol{\lambda}$ in $\boldsymbol{g}()$ and the physical contact impulse vector $\boldsymbol{\Lambda}$ in $\boldsymbol{f}()$. Instead, $\boldsymbol{\lambda}$  in $\boldsymbol{g}()$ here will represent general multi-modality, and while we will later observe that $\boldsymbol{\lambda}$  is empirically related to the contact forces, it is not exactly the same.

\subsection{Problem Formulation}

We aim to find a reduced-order LCS model $\boldsymbol{g}()$ in (\ref{equ.lcs.reduced})  for the  given set of tasks $J_{\boldsymbol{\beta}}$ in (\ref{equ.mpc}),  and establish the following  reduced-order  $\boldsymbol{g}$-MPC (using the  notation convention  in (\ref{equ.mpc2})):
\begin{multline}\label{equ.mpcsimple}
\boldsymbol{g}\text{-MPC}: \quad\mathbf{u}^{\boldsymbol{g}}(\boldsymbol{x}_0,{\boldsymbol{\beta}}):=\arg\min_{\mathbf{u}} J_{\boldsymbol{\beta}}\big(\mathbf{u}, \mathbf{G}(\mathbf{u},\boldsymbol{x}_0)\big), \,\,\,\\ \boldsymbol{x}_0\sim p_{\boldsymbol{\beta}}(\boldsymbol{x}_0), \,\, {\boldsymbol{\beta}}\sim p({\boldsymbol{\beta}}),
\end{multline}
such that when running  the reduced-order   $\boldsymbol{g}\text{-MPC}$  on the full-order robot  dynamics  $\boldsymbol{f}()$, one can achieve a task performance as similar to the task performance of running   $\boldsymbol{f}\text{-MPC}$ on $\boldsymbol{f}()$ as possible. Here, we also compactly write the state trajectory of $\boldsymbol{g}()$ given $\mathbf{u}$ and $\boldsymbol{x}_0$ as
 \begin{equation}\label{equ.G}
	\mathbf{G}(\mathbf{u}, \boldsymbol{x}_0):=\Big\{\mathbf{x} \,\big\vert\, \, \boldsymbol{x}_{t+1}{=}\boldsymbol{g}(\boldsymbol{x}_t, \boldsymbol{u}_t), \,\text{given} \,\,\, \boldsymbol{x}_0\,\,\text{and}\,\,
	\mathbf{u} \Big\}.
\end{equation}
Therefore, the goal of  task-driven reduced-order model learning is to find the reduced-order LCS $\boldsymbol{g}()$ which minimizes the following \emph{task  performance gap}:
\begin{multline}\label{equ.loss1}
\mathcal{L}(\boldsymbol{g}):=  \E_{{\boldsymbol{\beta}}\sim p({\boldsymbol{\beta}})} \E_{\boldsymbol{x}\sim p_{\boldsymbol{\beta}}(\boldsymbol{x}_0)}  \bigg[J_{{\boldsymbol{\beta}}}\big(\mathbf{u}^{\boldsymbol{g}}, \mathbf{F}(\mathbf{u}^{\boldsymbol{g}},\boldsymbol{x}_0)\big) \\{-}J_{{\boldsymbol{\beta}}}\big(\mathbf{u}^{\boldsymbol{f}}, \mathbf{F}(\mathbf{u}^{\boldsymbol{f}},\boldsymbol{x}_0)\big)\bigg],
\end{multline}
where the first cost $J_{{\boldsymbol{\beta}}}\big(\mathbf{u}^{\boldsymbol{g}}, \mathbf{F}(\mathbf{u}^{\boldsymbol{g}},\boldsymbol{x}_0)\big)$ is the task performance of running reduced-order $\boldsymbol{g}\text{-MPC}$  on the robot system $\boldsymbol{f}()$, and  the second  cost $J_{{\boldsymbol{\beta}}}\big(\mathbf{u}^{\boldsymbol{g}}, \mathbf{F}(\mathbf{u}^{\boldsymbol{g}},\boldsymbol{x}_0)\big)$ is the task performance of running full-order  $\boldsymbol{f}\text{-MPC}$ on the robot system $\boldsymbol{f}()$. Here,  $\mathbf{u}^{\boldsymbol{g}}$ is the solution to the reduced-order   $\boldsymbol{g}\text{-MPC}$ in (\ref{equ.mpcsimple}) and $\mathbf{u}^{\boldsymbol{f}}$ is the solution to the full-order $\boldsymbol{f}\text{-MPC}$ in (\ref{equ.mpc2}). 

We make the following remarks on the above problem statement. First, the reduced-order LCS  $\boldsymbol{g}()$  shares the same dimensions of states and inputs as  the full-order  dynamics $\boldsymbol{f}()$, but  has far fewer hybrid modes by setting $\dim \boldsymbol{\lambda}\leq \dim \boldsymbol{\Lambda}$.
Compared to  $\boldsymbol{f}$-MPC, the reduced-order  $\boldsymbol{g}$-MPC is more computationally tractable for real-time implementation. Second, the learning criterion  (\ref{equ.loss1}) is to minimize the performance gap between  the reduced-order  $\boldsymbol{g}$-MPC and   full-order  $\boldsymbol{f}$-MPC, both MPC controllers running on the full-order dynamics $\boldsymbol{f}()$, which is the original hybrid   system. Thus, a minimal  performance gap means that one can confidently use the reduced-order LCS $\boldsymbol{g}()$ to achieve the given tasks $J_{\boldsymbol{\beta}}$,  ${\boldsymbol{\beta}}\sim p({\boldsymbol{\beta}})$.

Directly minimizing the task performance gap $\mathcal{L}(\boldsymbol{g})$  requires access to and optimization with  the full-order dynamics model $\boldsymbol{f}()$, because of  the coupling between  $J_{\boldsymbol{\beta}}()$ and  $\boldsymbol{f}()$ in \eqref{equ.loss1}. However, this is unlikely to be tractable as the full-order model is both unknown and too complex to optimize with. In the following section,  we develop a method to approximately solve (\ref{equ.loss1}) without requiring knowledge of the model  $\boldsymbol{f}()$.

\section{Theoretical Results}\label{section.theory}

 In this section, we will  show that  instead of directly solving (\ref{equ.loss1}), one can minimize its upper bound. This will  lead  to developing a method that is much easier to implement and only requires  samples (zero-order information)  of   $\boldsymbol{f}()$.
To start, we pose a  mild assumption about the Lipschitz continuity of  task cost function $J_{\boldsymbol{\beta}}(\mathbf{u},\mathbf{x})$ for any  ${\boldsymbol{\beta}}\sim p({\boldsymbol{\beta}})$.

\begin{assumption}\label{assumption.1}
For any task   sample ${\boldsymbol{\beta}}\sim p({\boldsymbol{\beta}})$, the task cost function $J_{\boldsymbol{\beta}}(\mathbf{u},\mathbf{x})$  is $M$-Lipschitz continuous, i.e.,  for any $\mathbf{z}_1:=(\mathbf{u}_1,\mathbf{x}_1)$, $\mathbf{z}_2:=(\mathbf{u}_2,\mathbf{x}_2)$, 
\begin{equation}
|J_r(\mathbf{z}_1) -  J_r(\mathbf{z}_2)|\leq M \norm{\mathbf{z}_1-\mathbf{z}_2}
\end{equation}
with $\norm{}$ denoting the $l_2$ norm.
\end{assumption}
The above assumption is  mild, as the  cost function is usually  defined manually and can easily satisfy this condition.
With Assumption \ref{assumption.1}, we have the following lemma stating the upper bound of the task performance gap $\mathcal{L}(\boldsymbol{g})$ in (\ref{equ.loss1}):

\begin{lemma}\label{lemma.1}
Suppose Assumption \ref{assumption.1} holds. For any reduced-order model $\boldsymbol{g}()$, the following inequality holds:
\begin{multline}\label{equ.lossupperbound}
    \mathcal{L}(\boldsymbol{g})
    \leq M \E_{{\boldsymbol{\beta}}\sim p({\boldsymbol{\beta}})} \E_{\boldsymbol{x}\sim p_{\boldsymbol{\beta}}(\boldsymbol{x}_0)} 
     \Big(\big\lVert\mathbf{G}(\mathbf{u}^{\boldsymbol{g}},\boldsymbol{x}_0)-\mathbf{F}(\mathbf{u}^{\boldsymbol{g}},\boldsymbol{x}_0)\big\rVert
    \\
   +
    \big\lVert
     \mathbf{G}(\mathbf{u}^{\boldsymbol{f}},\boldsymbol{x}_0)-\mathbf{F}(\mathbf{u}^{\boldsymbol{f}},\boldsymbol{x}_0)
    \big\rVert\Big) \quad
\end{multline}
\end{lemma}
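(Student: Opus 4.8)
The plan is to bound the integrand of $\mathcal{L}(\boldsymbol{g})$ pointwise, for each fixed ${\boldsymbol{\beta}}$ and $\boldsymbol{x}_0$, and then take expectations, since $\E$ is monotone. Writing $\Delta$ for the integrand $J_{\boldsymbol{\beta}}(\mathbf{u}^{\boldsymbol{g}}, \mathbf{F}(\mathbf{u}^{\boldsymbol{g}},\boldsymbol{x}_0)) - J_{\boldsymbol{\beta}}(\mathbf{u}^{\boldsymbol{f}}, \mathbf{F}(\mathbf{u}^{\boldsymbol{f}},\boldsymbol{x}_0))$, the central idea is to insert two intermediate quantities that evaluate the task cost along the \emph{reduced-order} trajectory $\mathbf{G}$: namely $J_{\boldsymbol{\beta}}(\mathbf{u}^{\boldsymbol{g}}, \mathbf{G}(\mathbf{u}^{\boldsymbol{g}},\boldsymbol{x}_0))$ and $J_{\boldsymbol{\beta}}(\mathbf{u}^{\boldsymbol{f}}, \mathbf{G}(\mathbf{u}^{\boldsymbol{f}},\boldsymbol{x}_0))$. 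Adding and subtracting these splits $\Delta$ into three pieces $(A)+(B)+(C)$: piece $(A)$ compares $\mathbf{F}$ and $\mathbf{G}$ under the common control $\mathbf{u}^{\boldsymbol{g}}$, piece $(C)$ compares $\mathbf{G}$ and $\mathbf{F}$ under the common control $\mathbf{u}^{\boldsymbol{f}}$, and piece $(B)$ compares the two controls on the \emph{same} reduced dynamics $\mathbf{G}$.

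Next I would bound $(A)$ and $(C)$ using Assumption~\ref{assumption.1}. In each of these terms the input argument of $J_{\boldsymbol{\beta}}$ is held fixed (it is $\mathbf{u}^{\boldsymbol{g}}$ in $(A)$ and $\mathbf{u}^{\boldsymbol{f}}$ in $(C)$), so the two evaluation points $\mathbf{z}_1,\mathbf{z}_2$ differ only in their state component; the $M$-Lipschitz bound, together with symmetry of the norm, then gives $(A)\le M\lVert\mathbf{G}(\mathbf{u}^{\boldsymbol{g}},\boldsymbol{x}_0)-\mathbf{F}(\mathbf{u}^{\boldsymbol{g}},\boldsymbol{x}_0)\rVert$ and $(C)\le M\lVert\mathbf{G}(\mathbf{u}^{\boldsymbol{f}},\boldsymbol{x}_0)-\mathbf{F}(\mathbf{u}^{\boldsymbol{f}},\boldsymbol{x}_0)\rVert$, which are precisely the two terms on the right-hand side of the claim.

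The step I expect to carry the whole argument is the middle term $(B)=J_{\boldsymbol{\beta}}(\mathbf{u}^{\boldsymbol{g}}, \mathbf{G}(\mathbf{u}^{\boldsymbol{g}},\boldsymbol{x}_0)) - J_{\boldsymbol{\beta}}(\mathbf{u}^{\boldsymbol{f}}, \mathbf{G}(\mathbf{u}^{\boldsymbol{f}},\boldsymbol{x}_0))$, which I would dispatch by \emph{optimality} rather than by Lipschitzness. By the definition of the reduced-order controller in~\eqref{equ.mpcsimple}, $\mathbf{u}^{\boldsymbol{g}}$ minimizes $J_{\boldsymbol{\beta}}(\mathbf{u},\mathbf{G}(\mathbf{u},\boldsymbol{x}_0))$ over all admissible $\mathbf{u}$; evaluating this same objective at the competitor $\mathbf{u}^{\boldsymbol{f}}$ shows $J_{\boldsymbol{\beta}}(\mathbf{u}^{\boldsymbol{g}}, \mathbf{G}(\mathbf{u}^{\boldsymbol{g}},\boldsymbol{x}_0)) \le J_{\boldsymbol{\beta}}(\mathbf{u}^{\boldsymbol{f}}, \mathbf{G}(\mathbf{u}^{\boldsymbol{f}},\boldsymbol{x}_0))$, i.e. $(B)\le 0$, so it can simply be dropped. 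The subtlety to get right is that this optimality holds precisely because both the control $\mathbf{u}^{\boldsymbol{g}}$ and the evaluation dynamics are the reduced-order ones, whereas $(A)$ and $(C)$ deliberately pair each control with the model mismatch between $\mathbf{G}$ and $\mathbf{F}$; carefully tracking which dynamics each term is evaluated on is where an error would most easily creep in.

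Finally, combining $(A)+(B)+(C)$ yields, for every ${\boldsymbol{\beta}}$ and $\boldsymbol{x}_0$, the bound $\Delta \le M\lVert\mathbf{G}(\mathbf{u}^{\boldsymbol{g}},\boldsymbol{x}_0)-\mathbf{F}(\mathbf{u}^{\boldsymbol{g}},\boldsymbol{x}_0)\rVert + M\lVert\mathbf{G}(\mathbf{u}^{\boldsymbol{f}},\boldsymbol{x}_0)-\mathbf{F}(\mathbf{u}^{\boldsymbol{f}},\boldsymbol{x}_0)\rVert$. Taking $\E_{{\boldsymbol{\beta}}\sim p({\boldsymbol{\beta}})}\E_{\boldsymbol{x}\sim p_{\boldsymbol{\beta}}(\boldsymbol{x}_0)}$ of both sides and invoking monotonicity of expectation then produces exactly~\eqref{equ.lossupperbound}, completing the argument.
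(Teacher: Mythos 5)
Your decomposition into the three pieces $(A)+(B)+(C)$, bounding $(A)$ and $(C)$ via the $M$-Lipschitz assumption and dropping $(B)\le 0$ by optimality of $\mathbf{u}^{\boldsymbol{g}}$ for the $\boldsymbol{g}$-MPC objective, is exactly the paper's argument (its Terms I, II, III in Appendix A). The proof is correct and takes essentially the same approach as the paper.
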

\begin{proof}
    See Appendix. \ref{appendix.0}.
\end{proof}

Lemma \ref{lemma.1} gives an upper bound for the task  performance gap $\mathcal{L}(\boldsymbol{g})$. Notably, this upper bound is  the \emph{prediction error} between the reduced-order model $\boldsymbol{g}()$ and full-order dynamics $\boldsymbol{f}()$ at their MPC solutions. Specifically, the first term  on the right side of (\ref{equ.lossupperbound})
is the model prediction error  on the  dataset 
\begin{equation}\label{equ.gdata}
    \mathcal{D}^{\boldsymbol{g}}=\Big\{\mathbf{u}^{\boldsymbol{g}}(\boldsymbol{x}_0,{\boldsymbol{\beta}})\,|\,   \boldsymbol{x}_0\sim p(\boldsymbol{x}_0), \,\, {\boldsymbol{\beta}}\sim p({\boldsymbol{\beta}}) \Big\}
\end{equation} generated by the reduced-order $\boldsymbol{g}\text{-MPC}$. The second term on the right side of (\ref{equ.lossupperbound})
is the model prediction error on
\begin{equation}\label{equ.fdata}
    \mathcal{D}^{\boldsymbol{f}}=\Big\{\mathbf{u}^{\boldsymbol{f}}(\boldsymbol{x}_0,{\boldsymbol{\beta}})\,|\, \boldsymbol{x}_0\sim p(\boldsymbol{x}_0), \,\, {\boldsymbol{\beta}}\sim p({\boldsymbol{\beta}}) \Big\}
\end{equation}
generated by the full model $\boldsymbol{f}\text{-MPC}$.
(\ref{equ.lossupperbound}) says that as long as  the  reduced-order model $\boldsymbol{g}()$ captures the full-order dynamics $\boldsymbol{f}()$  \textbf{at the MPC  data} $\mathcal{D}^{\boldsymbol{g}}$ and $\mathcal{D}^{\boldsymbol{f}}$, not necessarily at other parts of data regime (task-irrelevant data),  $\boldsymbol{g}$-MPC can  replace    $\boldsymbol{f}$-MPC  for the same task performance. Thus, Lemma \ref{lemma.1}  justifies the learning of a task-driven reduced-order model.

Although the  $\boldsymbol{f}\text{-MPC}$ policy data $\mathcal{D}^{\boldsymbol{f}}$ in (\ref{equ.fdata}) is not directly   verifiable when   $\boldsymbol{f}()$ is unknown, the next lemma will show  the  $\boldsymbol{g}\text{-MPC}$ data  $\mathcal{D}^{\boldsymbol{g}}$  is related to $\mathcal{D}^{\boldsymbol{f}}$, which thus can be verified indirectly.

\begin{lemma}\label{lemma.2}
Suppose   $\nabla_{\mathbf{x}} J_{\boldsymbol{\beta}}(\mathbf{u}, \mathbf{x})$ is $L_1$-Lipschitz continuous, $\nabla_{\mathbf{u}} J_{\boldsymbol{\beta}}(\mathbf{u}, \mathbf{x})$ is $L_2$-Lipschitz continuous, and  $\norm{\nabla_{\mathbf{x}} J_{\boldsymbol{\beta}}(\mathbf{u}, \mathbf{x})}\leq M_1$ for any $(\mathbf{u},\mathbf{x}, {\boldsymbol{\beta}})$; 
$\norm{\nabla_{\mathbf{u}} \mathbf{G}(\mathbf{u}, \boldsymbol{x}_0)}\leq M_g$ for any $(\mathbf{u}, \boldsymbol{x}_0)$. 
Then, the solutions $\mathcal{D}^{\boldsymbol{g}}$ in (\ref{equ.gdata}) generated by  $\boldsymbol{g}\text{-MPC}$  is also an $\epsilon$-accuracy stationary solution for $\boldsymbol{f}\text{-MPC}$, i.e., 
\begin{equation}\label{equ.lemma2.1}
    \norm{\nabla_{\mathbf{u}} J_{\boldsymbol{\beta}}\big(\mathbf{u}^{\boldsymbol{g}}, \mathbf{F}(\mathbf{u}^{\boldsymbol{g}},\boldsymbol{x}_0)\big)}\leq \epsilon
\end{equation}
for any $\boldsymbol{x}_0\sim p(\boldsymbol{x}_0), \,\, {\boldsymbol{\beta}}\sim p({\boldsymbol{\beta}})$
with 
\begin{multline}\label{equ.lemma2.2}
    \epsilon=M_1\norm{\nabla_{\mathbf{u}} \mathbf{F}(\mathbf{u}^{\boldsymbol{g}},\boldsymbol{x}_0)-\nabla_{\mathbf{u}}\mathbf{G}(\mathbf{u}^{\boldsymbol{g}},\boldsymbol{x}_0)}\\+(L_2+M_gL_1)\norm{ \mathbf{F}(\mathbf{u}^{\boldsymbol{g}},\boldsymbol{x}_0)-\mathbf{G}(\mathbf{u}^{\boldsymbol{g}},\boldsymbol{x}_0)}
\end{multline}

\end{lemma}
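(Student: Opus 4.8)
The plan is to exploit that $\mathbf{u}^{\boldsymbol{g}}$ is, by construction, an \emph{exact} stationary point of the reduced-order $\boldsymbol{g}$-MPC objective, and then to measure how far it is from being stationary for the $\boldsymbol{f}$-MPC objective purely in terms of how much the trajectory maps $\mathbf{G}$ and $\mathbf{F}$ (and their $\mathbf{u}$-Jacobians) disagree at $\mathbf{u}^{\boldsymbol{g}}$. Concretely, for a fixed $\boldsymbol{x}_0$ and $\boldsymbol{\beta}$ I would introduce the reduced objectives $\Psi_{\boldsymbol{g}}(\mathbf{u}) := J_{\boldsymbol{\beta}}(\mathbf{u}, \mathbf{G}(\mathbf{u},\boldsymbol{x}_0))$ and $\Psi_{\boldsymbol{f}}(\mathbf{u}) := J_{\boldsymbol{\beta}}(\mathbf{u}, \mathbf{F}(\mathbf{u},\boldsymbol{x}_0))$, interpreting the gradient in (\ref{equ.lemma2.1}) as the total (reduced) gradient $\nabla \Psi_{\boldsymbol{f}}(\mathbf{u}^{\boldsymbol{g}})$. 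By the chain rule, $\nabla \Psi_{\boldsymbol{g}}(\mathbf{u}) = \nabla_{\mathbf{u}} J_{\boldsymbol{\beta}}(\mathbf{u},\mathbf{G}) + (\nabla_{\mathbf{u}}\mathbf{G})\tran \nabla_{\mathbf{x}} J_{\boldsymbol{\beta}}(\mathbf{u},\mathbf{G})$ and analogously for $\Psi_{\boldsymbol{f}}$; since $\mathbf{u}^{\boldsymbol{g}}$ solves the $\boldsymbol{g}$-MPC (\ref{equ.mpcsimple}), first-order optimality gives $\nabla \Psi_{\boldsymbol{g}}(\mathbf{u}^{\boldsymbol{g}}) = \boldsymbol{0}$.

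The core step is to write $\nabla \Psi_{\boldsymbol{f}}(\mathbf{u}^{\boldsymbol{g}}) = \nabla \Psi_{\boldsymbol{f}}(\mathbf{u}^{\boldsymbol{g}}) - \nabla \Psi_{\boldsymbol{g}}(\mathbf{u}^{\boldsymbol{g}})$ and expand the difference. Writing $\mathbf{G} := \mathbf{G}(\mathbf{u}^{\boldsymbol{g}},\boldsymbol{x}_0)$ and $\mathbf{F} := \mathbf{F}(\mathbf{u}^{\boldsymbol{g}},\boldsymbol{x}_0)$, this difference splits into a partial-$\mathbf{u}$ term $\nabla_{\mathbf{u}} J_{\boldsymbol{\beta}}(\mathbf{u}^{\boldsymbol{g}},\mathbf{F}) - \nabla_{\mathbf{u}} J_{\boldsymbol{\beta}}(\mathbf{u}^{\boldsymbol{g}},\mathbf{G})$ and a Jacobian-weighted term $(\nabla_{\mathbf{u}}\mathbf{F})\tran \nabla_{\mathbf{x}} J_{\boldsymbol{\beta}}(\mathbf{u}^{\boldsymbol{g}},\mathbf{F}) - (\nabla_{\mathbf{u}}\mathbf{G})\tran \nabla_{\mathbf{x}} J_{\boldsymbol{\beta}}(\mathbf{u}^{\boldsymbol{g}},\mathbf{G})$. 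To handle the latter I would insert and subtract the cross term $(\nabla_{\mathbf{u}}\mathbf{G})\tran \nabla_{\mathbf{x}} J_{\boldsymbol{\beta}}(\mathbf{u}^{\boldsymbol{g}},\mathbf{F})$, rewriting it as $\big[(\nabla_{\mathbf{u}}\mathbf{F})\tran - (\nabla_{\mathbf{u}}\mathbf{G})\tran\big]\nabla_{\mathbf{x}} J_{\boldsymbol{\beta}}(\mathbf{u}^{\boldsymbol{g}},\mathbf{F}) + (\nabla_{\mathbf{u}}\mathbf{G})\tran\big[\nabla_{\mathbf{x}} J_{\boldsymbol{\beta}}(\mathbf{u}^{\boldsymbol{g}},\mathbf{F}) - \nabla_{\mathbf{x}} J_{\boldsymbol{\beta}}(\mathbf{u}^{\boldsymbol{g}},\mathbf{G})\big]$, so that each piece isolates exactly one source of discrepancy.

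Applying the triangle inequality and the four hypotheses then closes the bound term by term: the partial-$\mathbf{u}$ term is at most $L_2\norm{\mathbf{F}-\mathbf{G}}$ by $L_2$-Lipschitzness of $\nabla_{\mathbf{u}} J_{\boldsymbol{\beta}}$ in its state argument; the first piece of the Jacobian-weighted term is at most $M_1\norm{\nabla_{\mathbf{u}}\mathbf{F}-\nabla_{\mathbf{u}}\mathbf{G}}$ using $\norm{\nabla_{\mathbf{x}} J_{\boldsymbol{\beta}}} \leq M_1$; and the second piece is at most $M_g L_1\norm{\mathbf{F}-\mathbf{G}}$ using $\norm{\nabla_{\mathbf{u}}\mathbf{G}}\leq M_g$ together with $L_1$-Lipschitzness of $\nabla_{\mathbf{x}} J_{\boldsymbol{\beta}}$. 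Summing yields exactly $\epsilon = M_1\norm{\nabla_{\mathbf{u}}\mathbf{F}-\nabla_{\mathbf{u}}\mathbf{G}} + (L_2 + M_g L_1)\norm{\mathbf{F}-\mathbf{G}}$ as in (\ref{equ.lemma2.2}), and since this holds for every $\boldsymbol{x}_0,\boldsymbol{\beta}$, the claim follows.

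I expect the main obstacle to be justifying that the composite maps are differentiable enough for the chain rule and the stationarity identity $\nabla \Psi_{\boldsymbol{g}}(\mathbf{u}^{\boldsymbol{g}})=\boldsymbol{0}$ to hold: the LCS trajectory map $\mathbf{u}\mapsto\mathbf{G}(\mathbf{u},\boldsymbol{x}_0)$ is only piecewise smooth because of the complementarity constraint, so $\nabla_{\mathbf{u}}\mathbf{G}$ (implicitly assumed bounded by $M_g$) need not exist at mode boundaries. I would address this by working at points of differentiability---equivalently, assuming $\mathbf{u}^{\boldsymbol{g}}$ lies in the interior of a single hybrid mode, where the implicit function theorem applied to the linear complementarity problem yields a well-defined Jacobian---and by treating $\mathbf{u}^{\boldsymbol{g}}$ as an unconstrained minimizer so that first-order optimality is an equality rather than a variational inequality. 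The remaining manipulations are then routine.
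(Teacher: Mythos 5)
Your proposal is correct and follows essentially the same route as the paper's proof in Appendix B: both exploit the stationarity $\nabla_{\mathbf{u}} J_{\boldsymbol{\beta}}\big(\mathbf{u}^{\boldsymbol{g}}, \mathbf{G}(\mathbf{u}^{\boldsymbol{g}},\boldsymbol{x}_0)\big)=\boldsymbol{0}$, rewrite the target gradient as the difference of the two reduced gradients, insert the identical cross term $(\nabla_{\mathbf{u}}\mathbf{G})\tran\nabla_{\mathbf{x}}J_{\boldsymbol{\beta}}(\mathbf{u}^{\boldsymbol{g}},\mathbf{F})$, and bound the three resulting pieces with the same four hypotheses to obtain exactly (\ref{equ.lemma2.2}). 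Your closing caveat about the piecewise smoothness of the LCS trajectory map and the unconstrained first-order condition is a point the paper leaves implicit, but it does not alter the argument.
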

\begin{proof}
See Appendix \ref{appendix.1}.
\end{proof}

Lemma \ref{lemma.2} suggests  the $\boldsymbol{g}\text{-MPC}$ data $\mathcal{D}^{\boldsymbol{g}}$ in (\ref{equ.gdata}) can become $\boldsymbol{f}\text{-MPC}$ data $\mathcal{D}^{\boldsymbol{f}}$ in (\ref{equ.fdata}), if the reduced-order model  $\boldsymbol{g}()$ fits well to the true    ${\boldsymbol{f}}()$  in both zeroth and first orders, i.e., 
\begin{align}
         \norm{ \mathbf{F}(\mathbf{u}^{\boldsymbol{g}},\boldsymbol{x}_0)-\mathbf{G}(\mathbf{u}^{\boldsymbol{g}},\boldsymbol{x}_0)} &\rightarrow 0 \label{equ.0th}\\
         \norm{\nabla_{\mathbf{u}} \mathbf{F}(\mathbf{u}^{\boldsymbol{g}},\boldsymbol{x}_0)-\nabla_{\mathbf{u}}\mathbf{G}(\mathbf{u}^{\boldsymbol{g}},\boldsymbol{x}_0)} &\rightarrow 0 \label{equ.1st}
\end{align}
Thus, one can indirectly verify  (\ref{equ.fdata}) by additionally looking at the first-order model precision error $\norm{\nabla_{\mathbf{u}} \mathbf{F}(\mathbf{u}^{\boldsymbol{g}},\boldsymbol{x}_0)-\nabla_{\mathbf{u}}\mathbf{G}(\mathbf{u}^{\boldsymbol{g}},\boldsymbol{x}_0)}$, where $\nabla_{\mathbf{u}} \mathbf{F}(\mathbf{u}^{\boldsymbol{g}},\boldsymbol{x}_0)$ can be estimated numerically via  mesh grid of $\mathcal{D}^{\boldsymbol{g}}$.

Jointly looking at Lemma \ref{lemma.1} and Lemma \ref{lemma.2}, one can conclude that if we can find a reduced-order LCS  $\boldsymbol{g}()$ such that it fits $\boldsymbol{f}()$ well  in both  zeroth and first-order prediction  on the $\boldsymbol{g}\text{-MPC}$ data $\mathcal{D}^{\boldsymbol{g}}$, as  in (\ref{equ.0th}) and (\ref{equ.1st}), respectively, such $\boldsymbol{g}()$ is minimizing the upper bound (\ref{equ.lossupperbound}), thus eventually minimizing the   task performance gap $\mathcal{L}(\boldsymbol{g})$ itself. Those theoretical insights will guide us to develop  algorithms in the next section.

\begin{figure*}[h]
	\centering	\includegraphics[width=0.95\textwidth]{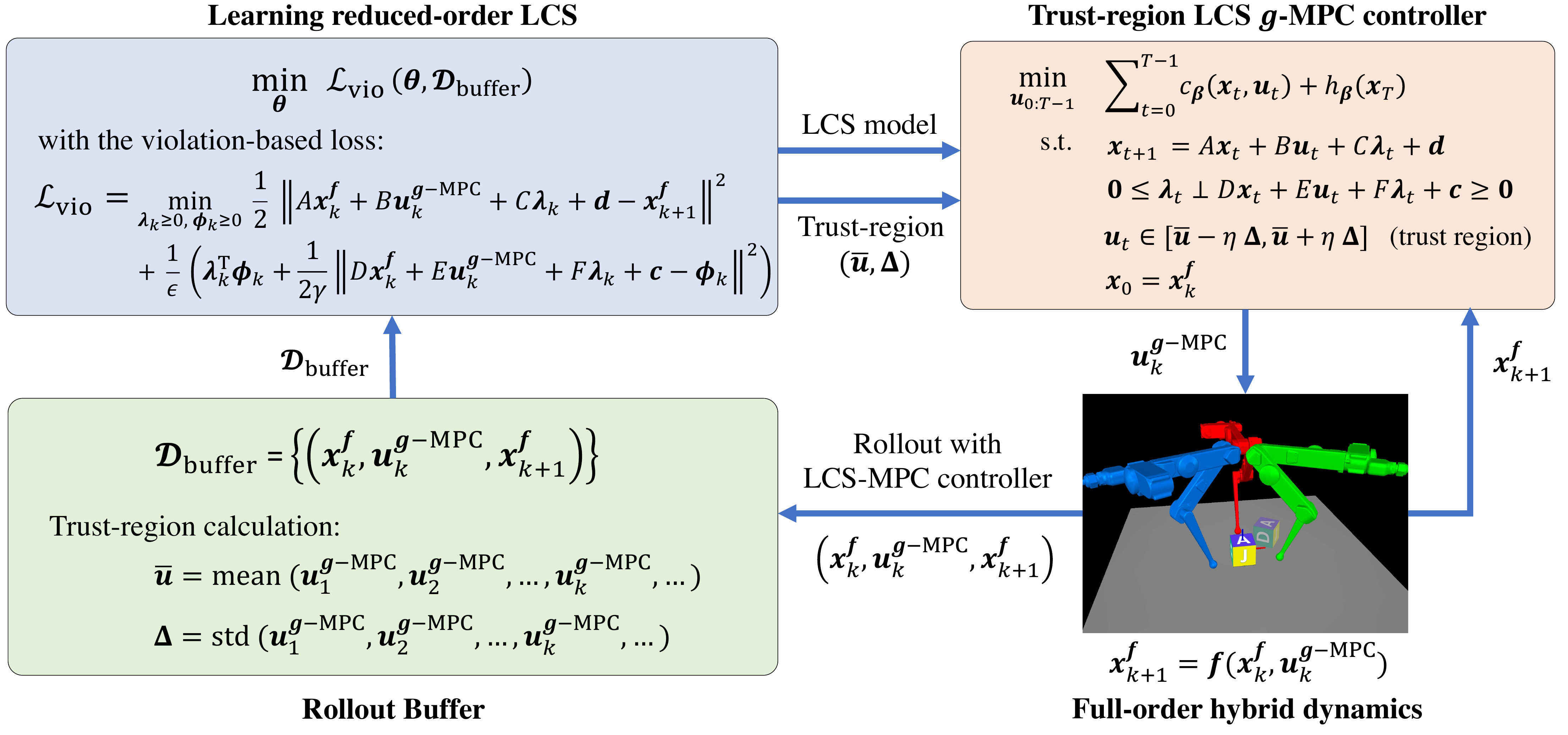}
	\caption{\small Components  of  task-driven hybrid model reduction algorithm. There are three main components: learning reduced-order LCS, Trust-region LCS model predictive controller, and Rollout Buffer, each of which is detailed in the text.}
	\label{fig.algorithm}
  \vspace{-10pt}
\end{figure*}

\medskip
\section{Practical Algorithm} \label{section.algorithm}

The technical analysis in the previous section says that to minimize the upper bound (\ref{equ.lossupperbound}) of the task performance gap, one might  fit a  reduced-order model $\boldsymbol{g}()$ to  full-order dynamics $\boldsymbol{f}()$ well in both  zeroth and first-order prediction using the  $\boldsymbol{g}\text{-MPC}$ policy data  $\mathcal{D}^{\boldsymbol{g}}$.
We take this as inspiration, though we note that, for efficiency, we will minimize zeroth-order error and will not check first-order criteria. Now, we develop the task-driven hybrid-model reduction  algorithm. Throughout the following paper,  $\boldsymbol{g}$-MPC will be implemented in a closed-loop (receding) fashion, i.e., the only first action of the MPC solution is applied to the robot system $\boldsymbol{f}()$.

The  building blocks of the task-driven hybrid model reduction algorithm are  in  Fig. \ref{fig.algorithm}. The learning process is iterative, and each iteration includes the following three components.
\begin{itemize}[leftmargin=10pt]
\setlength\itemsep{3pt}

    \item \textbf{Trust-region LCS  model predictive controller:} The latest LCS  $\boldsymbol{g}()$ is used in the MPC controller. Compared to  (\ref{equ.mpcsimple}), we additionally  introduce a trust region on the control inputs in the reduced-order LCS MPC. This trust region may also be adapted according to the latest Rollout Buffer with details given in Section \ref{section.tr_mpc}.

    \item  \textbf{Rollout Buffer:}  denoted as  $\boldsymbol{\mathcal{D}}_{\text{buffer}}{=}\big\{(\boldsymbol{x}_k^{\boldsymbol{f}}, \boldsymbol{u}_k^{\boldsymbol{g}\text{-MPC}}, \boldsymbol{x}_{k+1}^{\boldsymbol{f}})\big\}$,  stores the current and history rollout data from   running the trust-region  $\boldsymbol{g}\text{-MPC}$ controller on the robot (full-order dynamics). The  buffer  can permit a maximum buffer size.

    \item \textbf{Learning reduce-order LCS:} This  is  to train reduced-order LCS  $\boldsymbol{g}()$ using the  data of the latest Rollout  Buffer  $\boldsymbol{\mathcal{D}}_{\text{buffer}}$. Details of the training process is given in Section \ref{section.trainer}.
    
\end{itemize}

\subsection{Learning Reduced-Order LCS}\label{section.trainer}
We use the method of  our recent work \cite{jin2021learning} to learn the LCS  $\boldsymbol{g}()$  from  Rollout Buffer data
$\boldsymbol{\mathcal{D}}_{\text{buffer}}=\big\{(\boldsymbol{x}_k^{\boldsymbol{f}}, \boldsymbol{u}_k^{\boldsymbol{g}\text{-MPC}}, \boldsymbol{x}_{k+1}^{\boldsymbol{f}})\big\}$. This method enables efficient learning of a PWA model with up to thousands of hybrid modes and effectively handles the stiff dynamics that arises from contact. For self-containment, the method is  described below. 

By learning a  LCS in (\ref{equ.lcs.reduced}), we  mean to learn  all its matrix parameters, denoted as 
\begin{equation}\label{equ.lcs.param}
\boldsymbol{\theta}:=\{
A, B, C, \boldsymbol{d}, D, E, F, \boldsymbol{c}
\}.
\end{equation}
In \cite{jin2021learning}, we presented a new learning method, which  learns  $\boldsymbol{\theta}$ by minimizing the following violation-based loss
\begin{equation}\label{equ.lcs.loss}
    \mathcal{L}_{\text{vio}}(\boldsymbol{\theta}, \boldsymbol{\mathcal{D}}_{\text{buffer}})=\sum_{k}\mathcal{L}_{\text{vio}}\left(\boldsymbol{\theta}, 
    (\boldsymbol{x}_k^{\boldsymbol{f}}, \boldsymbol{u}_k^{\boldsymbol{g}\text{-MPC}},\boldsymbol{x}_{k+1}^{\boldsymbol{f}})\right)
\end{equation}
with 
\begin{equation*}
    \begin{aligned}
    &\mathcal{L}_{\text{vio}}\left(\boldsymbol{\theta}, 
    (\boldsymbol{x}_k^{\boldsymbol{f}}, \boldsymbol{u}_k^{\boldsymbol{g}\text{-MPC}},\boldsymbol{x}_{k+1}^{\boldsymbol{f}})\right):=\\
         &\min_{
         \boldsymbol{\lambda}_k\geq \boldsymbol{0},  \,\,
\boldsymbol{{\boldsymbol{\phi}}}_k\geq \boldsymbol{0}
} 
\,\, \frac{1}{2}\norm{A\boldsymbol{x}_k^{\boldsymbol{f}}+B\boldsymbol{u}_k^{\boldsymbol{g}\text{-MPC}}+C\boldsymbol{\lambda}_k+\boldsymbol{d}-\boldsymbol{x}_{k+1}^{\boldsymbol{f}}}^2+\\ 
&\frac{1}{\epsilon}
\left(
\boldsymbol{\lambda}_k\tran\boldsymbol{{\boldsymbol{\phi}}}_k+ \frac{1}{2\gamma}\norm{
D\boldsymbol{x}_k^{\boldsymbol{f}}+E\boldsymbol{u}_k^{\boldsymbol{g}\text{-MPC}} +F\boldsymbol{\lambda}_k+\boldsymbol{c}-\boldsymbol{\phi}_k
}^2
\right).
    \end{aligned}
\end{equation*}
In the above loss $\mathcal{L}_{\text{vio}}\left(\boldsymbol{\theta}, 
    (\boldsymbol{x}_k^{\boldsymbol{f}}, \boldsymbol{u}_k^{\boldsymbol{g}\text{-MPC}},\boldsymbol{x}_{k{+}1}^{\boldsymbol{f}})\right)$, the first and second terms are the violation of the affine dynamics and complementarity equations by a buffer data point $(\boldsymbol{x}_k^{\boldsymbol{f}}, \boldsymbol{u}_k^{\boldsymbol{g}\text{-MPC}}, \boldsymbol{x}_{k+1}^{\boldsymbol{f}})$, respectively. 
$\epsilon>0$, which empirically takes its value from the range $(10^{-3},1)$, is a hyperparameter that balances the violation of these two terms.  Here, $\boldsymbol{\phi}\in\mathbb{R}^r$ is an introduced slack variable for the complementarity equation, and  $\gamma>0$ can be any value as long as satisfying  $\gamma\leq \sigma_{\min}(F\tran+F)$ (i.e., the smallest singular value of the matrix $(F\tran+F)$). Such a choice of $\gamma$ ensures the strong convexity of the  quadratic objective  $\mathcal{L}_{\text{vio}}\left(\boldsymbol{\theta}, 
    (\boldsymbol{x}_k^{\boldsymbol{f}}, \boldsymbol{u}_k^{\boldsymbol{g}\text{-MPC}},\boldsymbol{x}_{k{+}1}^{\boldsymbol{f}})\right)$ in the variable $(\boldsymbol{\lambda}_k, \boldsymbol{\phi}_k)$.

As theoretically shown  in \cite{jin2021learning}, the above LCS learning loss $\mathcal{L}_{\text{vio}}(\boldsymbol{\theta}, \boldsymbol{\mathcal{D}}_{\text{buffer}})$ has the following properties. 
First, it can be proved that the  inner optimization over $(\boldsymbol{\lambda}_k, \boldsymbol{{\boldsymbol{\phi}}}_k)$ is a convex quadratic program, thus can be efficiently solved in batch using state-of-the-art  solvers, e.g., OSQP \cite{osqp}. Second, the gradient of the  violation-based loss  $\mathcal{L}_{\text{vio}}(\boldsymbol{\theta}, \boldsymbol{\mathcal{D}}_{\text{buffer}})$  with respect to all  matrices in $\boldsymbol{\theta}$ can be analytically obtained using the Envelope Theorem \cite{afriat1971theory} (without differentiating through the solution to the inner optimization). 
\noindent
 Third,  by  adding both the affine dynamics violation  and complementarity violation with a balance weight $\epsilon$, $\mathcal{L}_{\text{vio}}(\boldsymbol{\theta}, \boldsymbol{\mathcal{D}}_{\text{buffer}})$  attains a better conditioned loss landscape,  enabling simultaneous identification of stiff and multi-modal dynamics.

\subsection{Trust-Region LCS Model Predictive Controller} \label{section.tr_mpc}

With the  reduced-order LCS $\boldsymbol{g}()$, one can  establish the following trust-region reduced-order  LCS-based MPC:
\begin{equation}\label{equ.mpcsimple.tr}
\begin{aligned}
        \min_{\boldsymbol{{u}}_{0:T-1}} &\quad \sum_{t=0}^{T-1}c_{\boldsymbol{{\boldsymbol{\beta}}}}(\boldsymbol{{x}}_t, \boldsymbol{{u}}_t)+h_{\boldsymbol{\beta}}(\boldsymbol{{x}}_{T})\qquad\boldsymbol{\beta}\sim p(\boldsymbol{\beta})\\
         \text{subject to} 
                 &\quad \boldsymbol{{u}}_t\in [\boldsymbol{\bar{u}}-\boldsymbol{\Delta},  \boldsymbol{\bar{u}}+\boldsymbol{\Delta}],\\
         &\quad \boldsymbol{x}_{t+1}=A\boldsymbol{x}_t+B\boldsymbol{u}_t+C\boldsymbol{\lambda}_t+\boldsymbol{d},\\
          &\quad\boldsymbol{0} \leq\boldsymbol{\lambda}_t\perp D\boldsymbol{x}_t+E\boldsymbol{u}_t +F\boldsymbol{\lambda}_t+\boldsymbol{c} \geq \boldsymbol{0},
\\
         &\quad\boldsymbol{x}_0=\boldsymbol{x}_k^{\boldsymbol{f}}.
\end{aligned}
\end{equation}

Compared to the  early $\boldsymbol{g}\text{-MPC}$ in (\ref{equ.mpcsimple}), the  difference here is that we have enforced a trust region constraint $\boldsymbol{\bar{u}}-\boldsymbol{\Delta}\leq \boldsymbol{u}_t\leq   \boldsymbol{\bar{u}}+\boldsymbol{\Delta}$ on the control input $\boldsymbol{u}_t$, $t=0,1,\dots, T-1$. This is due to the following reasons. As shown in Fig. \ref{fig.algorithm}, since  $\boldsymbol{g}()$ is trained on the current  buffer data $\boldsymbol{\mathcal{D}}_{\text{buffer}}$, we expect  $\boldsymbol{g}()$ is likely valid only on the region covered by $\boldsymbol{\mathcal{D}}_{\text{buffer}}$, which we refer to as the \emph{trust region}. Thus,  we constrain  $\boldsymbol{g}$-MPC in (\ref{equ.mpcsimple.tr}) to this trust region, prohibiting the controller from attempting to exploit model error and generating undesired controls.

The center $\boldsymbol{\bar{u}}$ and size $\boldsymbol{\Delta}$ of the trust region  may  be updated along with the rollout buffer $\boldsymbol{\mathcal{D}}_{\text{buffer}}$ during each iteration. In our algorithm, at $i$-th iteration, we simply set the trust region  center $\boldsymbol{\bar{u}}_i$ as the mean of all control input data in the current Rollout Buffer $\boldsymbol{\mathcal{D}}_{\text{buffer},i}$, i.e., 
\begin{multline}\label{equ.trustregion.center}
    \boldsymbol{\bar{u}}_{i}= \text{mean}\big(\{
    \boldsymbol{u}^{\boldsymbol{g}\text{-MPC}}_1,  
    ..., 
    \boldsymbol{u}^{\boldsymbol{g}\text{-MPC}}_k, 
    ...
    \}\big),\\  \boldsymbol{u}^{\boldsymbol{g}\text{-MPC}}_k\in \boldsymbol{\mathcal{D}}_{\text{buffer}, i}
\end{multline}
and the trust region size  $\boldsymbol{\Delta}_i$  is set according to the standard deviation of all input data in $\boldsymbol{\mathcal{D}}_{\text{buffer},i}$:
\begin{multline}\label{equ.trustregion.radiusupdate}
    \mathbf{\Delta}_{i} =\eta_i \,\,\text{std}\big(\{
    \boldsymbol{u}^{\boldsymbol{g}\text{-MPC}}_1, 
    ..., 
    \boldsymbol{u}^{\boldsymbol{g}\text{-MPC}}_k, 
    ...
    \}\big),\\ \boldsymbol{u}^{\boldsymbol{g}\text{-MPC}}_k\in \boldsymbol{\mathcal{D}}_{\text{buffer}, i}
\end{multline}
Here, $\eta_i>0$ a hyperparameter of the trust region at the $i$-th iteration, and $\text{mean}()$ and $\text{std}()$  are applied  dimension-wise. It is also possible that $\boldsymbol{\bar{u}}_i$ and  $\boldsymbol{\Delta}_i$ are set using other rules, e.g., following the classic trust-region optimization \cite{yuan2000review}.

To  solve the  LCS MPC in (\ref{equ.mpcsimple.tr}), we adopt the direct method of trajectory optimization  \cite{posa2014direct}. Specifically, the optimization  simultaneously searches over the trajectories  $\boldsymbol{x}_{0:T}$, $\boldsymbol{u}_{0:T-1}$, $\boldsymbol{\lambda}_{0:T-1}$ by treating the LCS  and  trust region  as the separate constraints imposed at each time step. We solve  such nonlinear optimization using CasADi \cite{andersson2019casadi}  interface packed with  IPOPT solver \cite{wachter2006implementation}. In our later applications, as the reduced-order LCS in (\ref{equ.mpcsimple.tr}) has a relatively small number of hybrid modes, e.g.,  $\dim \boldsymbol{\lambda}\leq 5$ and a small MPC horizon $T=5$, we can solve (\ref{equ.mpcsimple.tr}) with a real-time MPC  performance (e.g., MPC running frequency can reach $50$Hz). 

We summarize the algorithm of task-driven hybrid model reduction  in Algorithm \ref{algorithm1}. Here,  subscript ${i}$ denotes the learning iteration. At  initialization, the Rollout Buffer $\boldsymbol{\mathcal{D}}_{\text{buffer}, 0}$ can be filled with data  collected from running random policies on the robot $\boldsymbol{f}()$.

\begin{algorithm2e}[th]
	\small 
	\SetKwInput{Parameter}{Hyperparameter}

	\SetKwInput{Initialization}{Initialization}
	\Initialization{
	Initial reduced-order LCS model $\boldsymbol{g}_{\boldsymbol{\theta}_{0}}$;\\
	\hspace{59pt}Initial Buffer $\boldsymbol{\mathcal{D}}_{\text{buffer},0}$ (by random policy);\\
	\hspace{59pt}Trust region parameter schedule $\{{\eta_i}\}$
	}
	\smallskip
	\For{$i=0,1,2,\cdots$}{

		\medskip
		\tcc{Reduced-order model update}
	    Train   reduced-order LCS  $\boldsymbol{g}_{\boldsymbol{\theta}_{i}}$ with the data from  current Rollout Buffer  $\boldsymbol{\mathcal{D}}_{\text{buffer}, i}$:
	    $\boldsymbol{g}_{\boldsymbol{\theta}_{i+1}}\leftarrow\boldsymbol{g}_{\boldsymbol{\theta}_{i}}$ [Section \ref{section.trainer}]

		\medskip
		\tcc{Set the trust region}
		Set the trust region   from the current  Rollout Buffer  $\boldsymbol{\mathcal{D}}_{\text{buffer}, i}$: 
		$[\boldsymbol{\bar{u}}_i-\boldsymbol{\Delta}_i,  \boldsymbol{\bar{u}}_i+\boldsymbol{\Delta}_i]$
		[see (\ref{equ.trustregion.center}) and (\ref{equ.trustregion.radiusupdate})]
		;\\

        \medskip
		\tcc{MPC rollout and update Buffer}
		With the current  LCS $\boldsymbol{g}_{\boldsymbol{\theta}_{i+1}}$ and  current  trust region $[\boldsymbol{\bar{u}}_i-\boldsymbol{\Delta}_i,  \boldsymbol{\bar{u}}_i+\boldsymbol{\Delta}_i]$, run the trust-region LCS MPC policy in (\ref{equ.mpcsimple.tr}) on the robot,  collect new rollout data $\{(\boldsymbol{x}_{k}^{\boldsymbol{f}}, \boldsymbol{u}_{k}^{\boldsymbol{g}\text{-MPC}}, \boldsymbol{x}_{k+1}^{\boldsymbol{f}})\}$  and add it to Rollout Buffer:
		$\boldsymbol{\mathcal{D}}_{\text{buffer},{i+1}}\leftarrow\boldsymbol{\mathcal{D}}_{\text{buffer},i}\cup\{(\boldsymbol{x}_{k}^{\boldsymbol{f}}, \boldsymbol{u}_{k}^{\boldsymbol{g}\text{-MPC}}, \boldsymbol{x}_{k+1}^{\boldsymbol{f}})\}$;
		
	}
	\caption{Task-driven hybrid model reduction} \label{algorithm1}
\end{algorithm2e}

\section{Model  Reduction for Hybrid Control Systems}\label{section.pwa_reduction}
In this section, we  will use the  proposed method to solve  model reduction of  synthetic hybrid  systems of varying dimension. Examples are written in  Python, available at \url{https://github.com/wanxinjin/Task-Driven-Hybrid-Reduction}.

\subsection{Problem Setting}
Consider  the  MPC of a general PWA system \cite{marcucci2019mixed}
\begin{equation}\label{equ.pwa.mpc}
\begin{aligned}
       \min_{\substack{\boldsymbol{u}_{0:T-1}\\ \boldsymbol{x}_{0:T}}} \,\,\, &J=\sum_{t=0}^{T-1}c(\boldsymbol{{x}}_t, \boldsymbol{{u}}_t)+h(\boldsymbol{{x}}_{T})\\
       \text{s.t.} 
       \,\,\,& \boldsymbol{f}():\begin{cases}
                 \boldsymbol{x}_{t+1}= {A}_j\boldsymbol{x}_t+{B}_j\boldsymbol{u}_t+{\boldsymbol{c}}_j,\,\,\,  (\boldsymbol{x}_t,\boldsymbol{u}_t)\in \mathcal{P}_j,\\
           \mathcal{P}_j=\{(\boldsymbol{x},\boldsymbol{u})\,|\, D_j\boldsymbol{x}+ E_j\boldsymbol{u}+\boldsymbol{h}_j\leq \boldsymbol{0}\},\\
          j\in \{1, 2,\dots, I\},
        \end{cases}\\
        &\boldsymbol{x}_0 \,\,\,\text{given}.
\end{aligned}
\end{equation}
Here,  $\mathcal{P}_j$, $ j\in \{1, 2,\dots, I\}$, is the $j$-th partition of the state-input space, with dynamics $\boldsymbol{x}_{t+1}= {A}_j\boldsymbol{x}_t+{B}_j\boldsymbol{u}_t+{\boldsymbol{c}}_j$. The total number of hybrid modes of the above PWA system is $I$.
Solving (\ref{equ.pwa.mpc}) is generally treated as  a mixed-integer program with $I^T$ possible mode sequences. This exponential scaling quickly becomes computationally intractable as $I$ and $T$ grow.

In the following, we aim to find a reduced-order LCS model $\boldsymbol{g}()$ which maintains a small budget of hybrid modes, such that running $\boldsymbol{g}$-MPC  can  achieve similar performance as  running the  full-order $\boldsymbol{f}$-MPC in (\ref{equ.pwa.mpc}). 
Here, the reduced-order LCS $\boldsymbol{g}()$ in (\ref{equ.lcs.reduced}) has the same dimensions of $\boldsymbol{x}$ and $\boldsymbol{u}$ as $\boldsymbol{f}()$,  but we set 
$\dim\boldsymbol{\lambda}$ such that 
its maximum number of  hybrid modes of $\boldsymbol{g}()$ is far less than $\boldsymbol{f}()$'s, i.e., $2^{\dim\boldsymbol{\lambda}}\ll I$.

\begin{figure*}[h]
	\centering
		\begin{subfigure}{.244\textwidth}
		\centering
		\includegraphics[width=\linewidth]{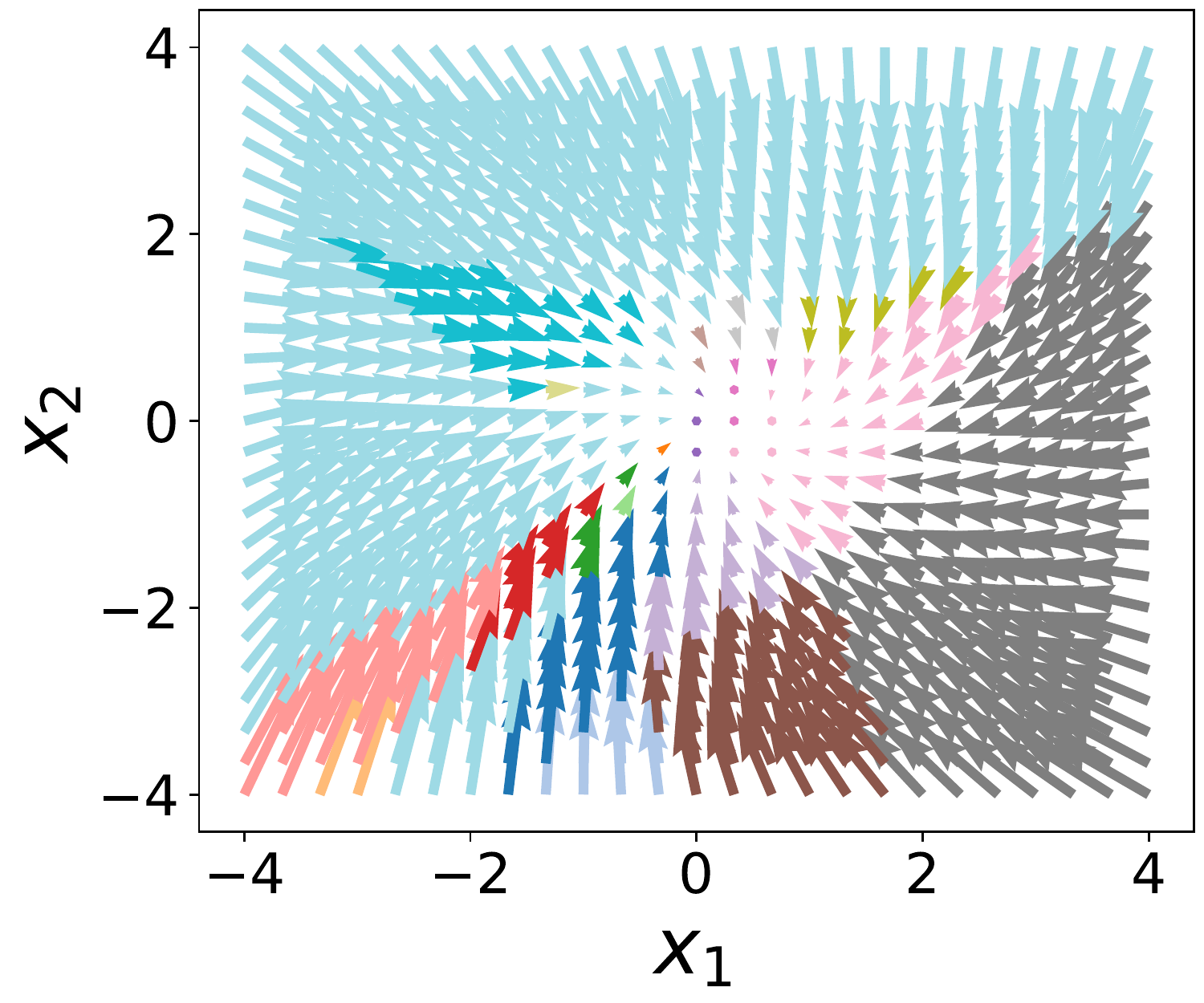}
		\caption{Phase portrait for $\boldsymbol{f}$-MPC}
		\label{fig.lcs.example1.1}
	\end{subfigure}
	\begin{subfigure}{.244\textwidth}
		\centering
		\includegraphics[width=\linewidth]{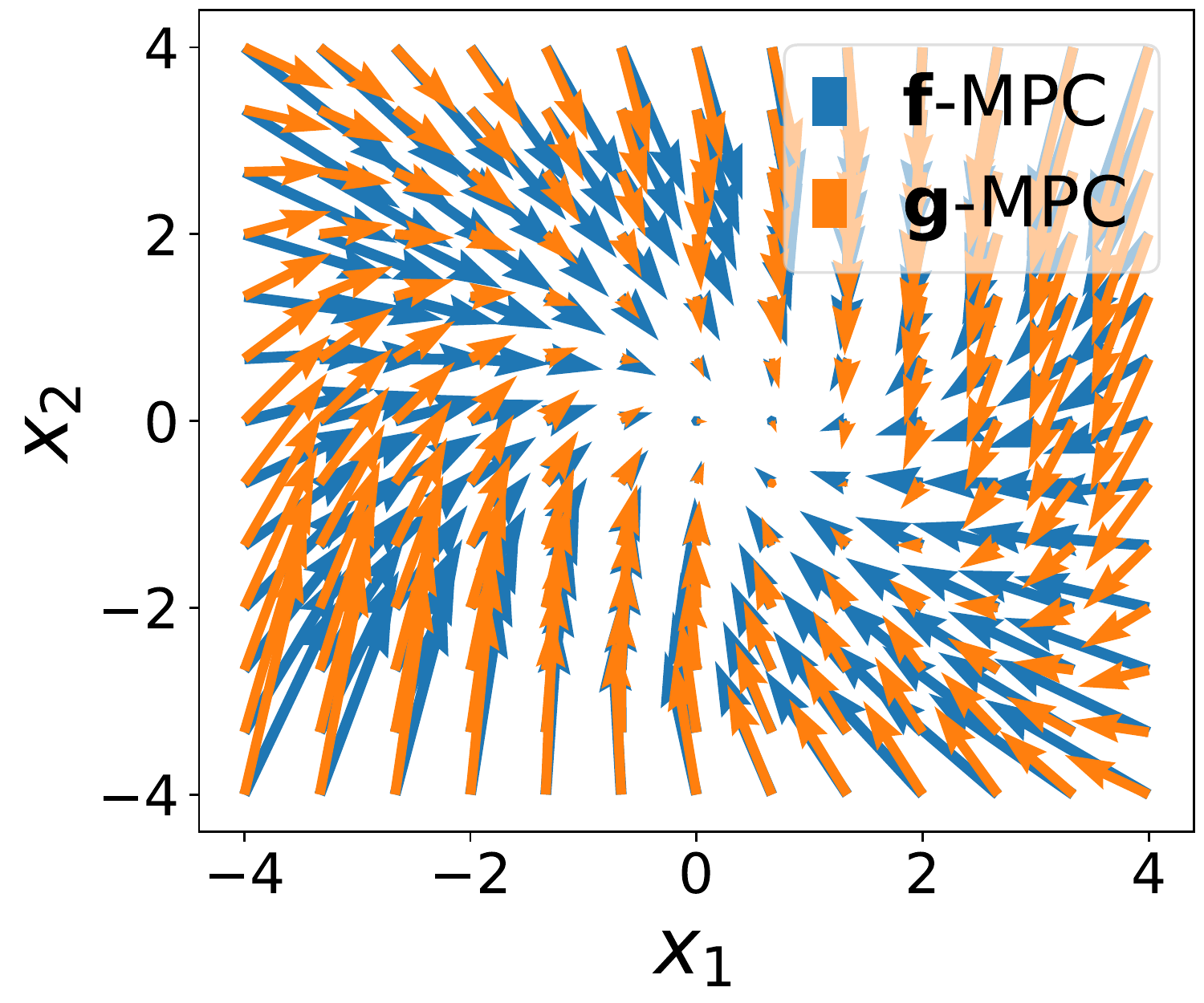}
		\caption{$\boldsymbol{f}$-MPC vs. $\boldsymbol{g}$-MPC at Iter. 0}
		\label{fig.lcs.example1.2}
	\end{subfigure}
	\begin{subfigure}{.244\textwidth}
		\centering
		\includegraphics[width=\linewidth]{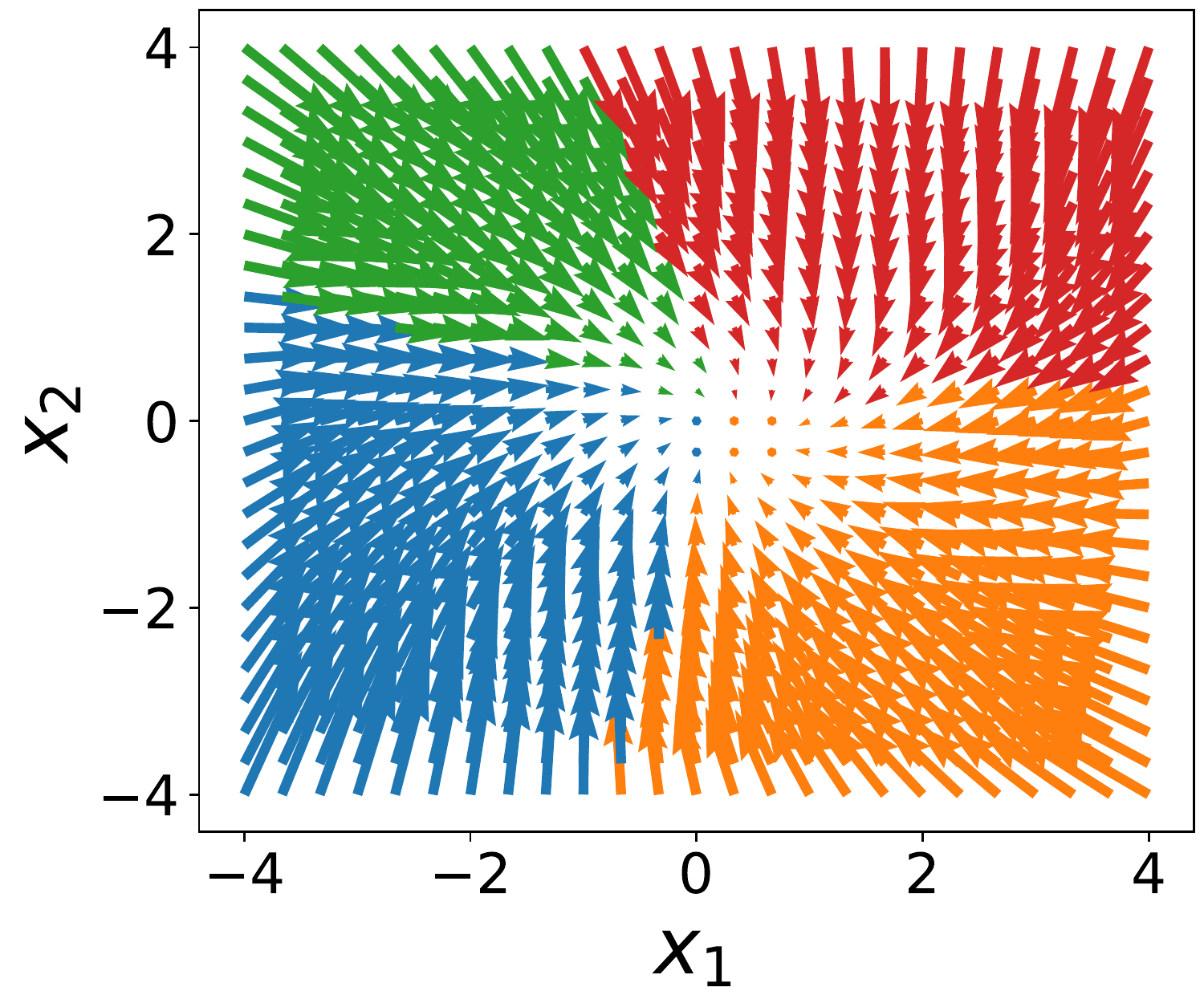}
		\caption{Phase  for $\boldsymbol{g}$-MPC  at Iter. 24}
		\label{fig.lcs.example1.3}
	\end{subfigure}
	\begin{subfigure}{.244\textwidth}
		\centering
		\includegraphics[width=\linewidth]{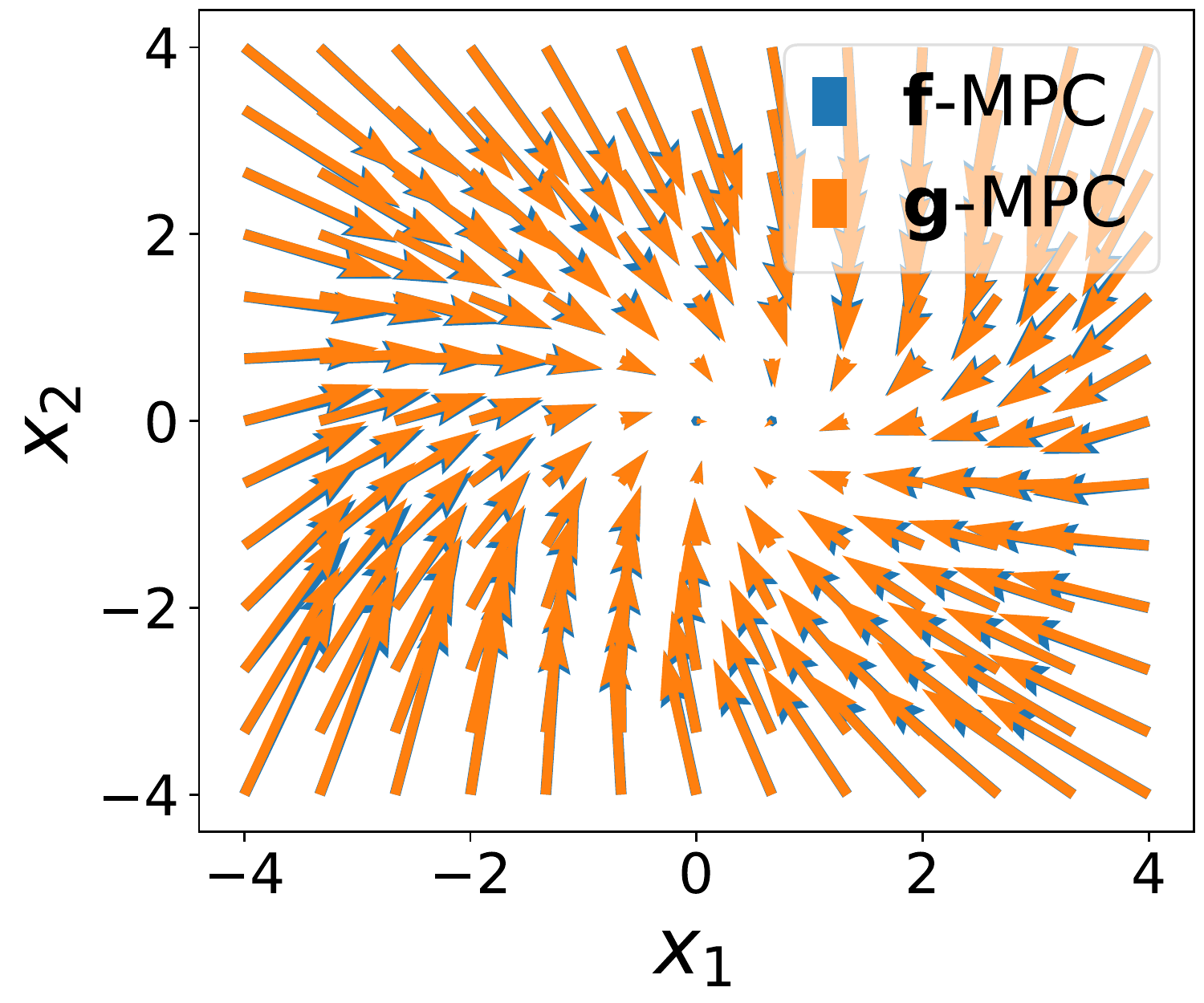}
		\caption{$\boldsymbol{f}$-MPC vs. $\boldsymbol{g}$-MPC at Iter. 24}
		\label{fig.lcs.example1.4}
	\end{subfigure}
	\caption{\small Phase portraits of the MPC-controlled full-order dynamics $\boldsymbol{x}_{t+1}=\boldsymbol{f}\big(\boldsymbol{x}_t, \text{MPC}(\boldsymbol{x}_t)\big)$, where the  controller  $\boldsymbol{u}_t=\text{MPC}(\boldsymbol{x}_t)$ can be either   full-order $\boldsymbol{f}$-MPC or   reduced-order $\boldsymbol{g}$-MPC.  (a) is the phase portrait for the $\boldsymbol{f}$-MPC controller, where different colors indicate different hybrid  modes (42  modes here) in $\boldsymbol{f}()$; (b) is the phase  comparison between using $\boldsymbol{f}$-MPC  and  $\boldsymbol{g}$-MPC controllers at  learning iteration 0; (c) is  the phase portrait for  $\boldsymbol{g}$-MPC controller at learning iteration  24, where different colors indicates different hybrid modes (4  modes here) in $\boldsymbol{g}()$; and (f) is the phase  comparison between the $\boldsymbol{f}$-MPC controller and  $\boldsymbol{g}$-MPC controller at learning iteration 24.
	} 
	\label{fig.lcs.example1}
  \vspace{-10pt}
\end{figure*}

\subsection{Experiment Settings}
We consider the task of stabilizing the hybrid system to a  stationary state (zeros), and thus  set the cost function $J$ in  (\ref{equ.pwa.mpc}) as a  quadratic cost function:
\begin{equation}\label{equ.pwa.cost}
    J=\sum_{t=0}^{T-1} \left(\boldsymbol{x}_t\tran Q\boldsymbol{x}_t+\boldsymbol{u}_t\tran R\boldsymbol{u}_t\right) + \boldsymbol{x}_{T}\tran Q_T\boldsymbol{x}_{T},
\end{equation}
with all weight matrices being identities. We run both   $\boldsymbol{f}\text{-MPC}$  and $\boldsymbol{g}\text{-MPC}$ policies on  full-order  dynamics $\boldsymbol{f}()$ in a closed-loop (receding) fashion, though noting that $\boldsymbol{f}\text{-MPC}$  cannot be solved in real-time for our more complex examples. The initial state $\boldsymbol{x}_0$ of the full-order  dynamics $\boldsymbol{f}()$ is subject to a uniform distribution $\boldsymbol{x}_0\sim U[-4, 4]$. 

In Algorithm \ref{algorithm1}, the hyperparameters are listed in Table \ref{table.pwa.setting}.  An ablation study about how the hyperparameters influence the  performances will be given later in Section \ref{section.pwa.param}.   For the hyperparameter setting in learning LCS,  please refer to our previous paper  \cite{jin2021learning}.

\begin{table}[h]
\begin{center}
\caption{Algorithm hyperparameters for model reduction of synthetic hybrid control systems.}
\label{table.pwa.setting}
\begin{threeparttable}
\begin{tabular}{l l l}
    \toprule
    \textbf{Parameter\tnote{1}} & \textbf{Symbol} & \textbf{Value}  \\ 
    \midrule
      MPC horizon   &        $T$ &5   \\
      Rollout horizon  &     $H$ & $15\sim20$  \\ 
      \# of new  rollouts added to buffer per iter. &  $R_{\text{new}}$ & 5    \\
      Maximum buffer size &    $R_{\text{buffer}}$ & 50 rollouts    \\
      Trust region hyperparameter  & $\eta_i$ & 20,  $\forall i$  \\
      Initial guess $\boldsymbol{\theta}$ in (\ref{equ.lcs.param})  for  $\boldsymbol{g}()$ &  $\boldsymbol{\theta}_0$ & $U[-0.5, 0.5]$\tnote{2} \\ 
      \bottomrule
\end{tabular}
\begin{tablenotes}
\item[1] Other settings not listed here will be stated in text. 
\item[2]  $U[-0.5, 0.5]$ means uniform distribution in range $[-0.5, 0.5]$.
\end{tablenotes}
\end{threeparttable}
\end{center}
\end{table}

\subsection{Results and Analysis}
\subsubsection{Illustration of Learning Progress} 
We  randomly generate full-order dynamics  $\boldsymbol{f}()$ in (\ref{equ.pwa.mpc}). Specifically, all matrices $(A_j, B_j, \boldsymbol{c}_j, D_j, E_j, \boldsymbol{h}_j)$, $i=1,2, \dots, I$, are sampled from uniform distributions, with dimension $\boldsymbol{x}\in \mathbb{R}^2$ and  $\boldsymbol{u}\in\mathbb{R}$, and  mode count  $I\approx120$ for random sampling of $\boldsymbol{x}_{0}\sim U[-4, 4]$ and $\boldsymbol{u}\sim U[-10, 10]$.
In the reduced-order LCS   $\boldsymbol{g}()$ in (\ref{equ.lcs.reduced}), we take $\dim \boldsymbol{\lambda}=2$,
meaning that the maximum number of  modes in $\boldsymbol{g}()$ is $4$, far fewer than $I$ of the full-order dynamics.

We plot the learning progress (iteration) for the task-driven reduced-order model $\boldsymbol{g}()$ in Fig. \ref{fig.lcs.example1}. Here, we show the   phase portraits of the MPC-controlled full-order dynamics:
\begin{equation}
    \boldsymbol{x}_{t+1}=\boldsymbol{f}\big(\boldsymbol{x}_t, \boldsymbol{u}_t\big)=\boldsymbol{f}\big(\boldsymbol{x}_t, \text{MPC}(\boldsymbol{x}_t)\big)
\end{equation}
where   the MPC controller  $\boldsymbol{u}_t=\text{MPC}(\boldsymbol{x}_t)$ can be either the full-order $\boldsymbol{f}$-MPC  (\ref{equ.pwa.mpc}) or the learned reduced-order $\boldsymbol{g}$-MPC. Specifically, Fig. \ref{fig.lcs.example1.1} shows the phase portrait of    $\boldsymbol{f}$-MPC controller, where different colors show different  hybrid modes in  $\boldsymbol{f}()$. Fig. \ref{fig.lcs.example1.2}  shows the phase portrait comparisons between  $\boldsymbol{f}$-MPC controller (blue) and $\boldsymbol{g}$-MPC controller (orange) before learning. Fig. \ref{fig.lcs.example1.3}
shows the phase portrait for $\boldsymbol{g}$-MPC controller after learning, where different modes in $\boldsymbol{g}$-MPC are shown in different colors.  Fig. \ref{fig.lcs.example1.4}
compares the phase plot between $\boldsymbol{f}$-MPC  (blue) and $\boldsymbol{g}$-MPC (orange) controllers  after learning.

Although the full-order dynamics $\boldsymbol{f}()$ has around $I=120$  modes for  random data  $\boldsymbol{x}\sim U[-4, 4]$ and $\boldsymbol{u}\sim U[-10, 10]$, Fig. \ref{fig.lcs.example1.1}  shows 42  hybrid modes in $\boldsymbol{f}()$ with  $\boldsymbol{f}$-MPC controller. One can notice that some  modes correspond to  a small portion of the state space, e.g.,  \textcolor{orange}{orange}  and \textcolor{green}{green}  (near origin),  and thus, most of $\boldsymbol{f}$'s task-relevant  motion (flows) will not enter into or  quickly pass those modes. This makes those  modes less important for the 
 task of minimizing (\ref{equ.pwa.cost}). On the other hand, some other  modes account for a large portion of the state space, such as \textcolor{cyan}{cyan} and \textcolor{gray}{gray}. Most of $\boldsymbol{f}$'s  motion will enter into or stay in those modes, making them  dominant for the minimizing the task cost (\ref{equ.pwa.cost}). 
In Fig. \ref{fig.lcs.example1.3}, after learning, the reduced-order model $\boldsymbol{g}()$ has only 4 hybrid modes (recall $\dim\boldsymbol{\lambda}=2$), which successfully capture the important modes in Fig. \ref{fig.lcs.example1.1}. Comparing the  phase portrait of the full-order $\boldsymbol{f}$-MPC controller and that of  the  reduced-order  $\boldsymbol{g}$-MPC  controller in Fig. \ref{fig.lcs.example1.4}, we see a  similar control performance. Thus, one can conclude  that the proposed method  learns a task-driven reduced-order model for the hybrid system.

\begin{table*}[h]
\begin{center}
\caption{Task-driven model reduction for hybrid control systems}
\label{table.pwa.ex2}
\begin{threeparttable}
\begin{tabular}{ccccccccc}
    \toprule
    \multirow{2}{*}{Case} & 
    \multirow{2}{*}{\begin{tabular}{@{}l@{}}System \\ dimension \end{tabular}} &
    \multirow{2}{*}{{\begin{tabular}{@{}l@{}}Mode reduction \\ $\dim \boldsymbol{\Lambda} \rightarrow\dim \boldsymbol{\lambda}$\end{tabular}}} &
    \multicolumn{2}{c}{\textbf{Random Policy}} & 
    \multicolumn{3}{c}{\textbf{$\boldsymbol{g}$-MPC Policy}} &
    \multirow{2}{*}{$\mathcal{L}(\boldsymbol{g})$ (\%) } 
    \\
    \cmidrule(rl){4-5} \cmidrule(rl){6-8}
     & & &  
     {\# of modes  in $\boldsymbol{f}$}&
     {ME$(\boldsymbol{g})$ (\%)}  &
     {\# of modes  in $\boldsymbol{f}$}&
     {ME$(\boldsymbol{g})$ (\%)}    & 
     {\# of modes  in $\boldsymbol{g}$} &
    \\ 
    \midrule
    1 & 
    {\begin{tabular}{@{}l@{}} $\dim\boldsymbol{x}=6$\\ $\dim\boldsymbol{u}={2}$\end{tabular}}   &
    {\begin{tabular}{@{}l@{}} $\dim\boldsymbol{\Lambda}=8$ \\ $\rightarrow\dim\boldsymbol{\lambda}=3$ \end{tabular}}     &
    {\begin{tabular}{@{}l@{}}   187.3 \\ $\pm$ 14.0      \end{tabular}}&
    {\begin{tabular}{@{}l@{}}    33.0\%     \\     $\pm$ 13.9\%      \end{tabular}}     &
    \begin{tabular}{@{}l@{}}   18.4 \\ $\pm$ 2.8     \end{tabular}&
    {\begin{tabular}{@{}l@{}}     0.5\%     \\     $\pm$ 0.2\%      \end{tabular}}  & 
    \begin{tabular}{@{}l@{}}   6.2 \\ $\pm$ 1.2     \end{tabular}&
     {\begin{tabular}{@{}l@{}}     0.1\%     \\      $\pm$ 0.1\%      \end{tabular}}
     \\[10pt]
     2 & 
     {\begin{tabular}{@{}l@{}} $\dim\boldsymbol{x}={10}$\\ $\dim\boldsymbol{u}={3}$\end{tabular}} & 
     {\begin{tabular}{@{}l@{}} $\dim\boldsymbol{\Lambda}=12$ \\ $\rightarrow\dim\boldsymbol{\lambda}=3$ \end{tabular}}     &
         {\begin{tabular}{@{}l@{}}   $1090.0$ \\ $\pm$ 133.2     \end{tabular}}&
    {\begin{tabular}{@{}l@{}}    29.8\%     \\     $\pm$ 13.0\%      \end{tabular}}      & 
     \begin{tabular}{@{}l@{}}   29.9 \\ $\pm$ 2.5    \end{tabular}&
     {\begin{tabular}{@{}l@{}}     1.0\%    \\    $\pm$0.1\%     \end{tabular}} & 
     \begin{tabular}{@{}l@{}}   6.7 \\ $\pm$ 1.0     \end{tabular}&
     {\begin{tabular}{@{}l@{}}     0.5\%    \\    $\pm$0.2\%     \end{tabular}}
     \\[10pt]
              3 & 
     {\begin{tabular}{@{}l@{}} $\dim\boldsymbol{x}={20}$\\ $\dim\boldsymbol{u}={3}$\end{tabular}} & 
     {\begin{tabular}{@{}l@{}} $\dim\boldsymbol{\Lambda}=15$ \\ $\rightarrow\dim\boldsymbol{\lambda}=1$ \end{tabular}} &
              {\begin{tabular}{@{}l@{}}   2686.2 \\ $\pm$ 197.3      \end{tabular}}&
    {\begin{tabular}{@{}l@{}}    16.8\%     \\     $\pm$ 5.7\%      \end{tabular}}     & 
     \begin{tabular}{@{}l@{}}    50.0 \\ $\pm$  4.7   \end{tabular}&
     {\begin{tabular}{@{}l@{}}     2.1\%    \\    $\pm$0.3\%     \end{tabular}} &
          \begin{tabular}{@{}l@{}}   2.0 \\ $\pm$ 0.0    \end{tabular}&
     {\begin{tabular}{@{}l@{}}     1.1\%    \\    $\pm$0.5\%     \end{tabular}} 
     \\[10pt]
     4 & 
     {\begin{tabular}{@{}l@{}} $\dim\boldsymbol{x}={20}$\\ $\dim\boldsymbol{u}={3}$\end{tabular}} & 
     {\begin{tabular}{@{}l@{}} $\dim\boldsymbol{\Lambda}=15$ \\ $\rightarrow\dim\boldsymbol{\lambda}=2$ \end{tabular}} &
              {\begin{tabular}{@{}l@{}}   2869.2 \\ $\pm$ 165.0      \end{tabular}}&
    {\begin{tabular}{@{}l@{}}    17.5\%     \\     $\pm$ 6.5\%      \end{tabular}}     & 
     \begin{tabular}{@{}l@{}}    52.3 \\ $\pm$  4.5   \end{tabular}&
     {\begin{tabular}{@{}l@{}}     1.9\%    \\    $\pm$0.2\%     \end{tabular}} &
          \begin{tabular}{@{}l@{}}   3.7 \\ $\pm$ 0.4    \end{tabular}&
     {\begin{tabular}{@{}l@{}}     1.1\%    \\    $\pm$0.4\%     \end{tabular}} 
     \\[10pt]
    5 & 
     {\begin{tabular}{@{}l@{}} $\dim\boldsymbol{x}={20}$\\ $\dim\boldsymbol{u}={3}$\end{tabular}} & 
     {\begin{tabular}{@{}l@{}} $\dim\boldsymbol{\Lambda}=15$ \\ $\rightarrow\dim\boldsymbol{\lambda}=3$ \end{tabular}} &
                  {\begin{tabular}{@{}l@{}}   2855.7 \\  $\pm$ 193.2      \end{tabular}}&
    {\begin{tabular}{@{}l@{}}    16.6\%     \\     $\pm$ 4.6\%      \end{tabular}}     &
     \begin{tabular}{@{}l@{}}    50.1\\ $\pm$   3.9 \end{tabular}&
     {\begin{tabular}{@{}l@{}}     1.9\%    \\    $\pm$ 0.3\%     \end{tabular}} & 
          \begin{tabular}{@{}l@{}}   7.1 \\ $\pm$  0.7  \end{tabular}&
     {\begin{tabular}{@{}l@{}}     1.0\%    \\    $\pm$ 0.4\%     \end{tabular}} 
     \\[10pt]
     
    6 & 
     {\begin{tabular}{@{}l@{}} $\dim\boldsymbol{x}={20}$\\ $\dim\boldsymbol{u}={3}$\end{tabular}} & 
     {\begin{tabular}{@{}l@{}} $\dim\boldsymbol{\Lambda}=15$ \\ $\rightarrow\dim\boldsymbol{\lambda}=5$ \end{tabular}} & 
    {\begin{tabular}{@{}l@{}}    2839.9 \\  $\pm$ 172.9      \end{tabular}}&
    {\begin{tabular}{@{}l@{}}     16.3\%     \\     $\pm$ 4.6\%      \end{tabular}} & 
     \begin{tabular}{@{}l@{}}   54.3 \\ $\pm$  4.8  \end{tabular}&
     {\begin{tabular}{@{}l@{}}     1.8\%    \\    $\pm$ 0.2\%     \end{tabular}} & 
          \begin{tabular}{@{}l@{}}   16.2 \\ $\pm$  3.3  \end{tabular}&
          {\begin{tabular}{@{}l@{}}     0.9\%    \\    $\pm$ 0.2\%     \end{tabular}}
     \\[10pt]
     
    7 & 
     {\begin{tabular}{@{}l@{}} $\dim\boldsymbol{x}={30}$\\ $\dim\boldsymbol{u}={3}$\end{tabular}} & 
     {\begin{tabular}{@{}l@{}} $\dim\boldsymbol{\Lambda}=15$ \\ $\rightarrow\dim\boldsymbol{\lambda}=3$ \end{tabular}} & 
        {\begin{tabular}{@{}l@{}}    3232.6 \\  $\pm$ 219.1      \end{tabular}}&
    {\begin{tabular}{@{}l@{}}    11.6\%     \\     $\pm$ 4.7\%      \end{tabular}}     & 
     \begin{tabular}{@{}l@{}}   70.7 \\ $\pm$  7.3   \end{tabular}&
     {\begin{tabular}{@{}l@{}}     2.3\%    \\    $\pm$ 0.6\%     \end{tabular}} &
          \begin{tabular}{@{}l@{}}   7.5 \\ $\pm$  0.6  \end{tabular}&
     {\begin{tabular}{@{}l@{}}     2.0\%    \\    $\pm$ 0.7\%     \end{tabular}}
     \\
      \bottomrule
\end{tabular}
\begin{tablenotes}
\item[*] Results for each case  are based on 10 trials, and each trial uses a different randomly-generated full-order dynamics  $\boldsymbol{f}()$.  The results are reported using mean and standard derivation over all ten trials. See detailed explanations about those quantities in text.
\end{tablenotes}
\end{threeparttable}
\end{center}
\end{table*}


\subsubsection{High Dimensional Examples} \label{section.pwa.ex2}
In this session, we quantitatively analyze   task-driven hybrid model reduction. For easy comparison, we represent the full-order  dynamics $\boldsymbol{f}()$ in (\ref{equ.pwa.mpc})  also in  LCS representation. All  matrices in $\boldsymbol{f}()$ are drawn from uniform distribution. We use $\boldsymbol{\Lambda}$ to denote the complementarity variable of $\boldsymbol{f}()$.
In the reduced-order LCS  $\boldsymbol{g}()$, we vary $\dim\boldsymbol{\lambda}$ to show the effect of varying degrees of mode reduction.

In Table \ref{table.pwa.ex2}, we consider different  full-order $\boldsymbol{f}$(),  listed in the second column, and  different hybrid mode reduction, listed in the third column. From the fourth to ninth columns, we use the following metrics to report the learning performance.


\begin{itemize}[leftmargin=10pt]

    \item \textbf{Random Policy, number of modes in $\boldsymbol{f}$:} This is the total  number of the hybrid modes  that are active in the full-order dynamics $\boldsymbol{f}()$, when one runs a random policy with     input  $\boldsymbol{u}^{\text{rand}}\sim U[-10, 10]$ and  initial condition   $\boldsymbol{x}_{0}\sim U[-4, 4]$. This metric indicates all possible modes experienced by the full-order system in a uniformly sampled state-input space.
    
     \smallskip
        
    \item \textbf{Random Policy, ME$(\boldsymbol{g})(\%)$:}  This  is the relative  prediction error of the learned reduced-order LCS model $\boldsymbol{g}()$ evaluated on the above  random policy data,     defined as
    \begin{equation}\label{equ.pwa.mpe2}
    \frac{\norm{\boldsymbol{g}(\boldsymbol{x},\boldsymbol{u}^{\text{rand}})-\boldsymbol{f}(\boldsymbol{x},\boldsymbol{u}^{\text{rand}})}^2}{\norm{\boldsymbol{f}(\boldsymbol{x},\boldsymbol{u}^{\text{rand}})}^2+10^{-6}} \times 100\%.
    \end{equation}

        \smallskip

    \item \textbf{$\boldsymbol{g}$-MPC Policy, number of modes in $\boldsymbol{f}$:} This is the total  number of  the hybrid modes that are active in the full-order dynamics  $\boldsymbol{f}()$ when one runs the learned   reduced-order $\boldsymbol{g}$-MPC controller on it with  initial  condition  $\boldsymbol{x}_{0}\sim U[-4, 4]$ This metric  indicates  all possible  modes experienced by the full-order system in the task-relevant state-input space.

\smallskip

    \item \textbf{$\boldsymbol{g}$-MPC Policy, ME$(\boldsymbol{g})(\%)$:} This is the relative  prediction error of the learned reduced-order LCS $\boldsymbol{g}()$ evaluated at the above $\boldsymbol{g}$-MPC policy data,     defined as
    \begin{equation}\label{equ.pwa.mpe}
    \frac{\norm{\boldsymbol{g}(\boldsymbol{x},\boldsymbol{u}^{\boldsymbol{g}\text{-MPC}})-\boldsymbol{f}(\boldsymbol{x},\boldsymbol{u}^{\boldsymbol{g}\text{-MPC}})}^2}{\norm{\boldsymbol{f}(\boldsymbol{x},\boldsymbol{u}^{\boldsymbol{g}\text{-MPC}})}^2+10^{-6}} \times 100\%.
    \end{equation}
    As the proposed algorithm chooses to minimize the model error, which is related to the performance gap as in  Lemma \ref{lemma.1}. Thus it is meaningful to include this metric to show the  model error of the learned $\boldsymbol{g}()$  on the task-relevant data.
    \smallskip
    
    \item \textbf{$\boldsymbol{g}$-MPC Policy, number of modes in $\boldsymbol{g}$:} This is the  total number of   hybrid modes that are active inside the $\boldsymbol{g}$-MPC controller, which is run on the full-order dynamics  $\boldsymbol{f}()$ with  initial system condition   $\boldsymbol{x}_{0}\sim U[-4, 4]$.
    
        \smallskip
    
    \item \textbf{Relative task performance gap $\mathcal{L}(\boldsymbol{g})(\%)$:} This is the relative task performance gap $\mathcal{L}(\boldsymbol{g})(\%)$ is  between $\boldsymbol{g}$-MPC controller and $\boldsymbol{f}$-MPC controller: 
    \begin{equation}\label{equ.pwa.loss}
    \mathcal{L}(\boldsymbol{g})=\frac{J(\boldsymbol{g}\text{-MPC})-J(\boldsymbol{f}\text{-MPC})}{J(\boldsymbol{f}\text{-MPC})} \times 100\%,
    \end{equation}
    where $J(\boldsymbol{g}\text{-MPC})$ is the  cost of a rollout by running  $\boldsymbol{g}\text{-MPC}$ controller on the full-order dynamics $\boldsymbol{f}()$, namely,
    \begin{multline}\label{equ.pwa.mpccost}
        J(\boldsymbol{g}\text{-MPC})=\E_{{\boldsymbol{\beta}}\sim p({\boldsymbol{\beta}})} \E_{p_{\boldsymbol{\beta}}(\boldsymbol{x}_0)}\sum_{t=0}^{H}{c}_{{{\boldsymbol{\beta}}}}(\boldsymbol{x}_t^{\boldsymbol{f}}, \boldsymbol{u}^{\boldsymbol{g}\text{-MPC}}_t),
    \end{multline}
    with $\{\boldsymbol{x}_{0:H}^{\boldsymbol{f}},\boldsymbol{u}_{0:H}^{\boldsymbol{g}\text{-MPC}}\}$ being a rollout of running $\boldsymbol{g}\text{-MPC}$ controller on  $\boldsymbol{f}()$. The similar definition applies to $J(\boldsymbol{f}\text{-MPC})$. 
    Recall the original learning loss (\ref{equ.loss1}).  $\mathcal{L}(\boldsymbol{g})(\%)$ can directly indicate the performance of the learned reduced-order model $\boldsymbol{g}()$ for the given task distribution $p({\boldsymbol{\beta}})$.
\end{itemize}

\begin{figure}[h]
     \centering
         \includegraphics[width=0.43\textwidth]{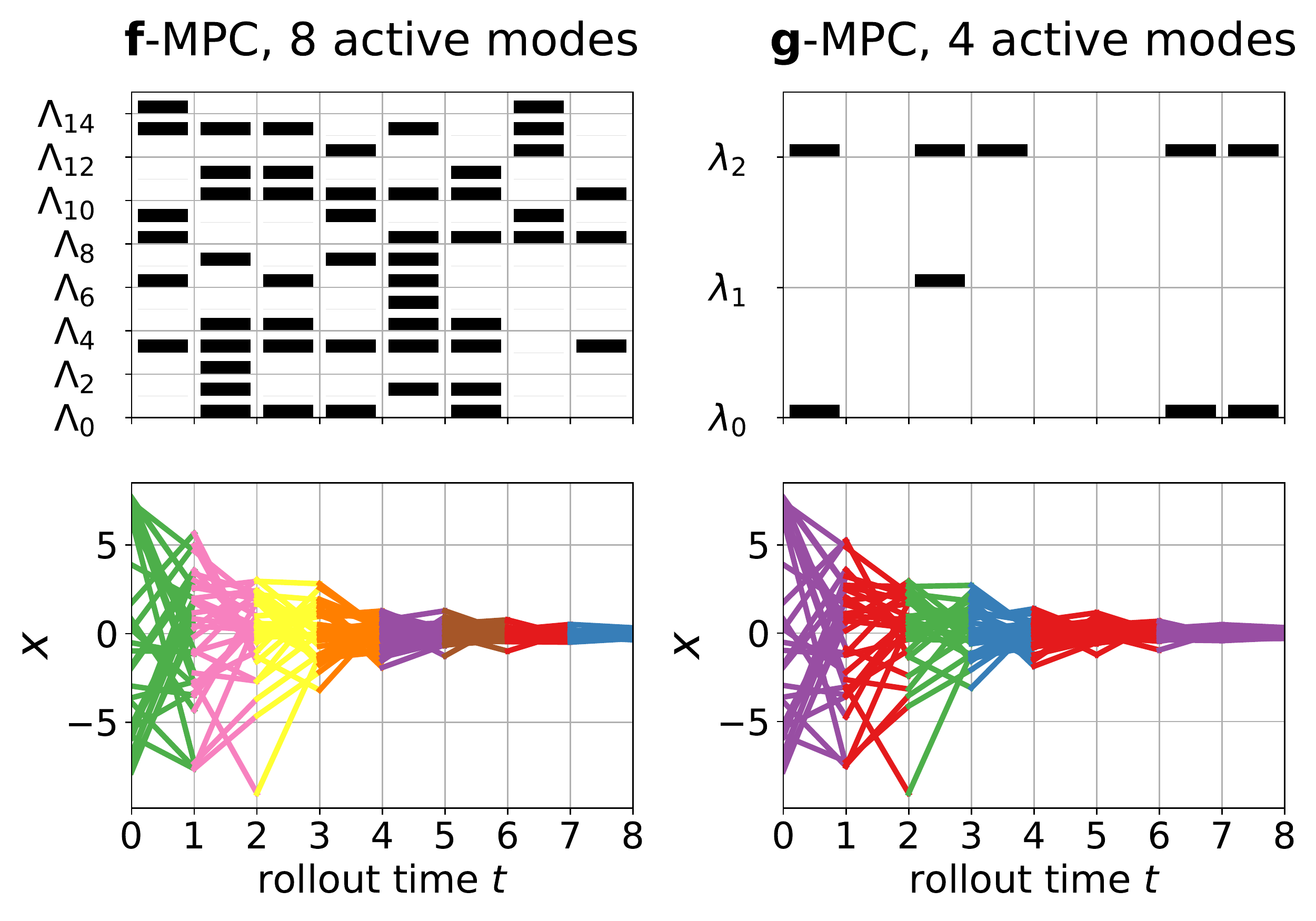}
\caption{\small  An example  rollout of $\boldsymbol{f}()$  with   full-order $\boldsymbol{f}$-MPC controller or reduced-order  $\boldsymbol{g}$-MPC controller, corresponding to Case 7 in Table \ref{table.pwa.ex2}. Specifically, the left column is a single rollout of running  $\boldsymbol{f}$-MPC controller on $\boldsymbol{f}()$, and the right running $\boldsymbol{g}$-MPC controller on $\boldsymbol{f}()$, both under the same initial condition. The upper row shows the activation of $\boldsymbol{\Lambda}$ or $\boldsymbol{\lambda}$ over time (black brick means nonzero and blank means zero). The bottom row shows the state trajectory $\boldsymbol{x}_{t+1}=\boldsymbol{f}\big(\boldsymbol{x}_t, \text{MPC}(\boldsymbol{x}_t)\big)$ over time, with each color representing a different hybrid mode. Note that since each panel only shows a single  instance of rollout (from a fixed $\boldsymbol{x}_0$), there are not many active hybrid modes of $\boldsymbol{f}$ involved in a single trajectory.
	} 
	\label{fig.ex2.runningmpc}
\end{figure}

\begin{figure*}[h]
	\centering
		\begin{subfigure}{.235\textwidth}
		\centering
		\includegraphics[width=\linewidth]{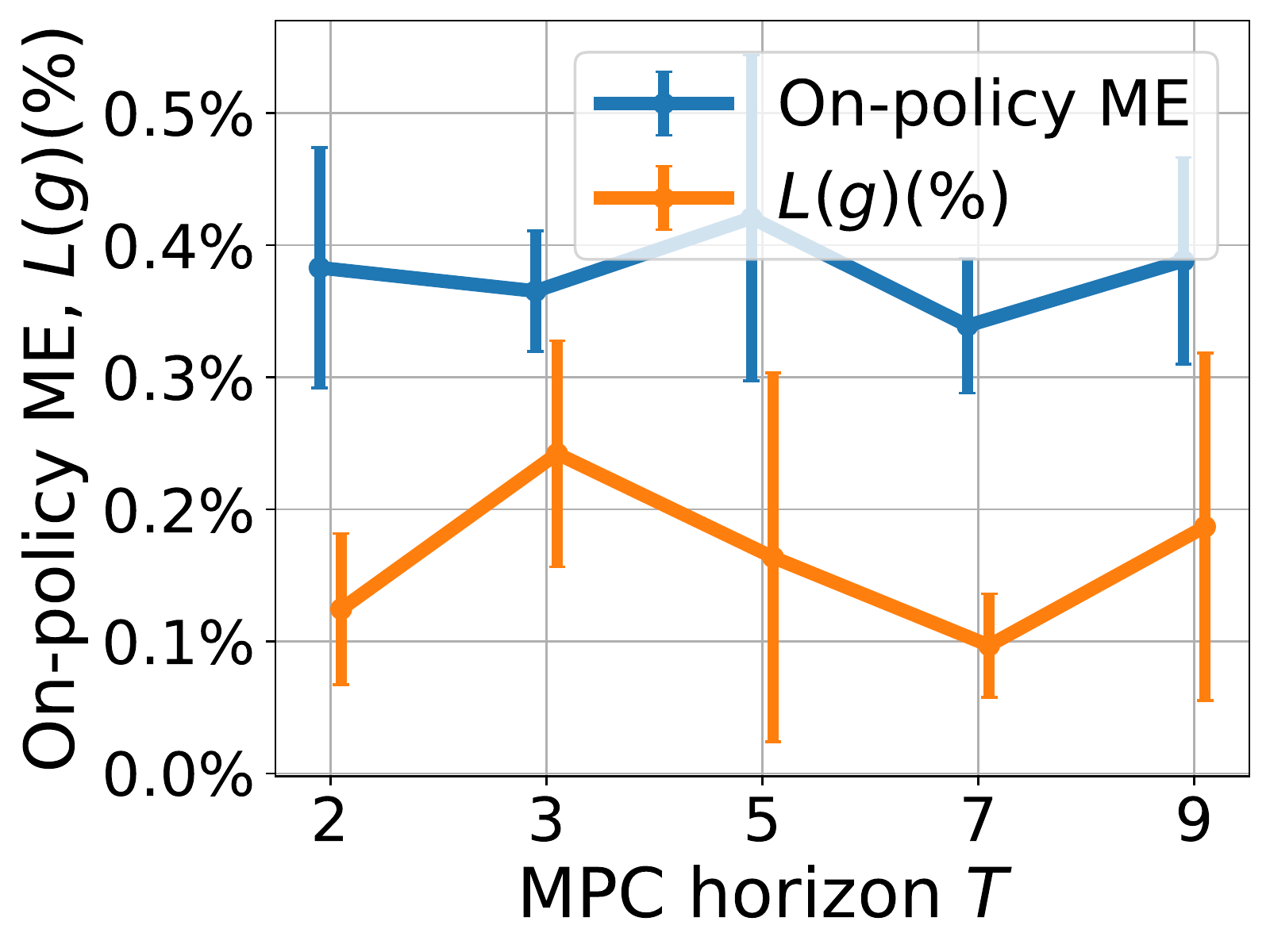}
		\caption{}
		\label{fig.ex3.ablation.2}
	\end{subfigure}
		\begin{subfigure}{.235\textwidth}
		\centering
		\includegraphics[width=\linewidth]{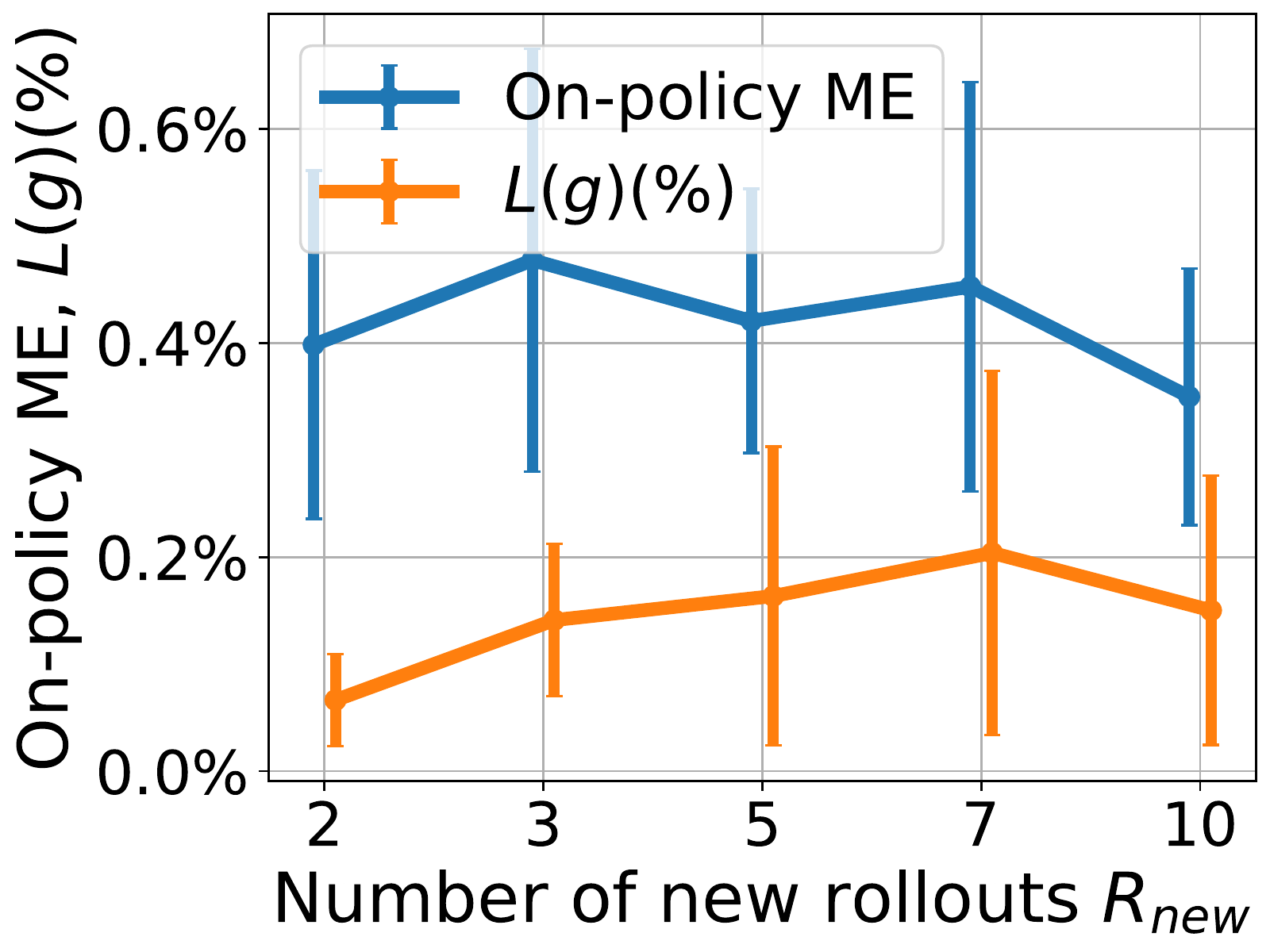}
		\caption{}
		\label{fig.ex3.ablation.1}
	\end{subfigure}
	\begin{subfigure}{.235\textwidth}
		\centering
		\includegraphics[width=\linewidth]{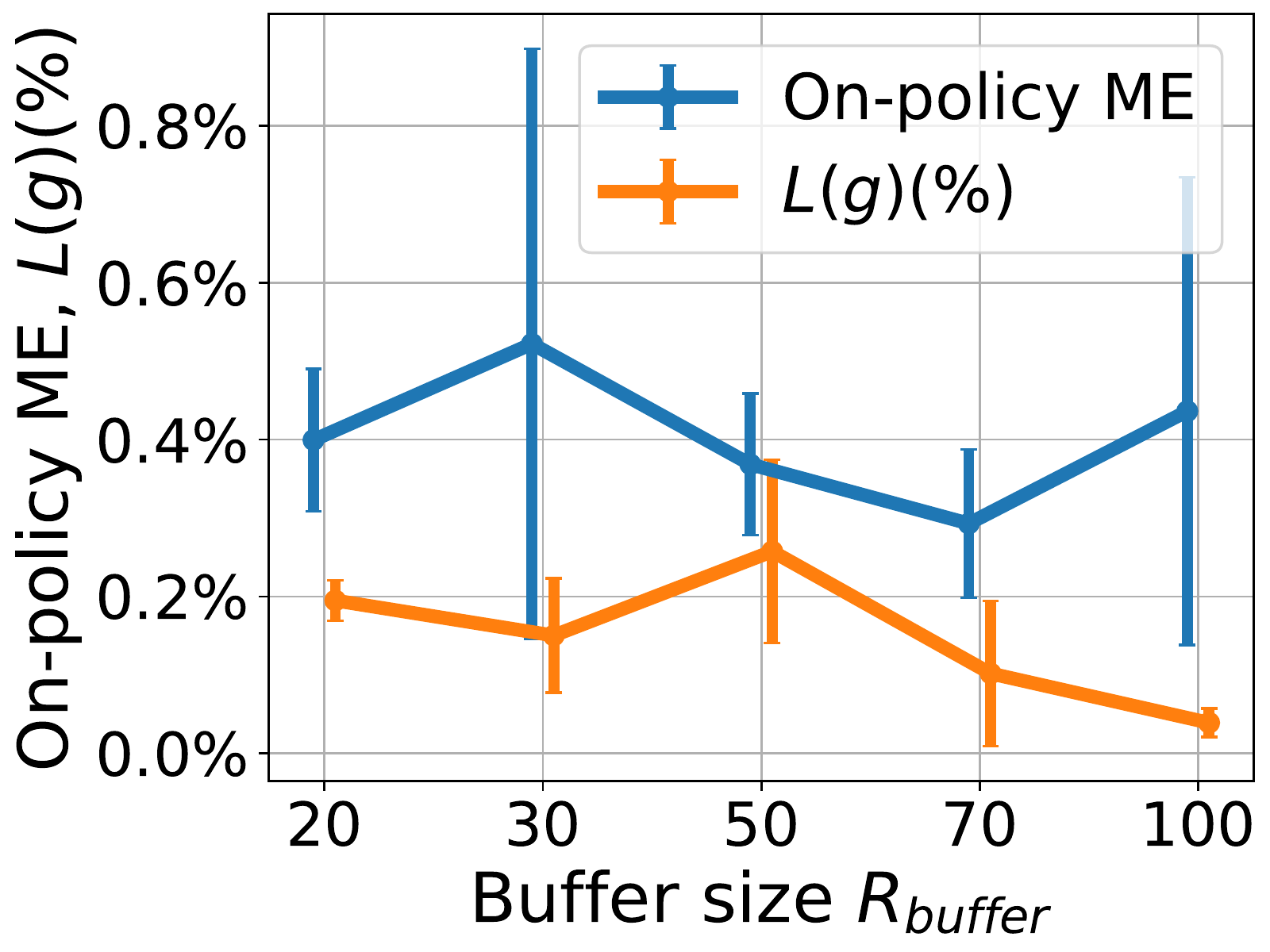}
		\caption{}
		\label{fig.ex3.ablation.3}
	\end{subfigure}
	\begin{subfigure}{.235\textwidth}
		\centering
		\includegraphics[width=\linewidth]{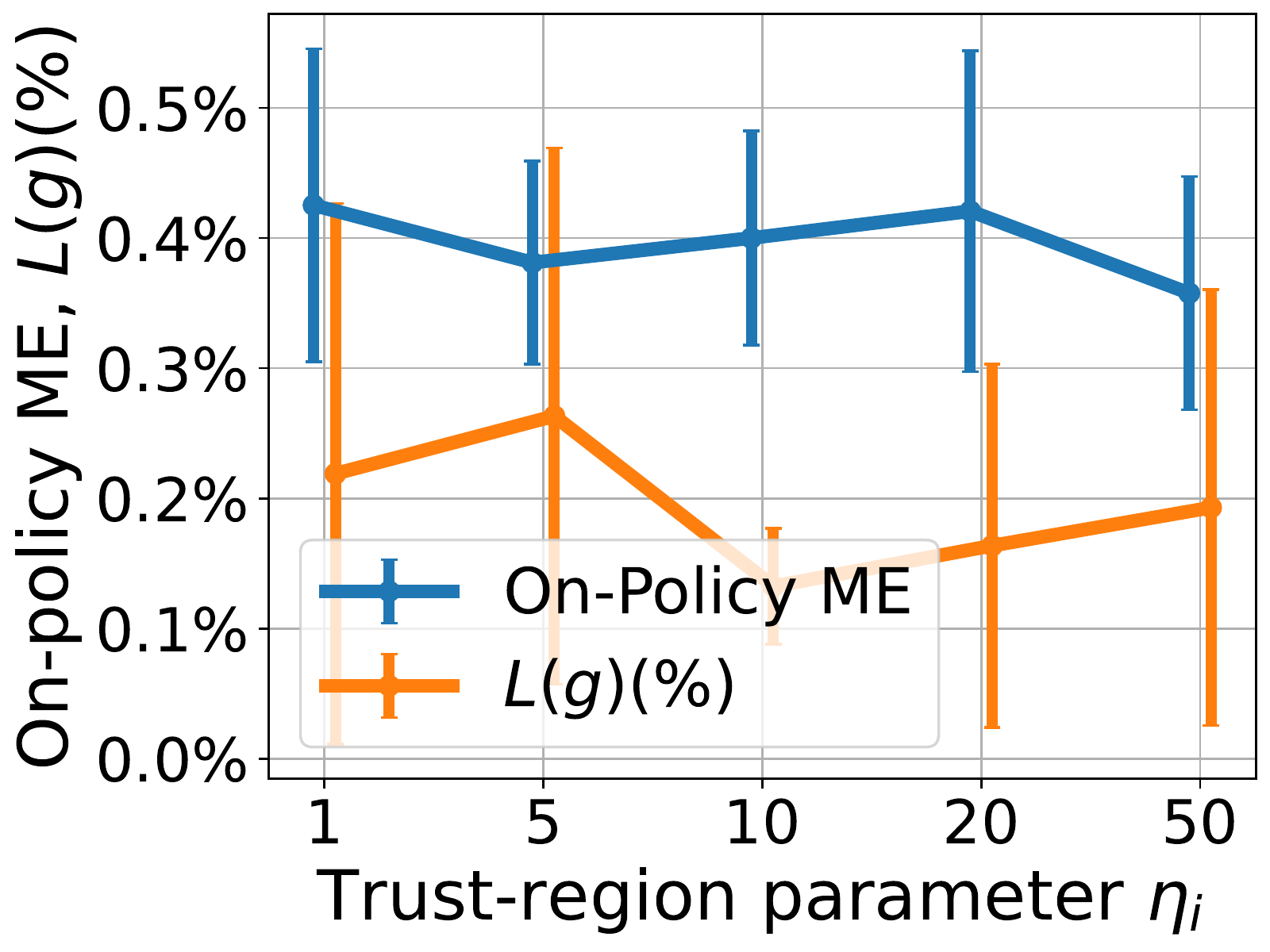}
		\caption{}
		\label{fig.ex3.ablation.4}
	\end{subfigure}
	\caption{\small Ablation study about the effect of  hyperparameter values on the algorithm performance. We here use the performance metrics \textbf{On-policy ME} in (\ref{equ.pwa.mpe})  and relative task performance gap $\mathcal{L}(\boldsymbol{g})(\%)$ in (\ref{equ.pwa.loss}) to report the algorithm performance. The experimenting system's dimensions and other settings follow Case 1 in Table \ref{table.pwa.ex2}. Each result is reported based on  ten trials, and each  trial uses a different randomly-generated full-order LCS $\boldsymbol{f}()$, as stated in the previous session. The error bars represent the standard deviation across all trials.
	} 
	\label{fig.ex3.ablation}
  \vspace{-10pt}
\end{figure*}

The results in Table \ref{table.pwa.ex2} clearly show a reliable performance of the proposed  method. For example, in Case 5, the full-order  $\boldsymbol{f}()$ has $\boldsymbol{x}\in\mathbb{R}^{20}$ and $\boldsymbol{u}\in\mathbb{R}^3$ has $\boldsymbol{\Lambda}\in \mathbb{R}^{15}$. With a random policy run on  $\boldsymbol{f}()$, the number of active hybrid modes in $\boldsymbol{f}()$ is around $2.8$k. The proposed  algorithm  learns a task-driven reduced-order LCS  $\boldsymbol{g}()$ only with around $7$ modes ($\dim \boldsymbol{\lambda}=3$). The resulting  reduced-order  $\boldsymbol{g}$-MPC controller running on the full-order $\boldsymbol{f}()$ only has around 1\% performance loss relative to running the full-order $\boldsymbol{f}$-MPC controller. The relative prediction error of the learned reduced-order LCS $\boldsymbol{g}()$  is less than $2\%$ on the on-policy ($\boldsymbol{g}$-MPC) data, while is 16.6 \% on the random policy data. Also, when run with $\boldsymbol{g}$-MPC controller, full-order $\boldsymbol{f}()$ has around 50 active modes. 
Table \ref{table.pwa.ex2} also shows that the proposed algorithm can handle high-dimensional system, such as  $\boldsymbol{x}\in\mathbb{R}^{30}$.  Additionally, corresponding to Case 7 in Table \ref{table.pwa.ex2}, a single  instance of $\boldsymbol{f}()$'s rollout   with  the full-order $\boldsymbol{f}$-MPC controller and with  the  reduced-order  $\boldsymbol{g}$-MPC controller are compared in  Fig. \ref{fig.ex2.runningmpc}. Based on the results in Table \ref{table.pwa.ex2}, we have the following conclusions.

(i) Jointly looking at the number of modes in $\boldsymbol{f}$ with random policy (fourth column),  the number of modes inside  $\boldsymbol{g}$-MPC policy (eighth column), and the relative performance gap (last column), one can  clearly see that the proposed algorithm can find a reduced-order model with \emph{multiple orders of magnitude fewer hybrid modes}  than the  full-order $\boldsymbol{f}()$, and it can  result in  a similar MPC control  performance as using  full-order MPC policy, with a performance loss less than $2\%-3\%$.

(ii) Comparison between the  mode counts in $\boldsymbol{f}$ with random policy (fourth column) and that with  $\boldsymbol{g}$-MPC policy (sixth column) can  confirm the motivating hypothesis of this paper:  a much fewer hybrid modes are actually necessary to achieve the task (here, the task  is to minimize  the given  cost function in (\ref{equ.pwa.cost})), and the vast majority of the hybrid modes in  $\boldsymbol{f}$ will remain untouched throughout the control process.

(iii) Notably, comparing  the relative model error of  the learned reduced-order LCS $\boldsymbol{g}()$ on   random policy data (fifth column) and only on the $\boldsymbol{g}$-MPC policy data (seventh column), one can conclude that the learned reduced-order LCS $\boldsymbol{g}()$ attains higher  validity on the task-relevant data. This sufficiently shows the success of our task-driven hybrid model reduction.

All the above results and  analysis clearly confirm that the effectiveness and efficiency of the proposed task-driven hybrid model reduction method.
Also, attention needs to be paid to Cases 3-6. Here,  under the same other conditions, we used an increasingly complex reduced-order LCS $\boldsymbol{g}$  from $\dim\boldsymbol{\lambda}=1$ to  $\dim\boldsymbol{\lambda}=5$. The results  show that increasing the hybrid mode budget in $\boldsymbol{g}()$ can lead to a performance improvement, although small, in the model  accuracy (seventh column) and the relative task performance gap (last column).


\subsection{Effect of Hyperparameter Settings}\label{section.pwa.param}

We conduct the ablation study to investigate the effect of  hyperparameter settings in Table \ref{table.pwa.setting} on the algorithm performance. We still use the \textbf{On-Policy ME$(\boldsymbol{g})(\%)$} in (\ref{equ.pwa.mpe}) and the relative task performance gap $\mathcal{L}(\boldsymbol{g})(\%)$ in (\ref{equ.pwa.loss}) to report the results. The  dimensions of the full-order and reduced-order models and other settings follows Case 1 in Table \ref{table.pwa.ex2}. The results are in Fig. \ref{fig.ex3.ablation}.
The results show that the learning performance is quite robust against a large range  of hyperparameter values in Table \ref{table.pwa.setting}.   Fig. \ref{fig.ex3.ablation.3} suggests that using a larger buffer size would slightly lower the final task performance gap, although not much improving the accuracy of the reduced-order model. Fig. \ref{fig.ex3.ablation.4} indicates that the choice of the trust region parameter $\eta_i$ does not significantly influence the learning performance. Overall, Fig. \ref{fig.ex3.ablation} suggests that setting algorithm hyperparameters is not difficult in practice.

\section{Three-Finger Dexterous Manipulation}\label{section.trifinger}
In this section, we will apply the proposed method to solve the three-fingered robotic hand manipulation \cite{wuthrich2020trifinger}. The Python codes to reproduce all the following results are available at \url{https://github.com/wanxinjin/Task-Driven-Hybrid-Reduction}.

\subsection{Three-Finger Robotic Hand Manipulation}

As illustrated in Fig. \ref{fig.tf}, the three-finger robotic hand manipulation system includes three 3-DoF robotic fingers and a  cube with a table. The  goal is to find a control  policy for the three fingers to moves the cube to given target poses. The entire simulation environment is based on MuJoCo physics engine \cite{todorov2012mujoco}. This paper considers two specific tasks.

\begin{figure}[h]
     \centering
         \centering
         \includegraphics[width=0.46\textwidth]{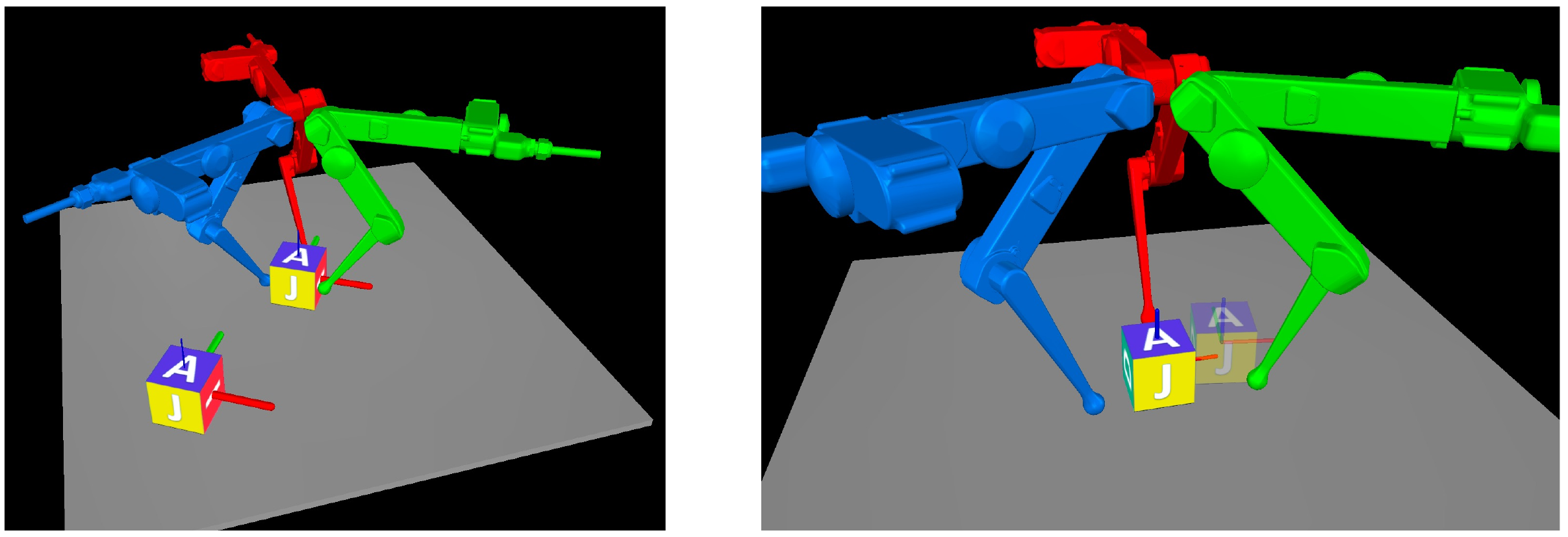}
        \caption{\small Three-finger dexterous manipulation tasks. Left: the three robotic fingers need to turn a cube to a random target orientation, given by a reference in the left corner. Right: the three  fingers need to move the cube to a  target pose with random position and orientation, given by the shaded reference.  The simulation environment uses MuJoCo physics engine \cite{todorov2012mujoco}.
        }
        \label{fig.tf}
\end{figure}

\textbf{Cube Turning Task:}  As shown in the left panel of Fig. \ref{fig.tf}, the cube has one degree of freedom (DoF) relative to the table: it can only rotate around a fixed vertical axis on the table. There is friction between the cube and table and also damping in the  joint of  cube rotation. Three fingers need to  turn the cube to any random target orientation $\alpha^\goal$, sampled from a uniform distribution:
    \begin{equation}\label{equ.tf.task1}
    {{\boldsymbol{\beta}}}=\alpha^\goal  \sim U[-1.5, \,\, 1.5] \quad \text{(radius)}
    \end{equation}
For visualization, the target  is shown in the bottom left corner.

\smallskip
\textbf{Cube Moving Task:}  As shown in the right panel of Fig. \ref{fig.tf}, the cube has 6 DoFs relative to the table: it is a free object on the table. The three fingers need to move the cube to align it to any random  target pose  $\boldsymbol{{\boldsymbol{\beta}}}=(\boldsymbol{p}^\goal, \alpha^\goal)$ on the table, where  $\boldsymbol{p}^{\goal}\in\mathbb{R}^2$  (center of mass) is the cube's target   xy-position  and  $\alpha^\goal$ is the cube's target orientation angle on the table, both sampled from uniform distribution 
\begin{equation}\label{equ.tf.task2}
\begin{aligned}
&\boldsymbol{p}^\goal\sim U[-\boldsymbol{p}_{\max}, \,\, \boldsymbol{p}_{\text{max}}], \quad \boldsymbol{p}_{\text{max}}=[0.06, \,\, 0.06]\tran (\text{m}),
\\
&\qquad\qquad\qquad\alpha^\goal \sim U[-0.5, \,\, 0.5] \quad (\text{rad}).
\end{aligned}
\end{equation}


The challenge of the above three-finger manipulation tasks lies in that the system  contains a large number of potential contact interactions that need to reason about. For example, (i) the contact interaction between each finger and the cube has three modes: separate, stick, and slip; (ii) each fingertip needs to reason which of cube faces to contact with; (iii) the contact interaction between the table and cube also contains at least three modes.  
Thus, the full-order dynamics, although unknown, contains an estimated \emph{thousands} of hybrid modes. Further, it will become  even more challenging  if one aims to perform real-time closed-loop control on the three-finger manipulation system to achieve the given tasks.

In this following, we will focus on solving the above three-finger manipulation tasks {without any prior knowledge} about the  three-finger manipulation system, e.g., geometry, physical properties, etc. We will apply the proposed method to learn a task-driven reduced-order LCS for real-time closed-loop MPC on the  three-finger manipulation to accomplish  the above  tasks. Note that different from our  application to synthetic hybrid systems in previous section, we here do not have a true hybrid model $\boldsymbol{f}()$ (and $\boldsymbol{f}$-MPC) for ground truth comparison.

\subsection{Experiment Settings}
Before proceeding, we here clarify the state and input spaces in the reduced-order LCS model $\boldsymbol{g}()$ in (\ref{equ.lcs.reduced}) and define the cost functions for the above two manipulation tasks. 

\subsubsection{Reduced-Order LCS Model}
We select the state space of the three-finger manipulation system as
\begin{equation}\label{equ.state}
    \boldsymbol{x}=
    \begin{bmatrix}
     \boldsymbol{p}_\obj,&\alpha_\obj, & \boldsymbol{p}_\ft
    \end{bmatrix}\tran
    \in\mathbb{R}^{9},
\end{equation}
where $\boldsymbol{p}_\ft\in\mathbb{R}^6$ is the xy positions of the three fingertips,  $\boldsymbol{p}_\obj\in\mathbb{R}^2$ is the xy position (of center of mass) of the object, and $\alpha_\obj$ is the planar orientation angle of the cube.
The input space of the three-finger manipulation system is 
\begin{equation}\label{equ.control}
    \boldsymbol{u}=\Delta \boldsymbol{p}_\ft\in\mathbb{R}^6,
\end{equation}
which includes  the incremental position of each fingertip. We  use operational space control (OSC) \cite{wensing2013generation} in the lower  level to map from $\boldsymbol{u}$ to the joint torque of each finger, also the OSC controller regularizes the z (vertical) position of each fingertip to be  constant. The OSC control frequency is 10 Hz. 
In the reduced-order LCS model $\boldsymbol{g}()$ in (\ref{equ.lcs.reduced}),  we set  
\begin{equation}\label{equ.tf.lamdim}
    \dim\boldsymbol{\lambda}= 5
\end{equation}
for both manipulation tasks.
This means that the reduced-order model can maximally represent $2^5=32$ hybrid modes, which are far fewer than the estimated thousands of  modes in the full-order dynamics of the three-finger manipulation system. The selection of $\dim \boldsymbol{\lambda}$ will be discussed later in Section \ref{trifinger.discussion}.

\begin{figure*}[h]
	\centering
		\begin{subfigure}{.24\textwidth}
		\centering
		\includegraphics[width=\linewidth]{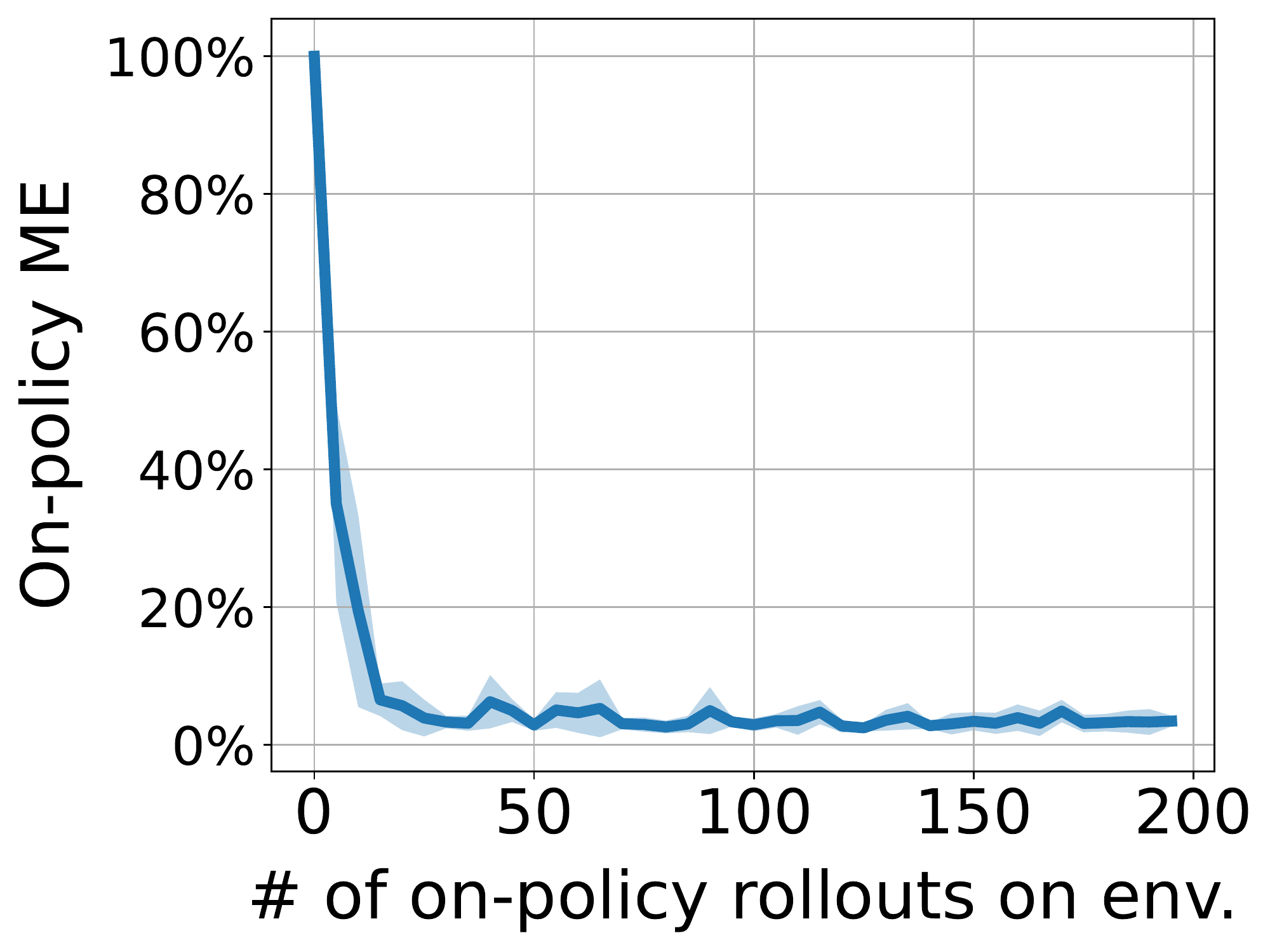}
		\caption{}
		\label{fig.tf.task1.plots.1}
	\end{subfigure}
		\begin{subfigure}{.24\textwidth}
		\centering
		\includegraphics[width=\linewidth]{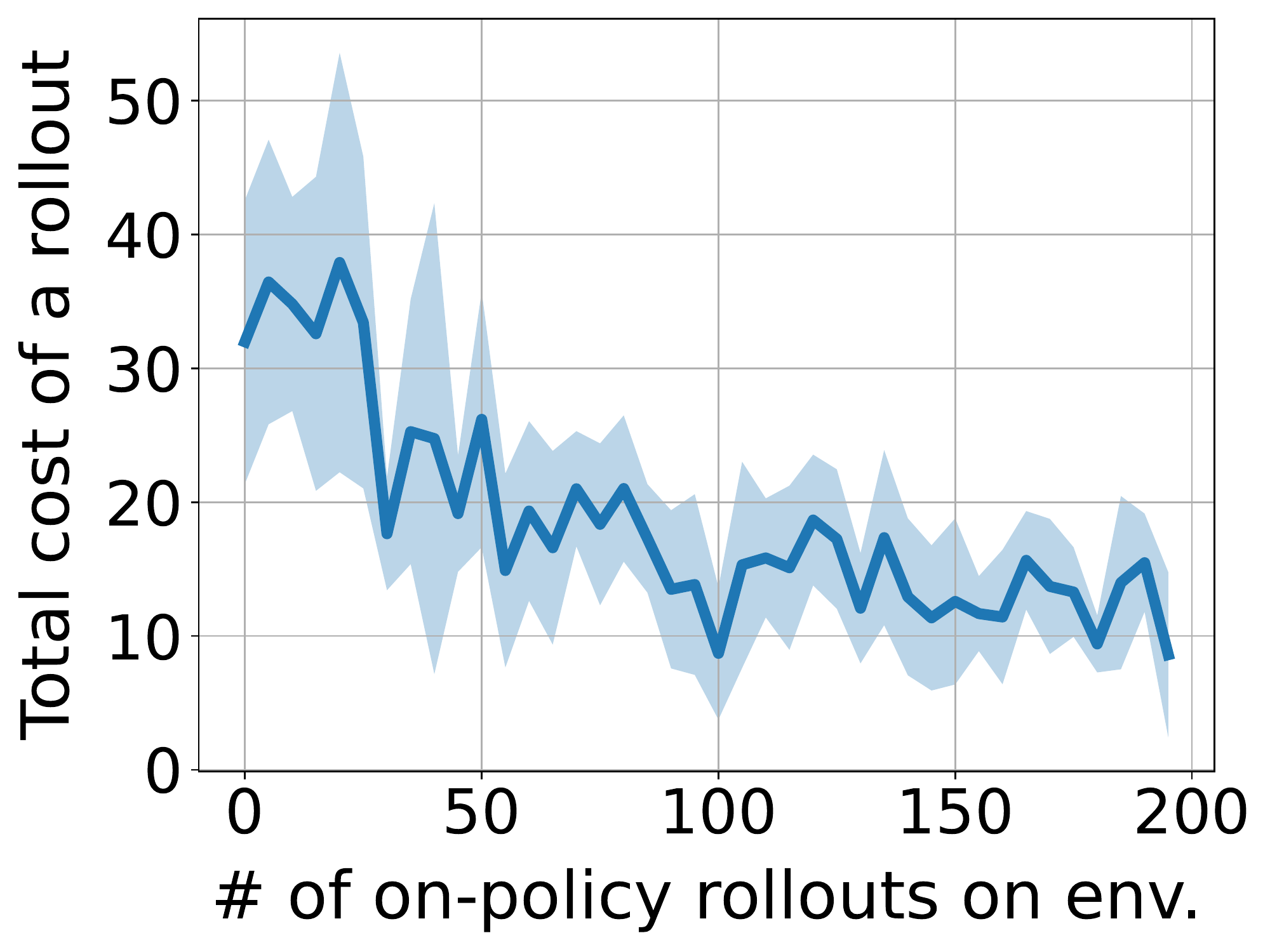}
		\caption{}
		\label{fig.tf.task1.plots.2}
	\end{subfigure}
	\begin{subfigure}{.24\textwidth}
		\centering
		\includegraphics[width=\linewidth]{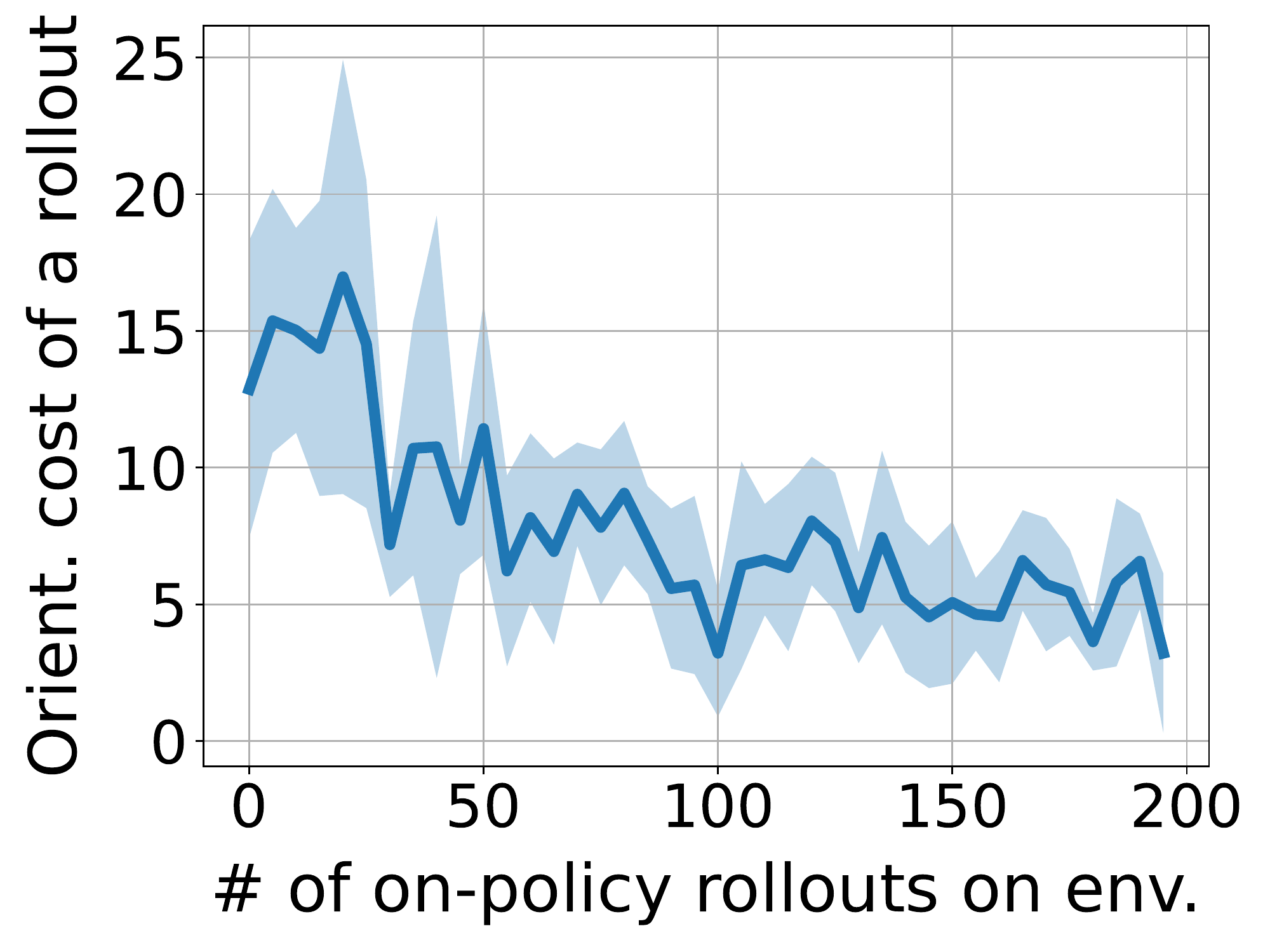}
		\caption{}
		\label{fig.tf.task1.plots.3}
	\end{subfigure}
	\begin{subfigure}{.24\textwidth}
		\centering
		\includegraphics[width=\linewidth]{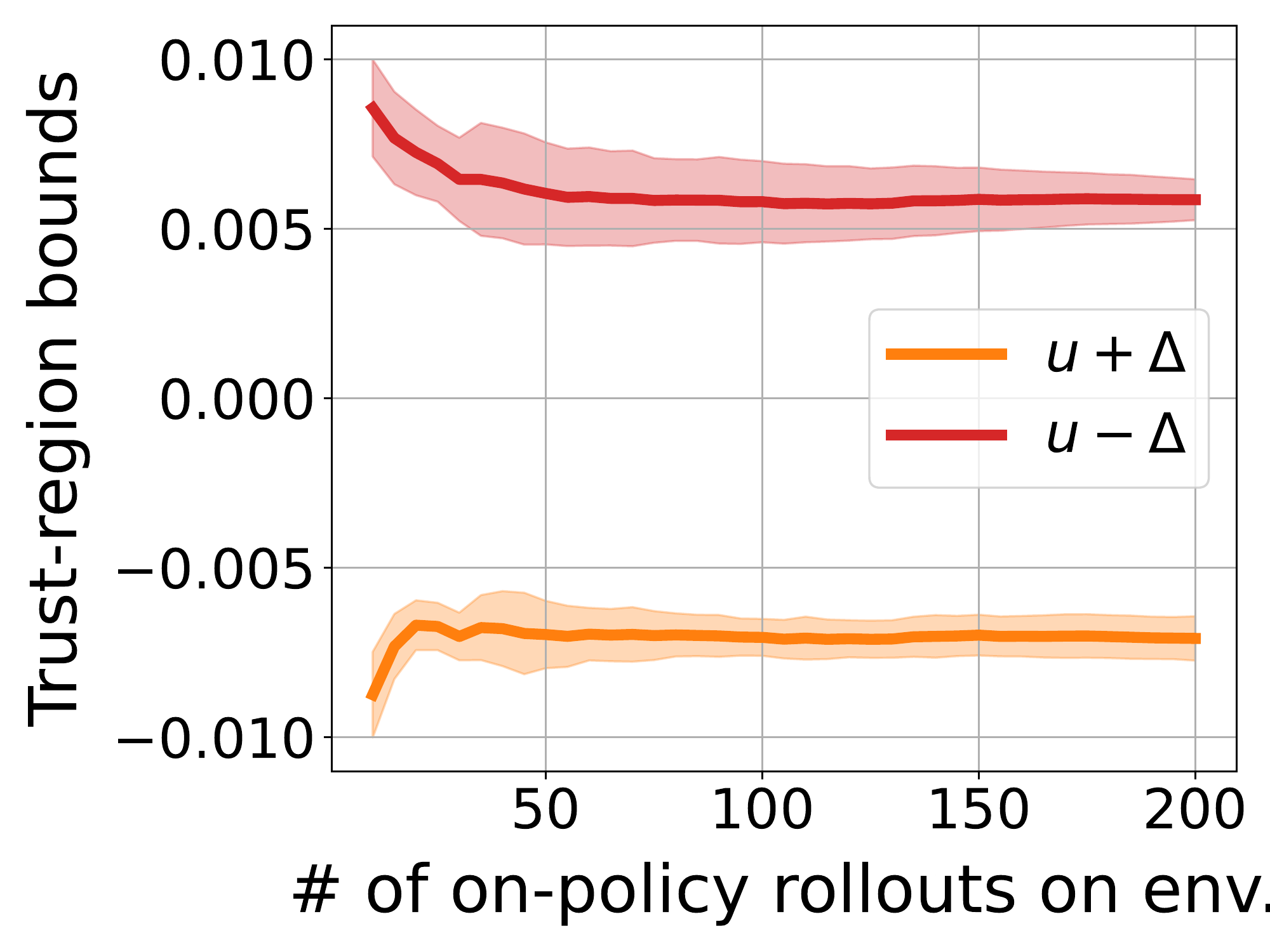}
		\caption{}
		\label{fig.tf.task1.plots.4}
	\end{subfigure}
	\caption{\small
	Learning curves of the three-finger manipulation system for the Cube Turning task. Each curve is the average of five random seeds, and its shaded area shows the  standard deviation. All results  here are shown on an on-policy rollout basis, and each on-policy rollout  is a result of running the reduced-order  $\boldsymbol{g}$-MPC controller  on the three-finger system (environment), i.e., the (unknown) full-order $\boldsymbol{f}()$. Detailed explanations are given in text.
	} 
	\label{fig.tf.task1.plots}
\vspace{-10pt}
\end{figure*}

\subsubsection{Cost Functions}
For given manipulation tasks,  we can simply define a continuous cost function as the sum of three terms: (i) the distance of the object to the goal, (ii) control effort, and (iii) the distance of the actuated fingers to the object (which is part of the state variable) to encourage the contact. Thus, we define the following quadratic cost function $J_{\boldsymbol{\beta}}$
\begin{equation}\label{equ.tf.cost}
    J_{\boldsymbol{{\boldsymbol{\beta}}}}=\sum\nolimits_{t=0}^{T-1} c_{\boldsymbol{{\boldsymbol{\beta}}}}(\boldsymbol{x}_t, \boldsymbol{u}_t)+h_{\boldsymbol{{\boldsymbol{\beta}}}}(\boldsymbol{x}_T),\quad \boldsymbol{\beta}\sim p(\boldsymbol{{\boldsymbol{\beta}}})
\end{equation}
where $p(\boldsymbol{{\boldsymbol{\beta}}})$ is  (\ref{equ.tf.task1}) for the Cube Turning task and (\ref{equ.tf.task2}) for the  Cube Moving task, and 
\begin{equation}\label{equ.tf.cost1}
\begin{aligned}
        c_{\boldsymbol{{\boldsymbol{\beta}}}}=&w_1^c\norm{\boldsymbol{p}_\ft -\boldsymbol{p}_\obj}^2+w_2^c\norm{\boldsymbol{p}_\obj-\boldsymbol{p}^{\goal}}^2\\
        &+w_3^c(\alpha_\obj-\alpha^\goal)^2+0.01\norm{\boldsymbol{u}}^2,\\[4pt]
        h_{\boldsymbol{{\boldsymbol{\beta}}}}= &w_1^h\norm{\boldsymbol{p}_\ft -\boldsymbol{p}_\obj}^2+w_2^h\norm{\boldsymbol{p}_\obj-\boldsymbol{p}^{\goal}}^2\\
        &+w_3^h(\alpha_\obj-\alpha^\goal)^2.
\end{aligned}
\end{equation}
Here, 
the cost term $\norm{\boldsymbol{p}_\ft -\boldsymbol{p}_\obj}^2$ penalizes the distance between the fingertips and  center of the cube, i.e. encouraging contact between fingertips and cube; $\norm{\boldsymbol{p}_\obj-\boldsymbol{p}^{\goal}}^2$ and $(\alpha_\obj-\alpha^\goal)^2$ are the squared distance to the target position or orientation, respectively; and $0.01\norm{\boldsymbol{u}}^2$ penalizes the control cost. $\boldsymbol{w}^c=[w_1^c, w_2^c, w_3^c]\tran$ and  $\boldsymbol{w}^h=[w_1^h, w_2^h, w_3^h]\tran$ are the cost weights, whose values will be given later. An extended discussion of different choices of the cost weights will be given in later Section \ref{trifinger.discussion}.

\smallskip
The other hyperparameters of  Algorithm \ref{algorithm1} are listed in the following table, which largely follows the ones in Table \ref{table.pwa.setting} for the previous  synthetic  system examples (recall that the discussion of the hyperparameter settings is in Section \ref{section.pwa.param}).

\begin{table}[h]
\begin{center}
\caption{Hyperparameters  for three-finger manipulation}
\label{table.tf.setting}
\begin{threeparttable}
\begin{tabular}{l l l}
    \toprule
    \textbf{Parameter\tnote{1}} & Symbol & Value  \\ 
    \midrule
      MPC horizon   &        $T$ &5   \\
      Rollout horizon  &     $H$ & $20$   \\ 
      \# of new  rollouts added to buffer per iter. &  $R_{\text{new}}$ & 5    \\
      Maximum buffer size &    $R_{\text{buffer}}$ & 200 rollouts    \\
      Trust region parameter  & $\eta_i$ & 1.0  \quad$\forall i$  \\
      Initial guess $\boldsymbol{\theta}$ in (\ref{equ.lcs.param})  for  $\boldsymbol{g}()$ &  $\boldsymbol{\theta}_0$ & $U[-0.5, 0.5]$ \\ 
      \bottomrule
\end{tabular}
\begin{tablenotes}
\item[1] Other settings not listed here will be stated in text. 
\end{tablenotes}
\end{threeparttable}
\end{center}
\end{table}

\subsection{Cube Turning Task}\label{section.tf.task1}
This session presents the results and analysis for solving the Cube Turning manipulation task. In this task, we set cost function weights in (\ref{equ.tf.cost1}) as
\begin{equation}\label{equ.cost1.weights}
\begin{aligned}
&\boldsymbol{w}^c=\begin{bmatrix}
10.0 & 0.0& 2.0 
\end{bmatrix}\tran,\\
&\boldsymbol{w}^h=\begin{bmatrix}
2.0 & 0.0& 10.0
\end{bmatrix}\tran.
\end{aligned}
\end{equation}
The above weight values are not deliberately picked. In fact, the  performance is not sensitive to the choice of  $\boldsymbol{w}^c$ and $\boldsymbol{w}^h$. As will be discussed  in Section \ref{trifinger.discussion}, the similar learning and  task performance permits a wide selection of  weight values.

\subsubsection{Results} \label{section.tf.task1.result}
The key learning curves are shown in Fig. \ref{fig.tf.task1.plots}, where each curve is the average of five random-seed trials and the shaded area indicates the  standard deviation.
In all panels, x-axis shows the total number of on-policy ($\boldsymbol{g}$-MPC controller) rollouts of the environment, which is proportional to the learning iteration (i.e., each iteration collects 5 new on-policy rollouts). Specifically, Fig. \ref{fig.tf.task1.plots.1} shows the relative prediction error of the  reduced-order LCS  $\boldsymbol{g}$() evaluated on the on-policy rollout data, defined in (\ref{equ.pwa.mpe}), where $\boldsymbol{f}(\boldsymbol{x},\boldsymbol{u}^{\boldsymbol{g}\text{-MPC}})$  is a direct  observation of  the next state of the environment. Fig. \ref{fig.tf.task1.plots.2} shows the total cost of a rollout from running $\boldsymbol{g}$-MPC controller on the environment, defined in (\ref{equ.pwa.mpccost}). Fig. \ref{fig.tf.task1.plots.3}  shows the orientation cost of a rollout from running $\boldsymbol{g}$-MPC controller in the environment, defined as
\begin{equation}
    \label{equ.tf.ori_cost}
E_{{\boldsymbol{\beta}}\sim p({\boldsymbol{\beta}})} \E_{\boldsymbol{x}\sim p_{\boldsymbol{\beta}}(\boldsymbol{x}_0)}\sum\nolimits_{t=0}^{H}(\alpha_{\obj, t}-\alpha^\goal)^2.
\end{equation}
 Fig. \ref{fig.tf.task1.plots.2} and Fig. \ref{fig.tf.task1.plots.3} show the very similar pattern because the orientation cost term  $(\alpha_\obj-\alpha^\goal)^2$  dominates in (\ref{equ.tf.cost1}) relative to other cost terms in scale.
Fig. \ref{fig.tf.task1.plots.4} shows the trust region upper bound $\boldsymbol{\bar{u}}+\boldsymbol{\Delta}$ and lower bound $\boldsymbol{\bar{u}}-\boldsymbol{\Delta}$ for the first component in the control input vector. 

Some  quantitative results that are not have shown in Fig. \ref{fig.tf.task1.plots} are given in Table \ref{table.tf.task1.table1}. In the second row of Table \ref{table.tf.task1.table1}, the cube's terminal orientation error $|\alpha_{\obj, H}-\alpha^\goal|$ is calculated at the end (at time step $H$) of a rollout. 
In the last row of Table \ref{table.tf.task1.table1}, we   report the  robustness of the $\boldsymbol{g}$-MPC controller against  external disturbance torques added to the cube during $\boldsymbol{g}$-MPC policy rollout. Here, we apply a  3D external   disturbance torque, sampled from $\boldsymbol{\tau}_{\text{disturb}}\sim U[-{\tau}^{\text{mag}}_{\text{disturb}}, {\tau}^{\text{mag}}_{\text{disturb}}]$, during each time interval ($0.1$s) of  rollout steps. We increase the disturbance magnitude ${\tau}^{\text{mag}}_{\text{disturb}}$ until the resulting $\boldsymbol{g}$-MPC rollout has an average cube terminal orientation error  $|\alpha_{\obj, H}-\alpha^\goal|\geq0.3$ (rad).   We report the result by calculating the  maximum angular acceleration of disturbance:  $\frac{{\tau}^{\text{mag}}_{\text{disturb}}}{{I}_{\obj}}$, with ${I}_{\obj}$ the cube inertia.

\begin{table}[h]
\begin{center}
\caption{Performance of  learned $\boldsymbol{g}$-MPC for Cube Turning}
\label{table.tf.task1.table1}
\begin{threeparttable}
\begin{tabular}{l l}
    \toprule
    \textbf{Results} &  Value  \\ 
    \midrule
      Total number of hybrid modes in  $\boldsymbol{g}$-MPC   &  (around)  14 \\[2pt]
      Cube terminal orientation error  $|\alpha_{\obj, H}-\alpha^\goal|$ & 0.06 $\pm$ 0.02 (rad)\\[2pt]
      Terminal  error (relative)  $\frac{(\alpha_{\obj, H}-\alpha^\goal)^2}{(\alpha^\goal)^2}$ & 3.4\% $\pm$ 2.3\%\\[6pt]
      Total training time of the algorithm\tnote{1} & $4.1 \pm 0.1$ mins\\[2pt]
      Total \# of environment samples in training     & 4k\\[4pt]
      Running frequency of reduced-order $\boldsymbol{g}$-MPC\tnote{1} & $>$50 Hz\\[4pt]
      \% of stick-slip-separate modes in rollouts & (approx.) $>70\%$\tnote{2}\\[5pt]
          {\begin{tabular}{@{}l@{}} Maximum
      $\frac{{\tau}^{\text{mag}}_{\text{disturb}}}{{I}_{\obj}}$ until $|\alpha_{\obj, H}-\alpha^\goal|\geq0.3$   \end{tabular}}  &  5000 rad/s$^2$\\[2pt]
      \bottomrule
\end{tabular}
\begin{tablenotes}
\item[1] The experiments are tested on MacBook Pro with M1 Pro chip.
\item[2] This is approximately calculated by observing the environment rollouts with the learned reduced-order $\boldsymbol{g}$-MPC.
\end{tablenotes}
\end{threeparttable}
\end{center}
\end{table}

\begin{figure*}[h]
	\centering
		\begin{subfigure}{.23\textwidth}
		\smallskip
		\centering
		\includegraphics[width=\linewidth]{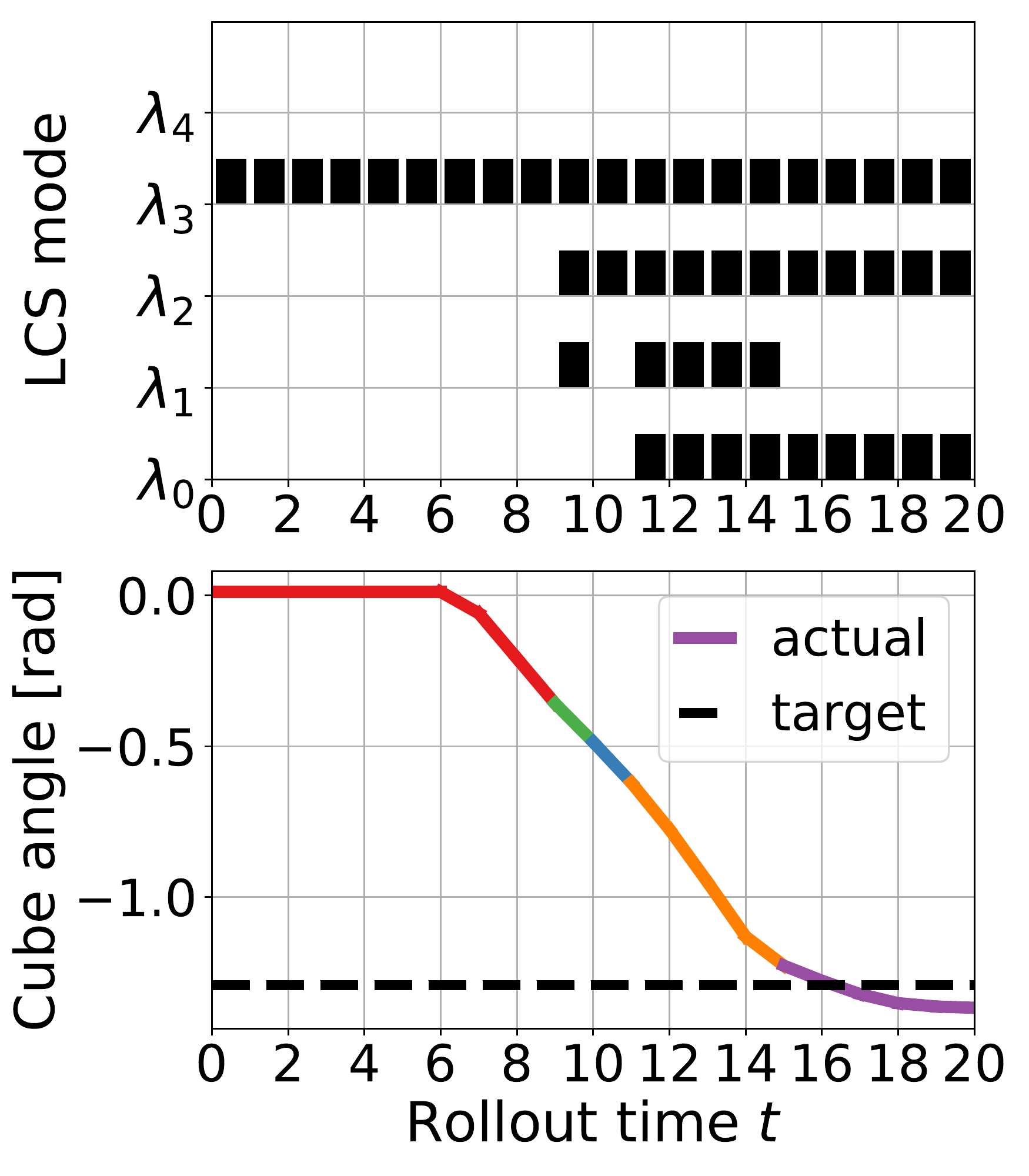}
		\caption{}
		\label{fig.tf.task1.demo11}
	\end{subfigure}
	\hfill
		\begin{subfigure}{.73\textwidth}
		\centering
		\includegraphics[width=\linewidth]{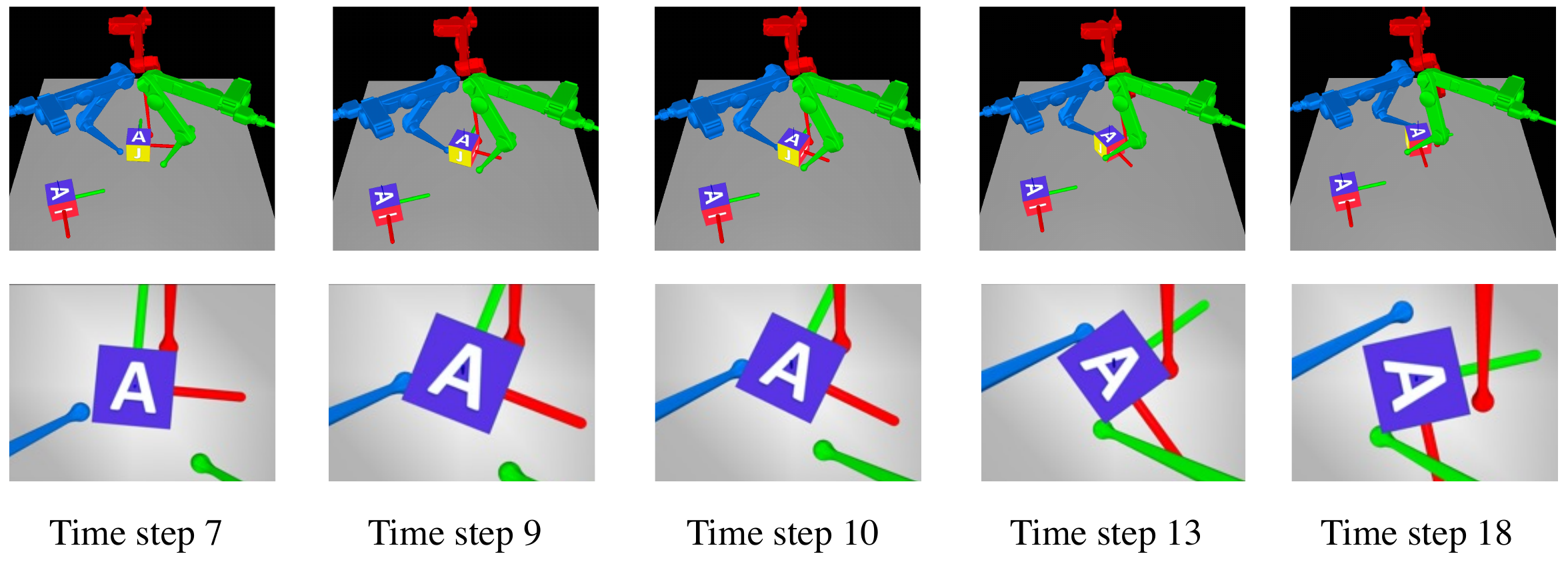}
		\caption{}
		\label{fig.tf.task1.demo12}
	\end{subfigure}
	\centering
		\begin{subfigure}{.225\textwidth}
		\centering
		\includegraphics[width=\linewidth]{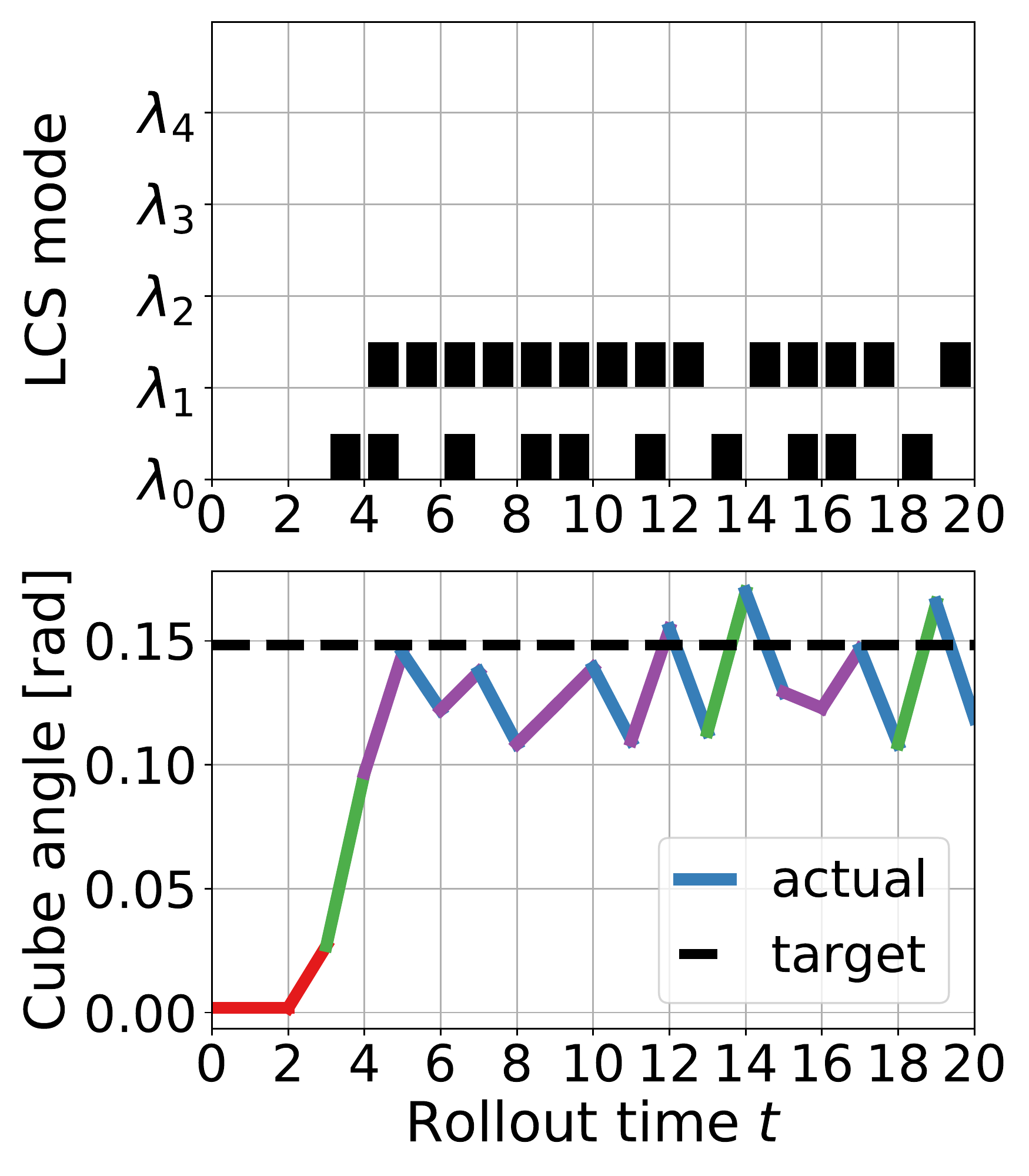}
		\caption{}
		\label{fig.tf.task1.demo21}
	\end{subfigure}
	\hfill
		\begin{subfigure}{.73\textwidth}
		\centering
		\includegraphics[width=\linewidth]{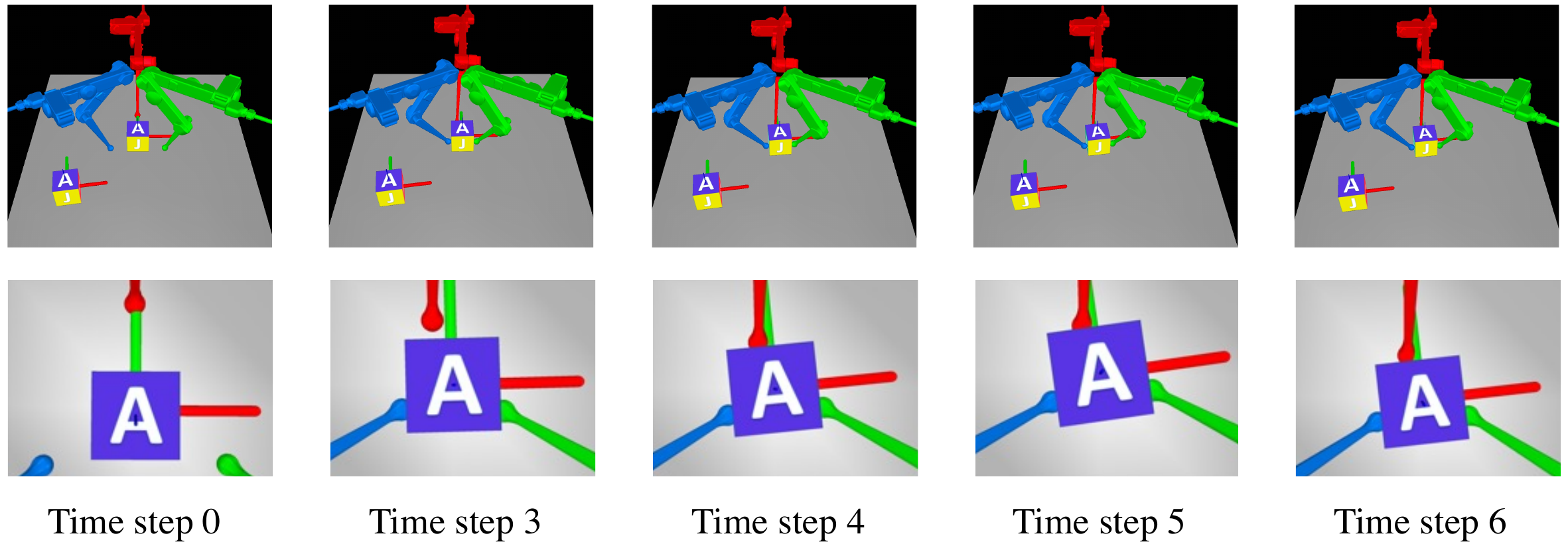}
		\caption{}
		\label{fig.tf.task1.demo22}
	\end{subfigure}
	\caption{\small
	Two rollouts of running the learned reduced-order  $\boldsymbol{g}$-MPC controller on the three-finger system. (a) and (b):  the  target orientation is $\alpha^\goal=-1.29$ (rad). 	(c) and (d):  $\alpha^\goal=-0.148$. 
    The upper panel of	(a) or (c) shows the  mode activation $\sign(\boldsymbol{\lambda})$ in $\boldsymbol{g}()$ over rollout time. Here,  black bricks show $\boldsymbol{\lambda}>0$ and blank  $\boldsymbol{\lambda}=0$. The bottom panel of (a) or (c) shows the trajectory of cube angle $\alpha_{\obj, t}$, where different mode activation are indicated by different colors. (b) or (d) shows the key-time-step snapshots of the simulator, corresponding to (a) or (c), respectively. Here, the upper panels show the environment snapshots, and lower panels show the zoom-in  details. Analysis are given in text and Tables \ref{table.tf.demo1.table1} and \ref{table.tf.demo1.table2}. Note that in (b) and (d) we have attached a  green-red body coordinate frame to the cube only for visualization purposes (i.e., the coordinate frame does not affect the physical contact interaction).
	} 
	\label{fig.tf.task1.demo}
\end{figure*}

Fig. \ref{fig.tf.task1.plots} and Table \ref{table.tf.task1.table1} show the  efficiency of  the proposed method to successfully solve the three-finger dexterous manipulation for the Cube Turning  task. Particularly, we have the following conclusions.

(i) The proposed algorithm learns a reduced-order model to solve the three-finger manipulation of Cube Turning task without any prior knowledge within just 5 minutes of wall-clock time, including real-time closed-loop control on the manipulation system. It only  collects around 4k (200 rollouts $\times$ 20 steps/rollout) data points from the environment.

(ii)  The learned task-driven reduced-order LCS  $\boldsymbol{g}()$ leads to a closed-loop MPC controller on the three-finger manipulation system,  achieving a high  accuracy:  the cube terminal orientation error $|\alpha_{\obj, H}-\alpha^\goal|<0.08$ (rad) and  the relative orientation error $\frac{(\alpha_{\obj, H}-\alpha^\goal)^2}{(\alpha^\goal)^2}<5\%$.

(iii) The learned  LCS  $\boldsymbol{g}()$ results in  a $\boldsymbol{g}$-MPC, which enables real-time closed-loop control on the three-finger manipulation system to achieve the task. The running frequency of $\boldsymbol{g}$-MPC is more than  $50$Hz.

(iv) The learned reduced-order LCS $\boldsymbol{g}()$  maximally contains 32 modes, and around 14 of them are active. Those 14 hybrid modes enables rich contact interactions, including   \texttt{separate}, \texttt{stick}, and \texttt{slip} between the fingertips and the cube,  and \texttt{stick},  \texttt{CCW rotational slip}, and  \texttt{CW rotational slip} between the cube and the table,  happening at different time steps. More than 70\% of rollouts contains the sequence of \texttt{stick-slip-separate} modes.  More detailed explanations will be given in the next session.

(v) The  reduced-order  $\boldsymbol{g}$-MPC controller shows high robustness against large external torque disturbances. The robustness  is  a natural benefit of  the closed-loop  implementation of $\boldsymbol{g}$-MPC controller. Such a high robust performance could also partially due to  the high stiffness gain in our lower-level OSC.

\smallskip

\begin{table*}[h]
\begin{center}
\caption{Empirical correspondence between LCS mode  activation in Fig. \ref{fig.tf.task1.demo11} and physical contact interaction in Fig.\ref{fig.tf.task1.demo12}. }
\label{table.tf.demo1.table1}
\begin{threeparttable}
\begin{tabular}{lccc}
    \toprule
     \textbf{Time $t$} & \textbf{Mode activation in $\boldsymbol{g}()$} & \textbf{Interaction between fingertips and cube}\tnote{1}  
     & \textbf{Interaction between cube and table}\tnote{1} 
     \\ 
    \midrule
      $t=0,1,..,8$ &  
            $\sign(\boldsymbol{\lambda})=[0,0,0,1,0]\tran$     & 
      {\begin{tabular}{@{}c@{}} (\texttt{R, G, B separate})   or   \\  
     (\texttt{G, B separate} and \texttt{R touching})  
      \end{tabular}} 
      &
    {\begin{tabular}{@{}c@{}} \texttt{Cube stick to table} or  \\  
        \texttt{CW rotational slip}  \\
      \end{tabular}} 
      \\[8pt]
      $t=9$ &  
            $\sign(\boldsymbol{\lambda})=[0,1,1,1,0]\tran$     & 
      \texttt{G separate}    
      and \texttt{R, B touching}   &
\texttt{CW rotational slip}
      \\[8pt]
    $t=10$ &  
            $\sign(\boldsymbol{\lambda})=[0, 0, 1, 1, 0]\tran$     & 
            \texttt{R right slip} and \texttt{G separate} and  \texttt{B stick}
& \texttt{CW rotational slip} 
      \\[8pt]
     $t=11, ... 14$ &
            $\sign(\boldsymbol{\lambda})=[1, 1, 1, 1, 0]\tran$    
            &  \texttt{R, G, B touching}  & \texttt{CW rotational slip} 
      \\[8pt]
      $t=15,..., 20$ &  
            $\sign(\boldsymbol{\lambda})=[1, 0, 1, 1, 0]\tran$     & 
             \texttt{G touching}  and  
       \texttt{R, B separate} &  \texttt{CW rotational slip}\\[8pt]
      \bottomrule
\end{tabular}
\begin{tablenotes}
\item[1] `R':`red finger', `B':`red finger', `G':`green finger', `CW’:`clockwise',  `CCW’:`counter-clockwise'. 
\end{tablenotes}
\end{threeparttable}
\end{center}
\end{table*}

\begin{table*}[h]
\begin{center}
\caption{Empirical correspondence between LCS mode  activation in Fig. \ref{fig.tf.task1.demo21} and physical contact interaction in Fig.\ref{fig.tf.task1.demo22}. }
\label{table.tf.demo1.table2}
\begin{threeparttable}
\begin{tabular}{lccc}
    \toprule
     \textbf{Time $t$} & \textbf{Mode activation in $\boldsymbol{g}()$} & \textbf{Interaction between fingertips and cube}\tnote{1}  
     & \textbf{Interaction between cube and table}\tnote{1} 
     \\
    \midrule
      $t=0,1,2$ &  
            $\sign(\boldsymbol{\lambda})=[0, 0, 0, 0, 0]\tran$     & 
      \texttt{R, G, B separate}  & 
          {\begin{tabular}{@{}c@{}} \texttt{Cube stick to table} or  \\  
      \texttt{CCW rotation slip} 
      \end{tabular}}
      \\[8pt]
      $t=3, 13, 18$ &  
            $\sign(\boldsymbol{\lambda})=[1, 0, 0, 0, 0]\tran$     & 
     {\begin{tabular}{@{}c@{}} 
     (\texttt{R separate}  and \texttt{G, B touching})  \\  
      or \texttt{R, G, B touching}  \end{tabular}} & \texttt{CCW rotational slip (large)}
      \\[8pt]
    {\begin{tabular}{@{}r@{}}
            $t=4, 6, 8, 9, 11, 15, 16$
    \end{tabular}} &  
    $\sign(\boldsymbol{\lambda})=[1, 1, 0, 0, 0]\tran$     & 
    \texttt{R, G, B touching} &
     {\begin{tabular}{@{}c@{}} 
      \texttt{CCW rotational slip (small)} \\
      \end{tabular}} 
      \\[8pt]
    {\begin{tabular}{@{}l@{}}
            $t=5, 7, 10, 12, 14, 17, 19$ 
    \end{tabular}} &  
    $\sign(\boldsymbol{\lambda})=[0, 1, 0, 0, 0]\tran$     & \texttt{R, G, B touching} &  \texttt{CW rotational slip} \\
      \bottomrule
\end{tabular}
\end{threeparttable}
\end{center}
\end{table*}

  \begin{figure*}[h]
	\centering
		\begin{subfigure}{.243\textwidth}
		\centering
		\includegraphics[width=\linewidth]{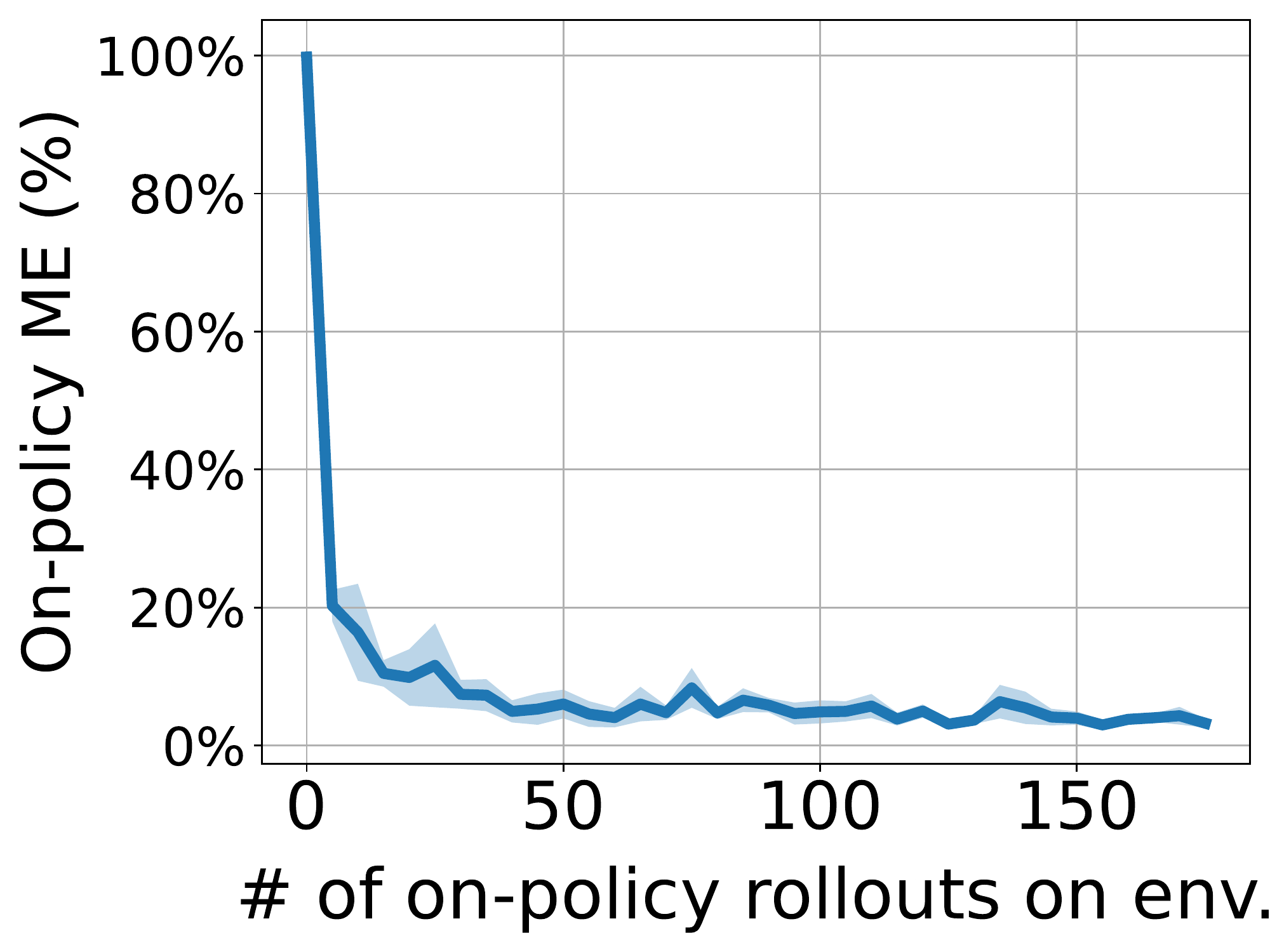}
		\caption{}
		\label{fig.tf.task2.plots.1}
	\end{subfigure}
		\begin{subfigure}{.243\textwidth}
		\centering
		\includegraphics[width=\linewidth]{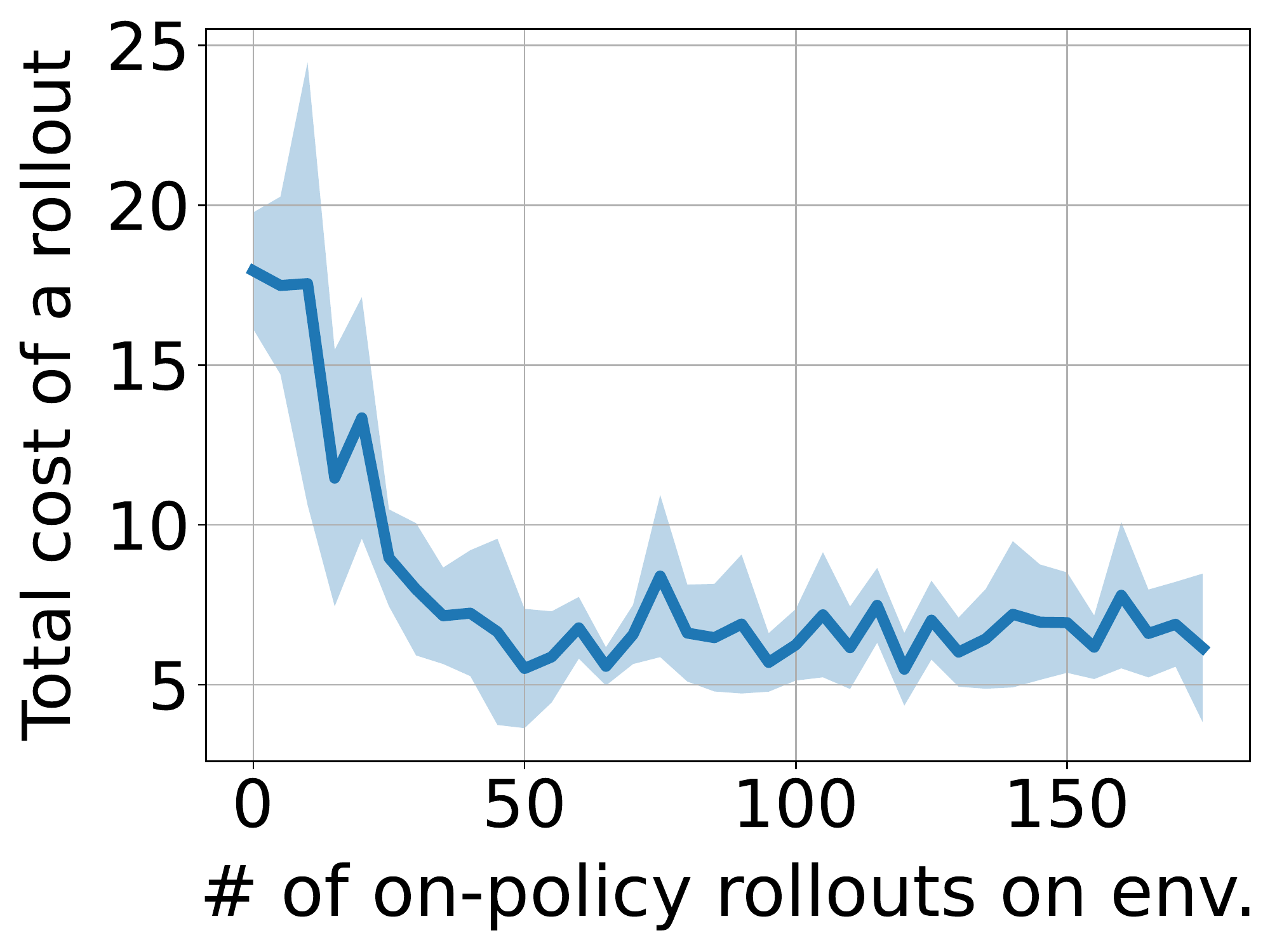}
		\caption{}
		\label{fig.tf.task2.plots.2}
	\end{subfigure}
	\begin{subfigure}{.243\textwidth}
		\centering
		\includegraphics[width=\linewidth]{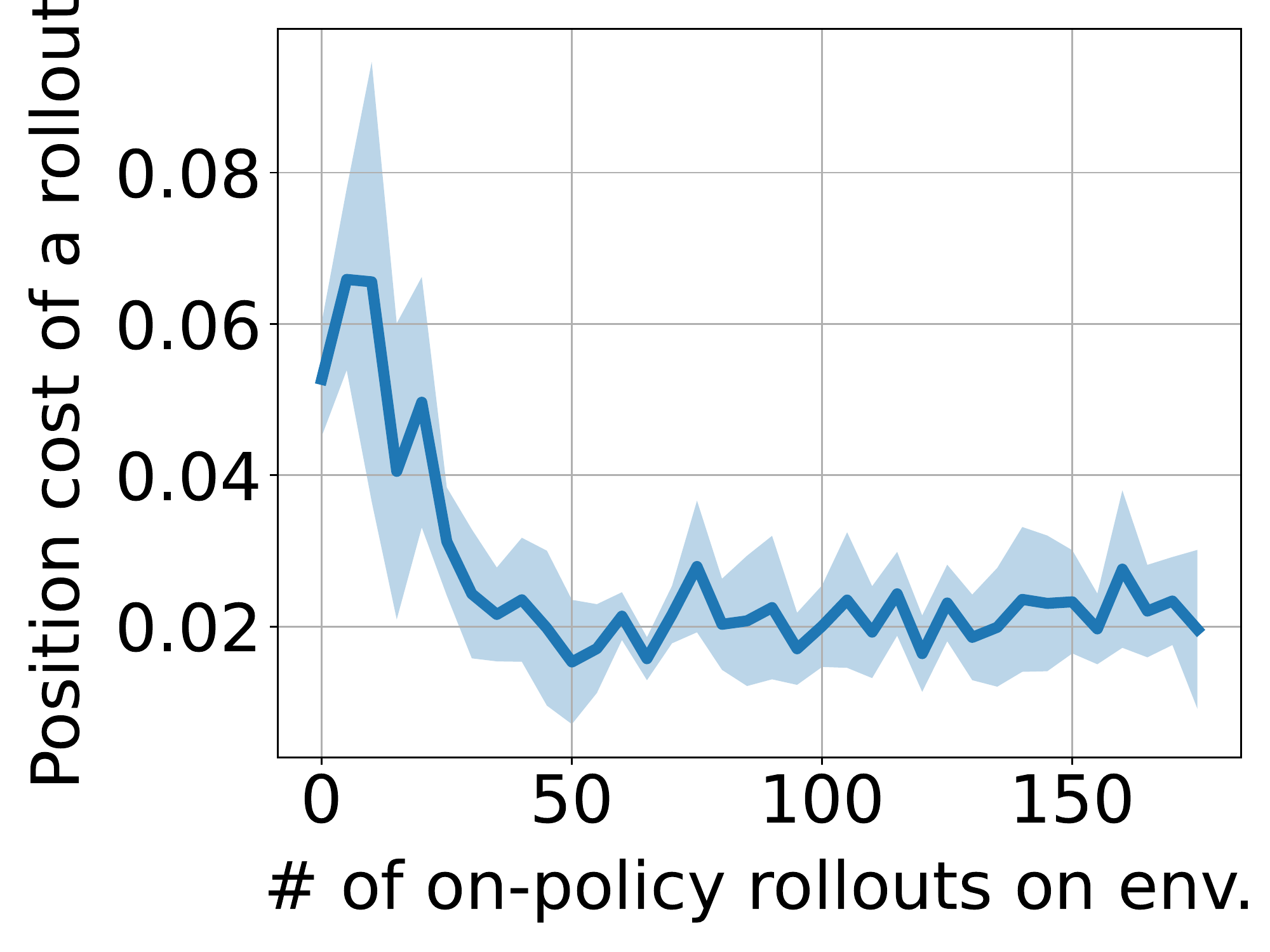}
		\caption{}
		\label{fig.tf.task2.plots.3}
	\end{subfigure}
	\begin{subfigure}{.243\textwidth}
		\centering
		\includegraphics[width=\linewidth]{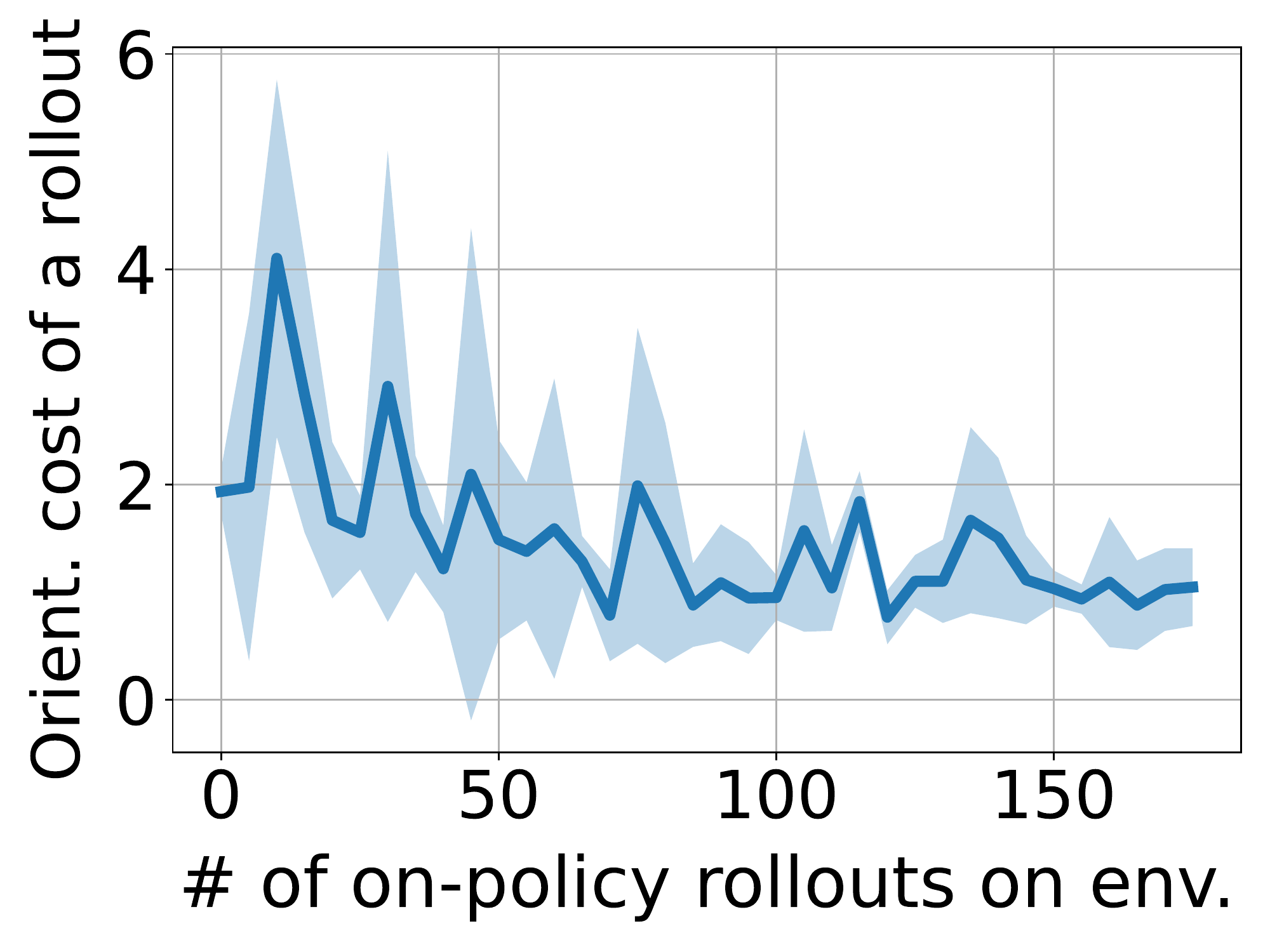}
		\caption{}
		\label{fig.tf.task2.plots.4}
	\end{subfigure}
	\caption{ \small
	Learning curves of the three-finger manipulation system for the Cube Moving task. Each curve is the average of five random seeds, and its shaded area shows the  standard deviation. All results  here are shown on an on-policy rollout basis, and each on-policy rollout  is a result of running the reduced-order  $\boldsymbol{g}$-MPC controller  on the three-finger system (environment), i.e., the (unknown) full-order $\boldsymbol{f}()$. Detailed explanations are given in text.
	} 
	\label{fig.tf.task2.plots}
 \vspace{-10pt}
\end{figure*}

\subsubsection{Analysis of Reduced-Order Hybrid Modes}\label{section.tf.task1.analysis}
In this session,  we will detail how the learned task-driven reduced-order LCS  $\boldsymbol{g}()$ enables the three-finger system to reason about the contact decision in the Cube Tuning task. 
In each row of Fig. \ref{fig.tf.task1.demo}, given a random target orientation,  we show the   rollout trajectories (left) and key-time-step snapshots (right) of the three-finger manipulation  environment by running the learned  $\boldsymbol{g}$-MPC. The rollout horizon $H=20$.

Specifically, in Fig. \ref{fig.tf.task1.demo11} and \ref{fig.tf.task1.demo12}, the   target orientation is $\alpha^\goal=-1.29$ (rad), shown by a reference  at  the lower left corner. The upper panel  in Fig. \ref{fig.tf.task1.demo11} shows the mode activation of each dimension of $\boldsymbol{\lambda}$ in $\boldsymbol{g}$-MPC. Here, the black bricks mean $\boldsymbol{\lambda}>0$, and blank means $\boldsymbol{\lambda}=0$. For exposition simplicity, we use $\sign(\boldsymbol{\lambda})$ to denote the mode activation. For example,  at time step $t=9$ is $\sign(\boldsymbol{\lambda}_9)=[0, 1, 1, 1, 0]\tran$.  The bottom panel in Fig. \ref{fig.tf.task1.demo11} shows the cube's orientation angle  trajectory, where  the segments of different colors show different mode activation. The upper row in Fig. \ref{fig.tf.task1.demo12} shows the snapshots of the simulator at some key time steps of the same rollout, and the lower row gives the zoom-in details. Physically, the red fingertip begins  touching the cube at time step 7, pushes the cube to rotate clockwise during steps 7-14 (during this period it also slips on the surface of the cube), and then separates from the cube at time step 15. The blue fingertip begins touching the cube at time step 9,  then pushes the cube to rotation closewise, and finally separates from the cube at step 15. The green fingertip  begins touching the cube at time step 11, and continue pushing the cube until the end of the rollout.
By connecting Fig. \ref{fig.tf.task1.demo11} and \ref{fig.tf.task1.demo12}, we can observe the  empirical correspondence between  $\boldsymbol{g}()$'s mode  activation in Fig. \ref{fig.tf.task1.demo11} and physical  interaction in Fig.\ref{fig.tf.task1.demo12}, listed in Table \ref{table.tf.demo1.table1}.

In Fig. \ref{fig.tf.task1.demo21} and \ref{fig.tf.task1.demo22}, the target orientation  is $\alpha^\goal=0.148$ (rad). Similar to the above description,  Table \ref{table.tf.demo1.table2} gives the empirical correspondence between the mode activation in Fig. \ref{fig.tf.task1.demo21} and physical contact interaction in Fig. \ref{fig.tf.task1.demo22}.
Table \ref{table.tf.demo1.table2} shows a more interesting connection between the  mode activation in $\boldsymbol{g}()$ and the physical contact. From  the bottom panel of Fig. \ref{fig.tf.task1.demo21}, we see the three fingers  turns the cube to the target  at time step 5, and from that on, it slightly shakes the cube around the target. This makes the contact interaction between the cube and table also change alternatively, i.e., between the \texttt{CW rotational slip} and \texttt{CCW rotational slip}. This alternative physical interactions have been captured in  the upper panel of Fig. \ref{fig.tf.task1.demo21}, where
 the mode activation from time step 5 also changes alternatively. This clearly shows the connections between  mode in the reduced-order LCS $\boldsymbol{g}()$ and  physical contacts.  For example,    the mode  $\sign(\boldsymbol{\lambda}_{t})=[0, 1, 0, 0, 0]\tran$ is  for the cube \texttt{CW rotational slip} (blue segments), while $\sign(\boldsymbol{\lambda}_{t})=[1, 1, 0, 0, 0]\tran$ and $\sign(\boldsymbol{\lambda}_{t})=[1, 0, 0, 0, 0]\tran$ are for  \texttt{CCW rotational slip} (green and purple segments).

From Tables \ref{table.tf.demo1.table1} and \ref{table.tf.demo1.table2}, we have the following comments.

(i)  The learned  LCS  $\boldsymbol{g}()$ can approximately capture the hybrid nature of the physical system. Since we limit the maximum   mode count in $\boldsymbol{g}()$  to  $32$, some physical contact interactions share the same mode activation of $\sign(\boldsymbol{\lambda})$. For example, in Table \ref{table.tf.demo1.table1}, the mode $\sign(\boldsymbol{\lambda})=[0, 0, 0, 1, 0]\tran$ captures two interactions between the cube and fingertips: (\texttt{R, G, B separate})  and   (\texttt{G, B separate} and \texttt{R touching}).

(ii) Note that Tables \ref{table.tf.demo1.table1} and  \ref{table.tf.demo1.table2} come from  empirical observation. For contact   interactions that are very similar in human eyes, there could exist unnoticeable physical differences,  leading to different modes in $\boldsymbol{g}()$.
For example, in Table \ref{table.tf.demo1.table2}, \texttt{CCW rotational slip (large)} corresponds to  $\sign(\boldsymbol{\lambda})=[1, 0, 0, 0, 0]\tran$, while \texttt{CCW rotational slip (small)} to $\sign(\boldsymbol{\lambda})=[1, 1, 0, 0, 0]\tran$.  There is no tight connection from the mode of $\boldsymbol{g}()$ to physical phenomena.

\subsection{Cube Moving Task}\label{section.tf.task2}
This session presents the results and analysis of using the proposed method to solve the Cube Moving task. In this task, we set weights in (\ref{equ.tf.cost1}) as
\begin{equation}\label{equ.tf.cost.weights.task2}
\begin{aligned}
&\boldsymbol{w}^c=\begin{bmatrix}
12.0 & 200.0& 0.2
\end{bmatrix}\tran,\\
&\boldsymbol{w}^h=\begin{bmatrix}
6.0 & 200.0& 1.0
\end{bmatrix}\tran.
\end{aligned}
\end{equation}
The above weight values are not deliberately picked. In fact, the  performance is not sensitive to the choice of  $\boldsymbol{w}^c$ and $\boldsymbol{w}^h$. As will be discussed  in Section \ref{trifinger.discussion}, the similar learning and  task performance permits a wide selection of  weight values.

\begin{table}[h]
\begin{center}
\caption{Performance of  learned $\boldsymbol{g}$-MPC for Cube Moving}
\label{table.tf.task2.table1}
\begin{threeparttable}
\begin{tabular}{l l}
    \toprule
    \textbf{Results} &  Value (mean+std)  \\ 
    \midrule
      Total number of hybrid modes in  $\boldsymbol{g}$-MPC   &    15 \\[4pt]
      Terminal position error  $\norm{\boldsymbol{p}_{\obj, H}{-}\boldsymbol{p}^\goal}$ & 0.0076 $\pm$ 0.0010 (m)\\[2pt]
       Terminal position error (rel.) $\frac{\norm{\boldsymbol{p}_{\obj, H}-\boldsymbol{p}^\goal}^2}{\norm{\boldsymbol{p}^\goal}^2}$ & 3.8\% $\pm$ 1.3\%\\[8pt]
        Terminal orientation error  $|\alpha_{\obj, H}-\alpha^\goal|$   & 0.065 $\pm$ 0.021 (rad)\\[4pt]
    Total training time     & 4.6 $\pm$ 0.4 (mins)\\[4pt]
    Total \# of environment samples in training      & $\le$4k\\[4pt]
     Running frequency of  $\boldsymbol{g}$-MPC controller   & $>$ 30 Hz\\[4pt]
   Max. $\frac{\text{w}^{\text{mag}}_{\text{disturb}}}{m_{\obj}}$ until $\norm{\boldsymbol{p}_{\obj, H}-\boldsymbol{p}^\goal}\geq0.01$   &  0.9 m/s$^2$\\[2pt]
      \bottomrule
\end{tabular}
\end{threeparttable}
\end{center}
\end{table}

\begin{figure*}[h]
	\centering
		\begin{subfigure}{0.95\textwidth}
		\centering
		\includegraphics[width=\linewidth]{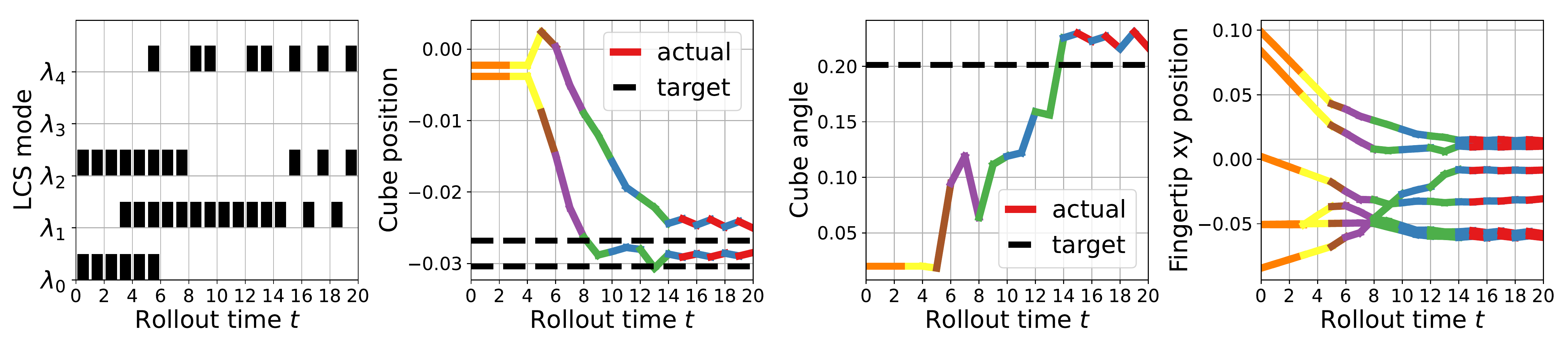}
		\caption{}
		\label{fig.tf.task2.demo11}
	\end{subfigure}
    \smallskip
	\begin{subfigure}{\textwidth}
		\centering
		\includegraphics[width=\linewidth]{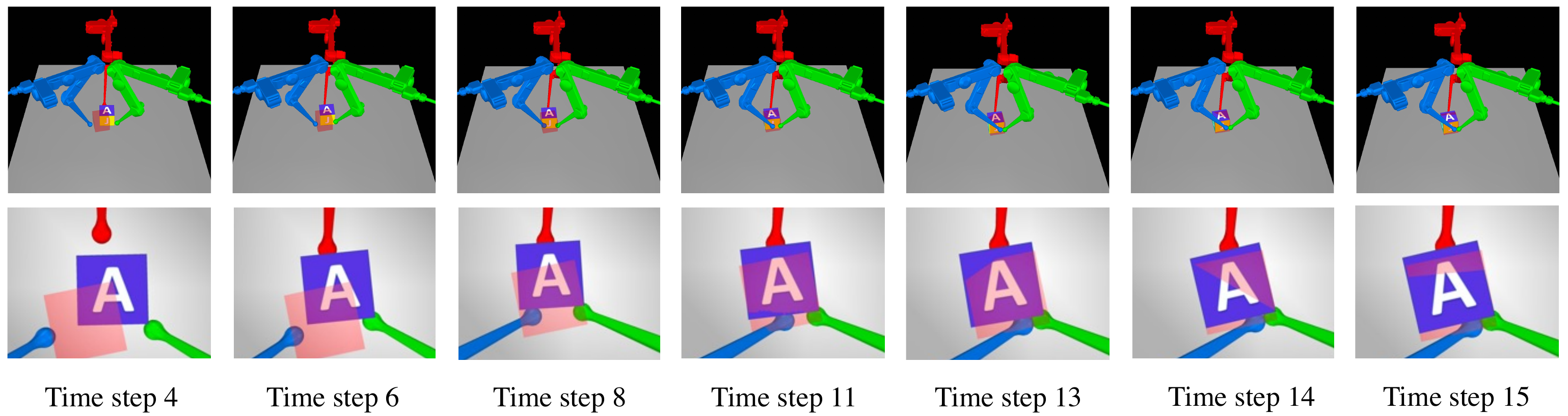}
		\caption{}
		\label{fig.tf.task2.demo12}
	\end{subfigure}
	\caption{
	\small One rollout by running the learned reduced-order  $\boldsymbol{g}$-MPC  on the three-finger manipulation system for the Cube Moving task. (a) shows the mode activation in $\boldsymbol{g}()$  (first panel), the trajectory of  the cube xy position  (second panel),  the trajectory of the cube  angle (third panel), and the trajectories of all three fingertip xy positions (fourth panel), over the duration of the rollout. Here, in the model activation  panel, black brick indicate ${\lambda}_i>0$ and blank  ${\lambda}_i=0$. In the trajectory panels,  the segments are colored differently, corresponding to different mode activation. (b) shows the snapshots of the environment  at  key time steps of the $\boldsymbol{g}$-MPC rollout. Here, the upper row of (b) shows the whole environment, and the lower row show the zoom-in  details. Explanations and analysis are given in Section \ref{section.trifinger.task2.analysis} and  Table \ref{table.tf.task2.table2}.
	} 
	\label{fig.tf.task2.demo}
 \vspace{-10pt}
\end{figure*}

\subsubsection{Results}
Similar to the previous session, we present the learning curves of the three-finger manipulation system in Fig. \ref{fig.tf.task2.plots}, and list the key  results  in Table \ref{table.tf.task2.table1}. Each result is the average of five random seeds. Specifically, Fig. \ref{fig.tf.task2.plots.1} shows the relative model   error of the learned  LCS $\boldsymbol{g}()$ evaluated on the $\boldsymbol{g}$-MPC policy data, defined in (\ref{equ.pwa.mpe}).
Fig. \ref{fig.tf.task2.plots.2}  shows the total cost of a rollout with the $\boldsymbol{g}$-MPC controller, defined in (\ref{equ.pwa.mpccost}). 
Fig. \ref{fig.tf.task2.plots.3} shows the   position cost of a  rollout with the $\boldsymbol{g}$-MPC controller, defined as 
\begin{equation}\label{equ.tf.pos_cost}
E_{{\boldsymbol{\beta}}\sim p({\boldsymbol{\beta}})} \E_{\boldsymbol{x}\sim p_{\boldsymbol{\beta}}(\boldsymbol{x}_0)}\sum\nolimits_{t=0}^{H}\norm{\boldsymbol{p}_{\obj, t}-\boldsymbol{p}^\goal}^2.
\end{equation}
Fig. \ref{fig.tf.task2.plots.4}  shows the orientation cost (\ref{equ.tf.ori_cost}) of a rollout with the $\boldsymbol{g}$-MPC controller.  
Table \ref{table.tf.task2.table1} lists some key quantitative results. Here,  the cube's terminal position error  $\norm{\boldsymbol{p}_{\obj, H}{-}\boldsymbol{p}^\goal}$ and terminal orientation error $|\alpha_{\obj, H}-\alpha^\goal|$ are calculated at the end (time step $H$) of a rollout.  In the last row of Table \ref{table.tf.task2.table1}, we   test the  robustness of the closed-loop $\boldsymbol{g}$-MPC controller against the external disturbance forces added to the cube during its motion. Here,
 we apply an external   disturbance wrench (3D torque and 3D forces), sampled from $ U[-\text{w}^{\text{mag}}_{\text{disturb}}, \text{w}^{\text{mag}}_{\text{disturb}}]$, during each time interval ($0.1$s) of the rollout steps. We increase the disturbance magnitude $\text{w}^{\text{mag}}_{\text{disturb}}$ until the resulting $\boldsymbol{g}$-MPC rollout has an average cube terminal position error  $\norm{\boldsymbol{p}_{\obj, H}-\boldsymbol{p}^\goal}\geq0.01$ (m).   We report the result using ${\text{w}^{\text{mag}}_{\text{disturb}}}/{m_{\obj}}$, with $m_{\obj}$ the cube mass.
Based on the results in  Fig. \ref{fig.tf.task2.plots} and Table \ref{table.tf.task2.table1}, we have the following conclusions.

(i) Without any prior knowledge, the proposed method  learns a reduced-order LCS and successfully solves the Cube Moving manipulation task  within  5 minutes of wall-clock time. The reduced-order model $\boldsymbol{g}()$ leads to a real-time $\boldsymbol{g}$-MPC controller with running frequency $>30$ Hz.  The method requires collecting less than 4k data points (175 rollouts $\times$ 20 steps/rollout)  from the environment.

(ii) The learned reduced-order LCS  $\boldsymbol{g}()$ only permits 32 modes (among them 15 are used for the task), which is much fewer than the estimated number of hybrid modes in full-order dynamics, which could be thousands. 

(iii) The  reduced-order  $\boldsymbol{g}$-MPC controller shows  robustness against large external wrench  disturbances. This is an advantage of using the closed-loop   $\boldsymbol{g}$-MPC controller.

\begin{table*}[h]
\begin{center}
\caption{Approximate correspondence between LCS mode  activation in Fig. \ref{fig.tf.task2.demo11} and  contact interaction in Fig.\ref{fig.tf.task2.demo12}. }
\label{table.tf.task2.table2}
\begin{threeparttable}
\begin{tabular}{l c c c c}
    \toprule
     \textbf{Duration $t$} & \textbf{Mode activation in $\boldsymbol{g}()$} & \textbf{Interaction between fingertips and cube}  
     & \textbf{Interaction between cube and table} 
     \\ 
    \midrule
      $t=0,1,2$ &  
            $\sign(\boldsymbol{\lambda})=[1, 0, 1, 0, 0]\tran$     & 
      \texttt{R, G, B separate}   & \texttt{stick to table}
      \\[3pt]
      $t=3, 4$ &  
            $\sign(\boldsymbol{\lambda})=[1, 1, 1, 0, 0]\tran$     & 
 \texttt{R, B separate} and \texttt{G separate}  & \texttt{translational slip}
      \\[5pt]
    $t=5,6,7$ &  
     {\begin{tabular}{@{}l@{}} 
     $\sign(\boldsymbol{\lambda})=[1, 1, 1, 0, 1]\tran$   \\ 
     \qquad \qquad or $[0, 1, 1, 0, 0]\tran$ 
     \end{tabular}} 
     & \texttt{R, G stick} \& \texttt{B separate}
     & \texttt{translational \& rotational slip}
      \\[10pt]
         $t=8, 9$ &  
            $\sign(\boldsymbol{\lambda})=[0, 1, 0, 0, 1]\tran$     & 
     \texttt{R, G, B stick}   
      &  \texttt{translational \& rotational slip}
      \\[5pt]
      $t=10, 11$ &  
       $\sign(\boldsymbol{\lambda})=[0, 1, 0, 0, 0]\tran$     & 
            \texttt{R, G stick}    
           and \texttt{B right slip} 
      &  \texttt{translational \& rotational slip}
      \\[5pt]
        $t=12, 13$ &  
            $\sign(\boldsymbol{\lambda})=[0, 1, 0, 0, 1]\tran$     & 
              \texttt{R stick}  and \texttt{B contacting G} 
      &  \texttt{translational \& rotational slip}
      \\[5pt]
    $t=14, 16, 18$ &  
    $\sign(\boldsymbol{\lambda})=[0, 1, 0, 0, 0]\tran$      & 
              \texttt{R, B, G stick}  and \texttt{B contacting G}   &  \texttt{CCW rotational slip}
      \\[5pt]
    $t=15, 17, 19$ &  
    $\sign(\boldsymbol{\lambda})=[0, 0, 1, 0, 1]\tran$     & 
              \texttt{R, B, G stick}  and \texttt{B contacting G}   &  \texttt{CW rotational slip}
      \\
      \bottomrule
\end{tabular}
\end{threeparttable}
\end{center}
\end{table*}

\begin{figure*}[h]
	\centering	\includegraphics[width=0.99\textwidth]{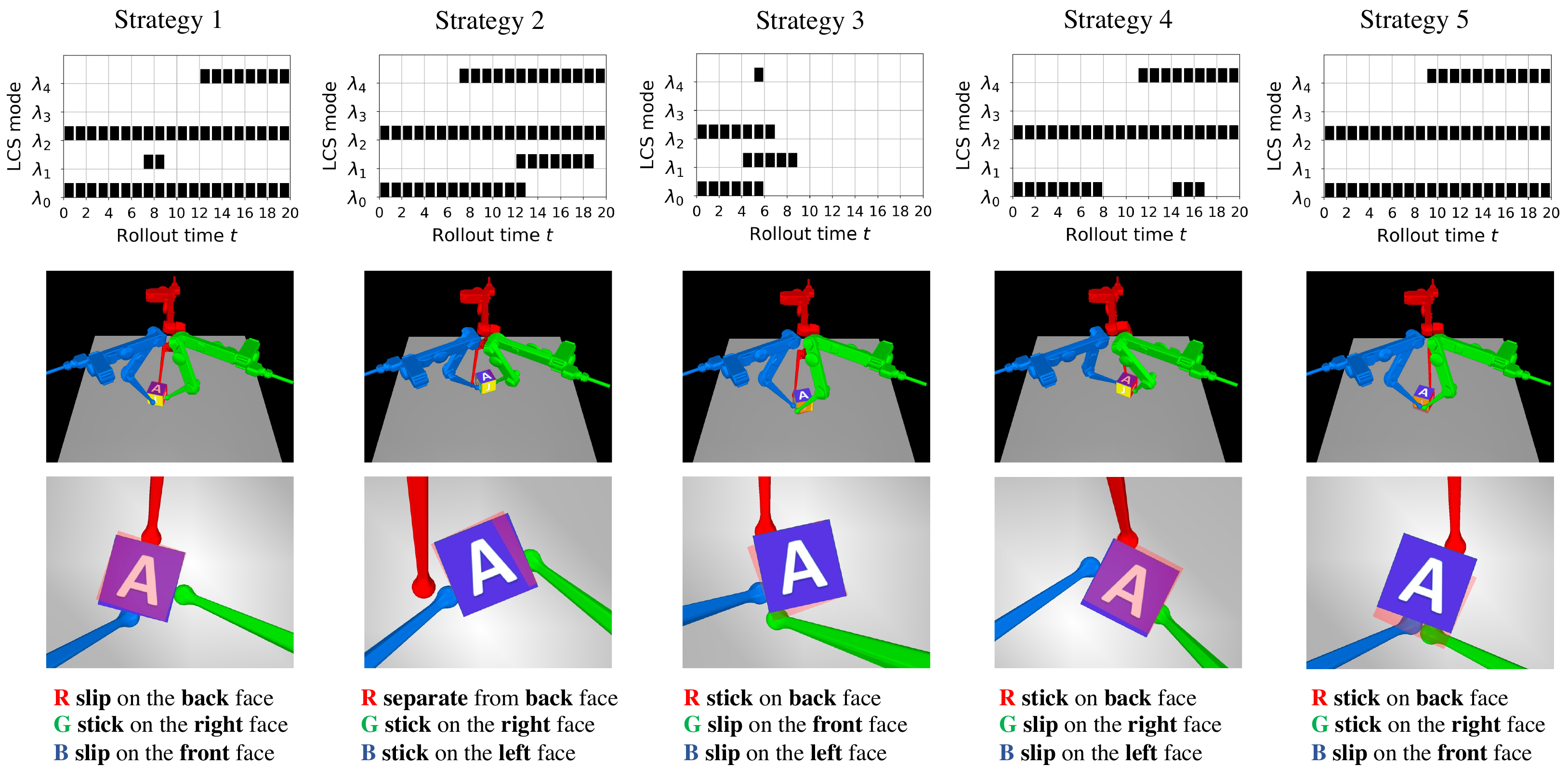}
	\caption{\small Different contact strategies generated by the same learned reduced-order LCS $\boldsymbol{g}()$ in its $\boldsymbol{g}$-MPC policy rollout given different targets. The first row shows the   mode activation of $\boldsymbol{g}$ over the rollout; the second row shows the snapshot of the environment at the end of the rollout; and the third row shows the zoom-in details of contact interactions. The bottom title in each column describes the main interactions for that strategy. Analysis is given in Section \ref{section.tf.strategies}.}
	\label{fig.tf.task2.grasp}
   \vspace{-10pt}
\end{figure*}

\smallskip

\subsubsection{Analysis of Reduced-Order Hybrid Modes} \label{section.trifinger.task2.analysis}

Fig. \ref{fig.tf.task2.demo} shows one rollout of running  the learned  $\boldsymbol{g}$-MPC on the three-finger manipulation system (the environment). Fig. \ref{fig.tf.task2.demo11} plots the trajectory of  mode activation in $\boldsymbol{g}()$ (first panel), the trajectory of the cube position (second panel),  the trajectory of the cube orientation angle (third panel), and the trajectories of xy position of three fingertips (fourth panel), over the duration of  rollout. All trajectories are colored differently for different mode activation in $\boldsymbol{g}()$.  Fig. \ref{fig.tf.task2.demo12} shows the snapshots of the environment at some key  time steps of the rollout.
As done in the previous task,  Table \ref{table.tf.task2.table2}  lists the empirical connection between mode activation in $\boldsymbol{g}$-MPC in Fig. \ref{fig.tf.task2.demo11} and physical contact interaction in Fig. \ref{fig.tf.task2.demo12}.

Results in Fig. \ref{fig.tf.task2.demo} and Table \ref{table.tf.task2.table2}  show  rich contact interactions  in the Cube Moving  task. Different hybrid mode  of  $\boldsymbol{g}()$ can approximately capture different contact interactions during  rollout. Those interactions include \texttt{separate}, \texttt{stick}, and \texttt{slip} between  fingertips and  cube,  \texttt{stick},  \texttt{CCW rotational slip}, and  \texttt{CW rotational slip} between  cube and  table, and \texttt{B contacting G} between fingertips, happening at different time steps.  Notably, in Fig. \ref{fig.tf.task2.demo11}, for $t\geq 14$, the three fingers start slightly shaking the cube, as shown by the cube position and angle trajectories, and active mode in $\boldsymbol{g}()$ also  changes alternatively, as shown in the mode activation panel. Those observations indicate the modes of the learned reduced-order  LCS are able to approximately capture the rich contact interactions in the manipulation system.

\subsubsection{Generation of Different Manipulation Strategies}
\label{section.tf.strategies}

Notably, for different  target poses (sampled from (\ref{equ.tf.task2})), we observe that the learned $\boldsymbol{g}$-MPC  produces different manipulation strategies to move the cube. We  show this in Fig. \ref{fig.tf.task2.grasp}.

Different columns in Fig. \ref{fig.tf.task2.grasp} shows different manipulation strategies given different targets. Note that all those strategies are generated from the same learned reduced-order LCS $\boldsymbol{g}()$ in its $\boldsymbol{g}$-MPC policy rollout, given different targets. The first row in Fig. \ref{fig.tf.task2.grasp} shows the   mode activation in $\boldsymbol{g}()$ during the rollout with $\boldsymbol{g}$-MPC controller; the second row shows the snapshot of the environment at the end of rollout; and the third row shows the zoom-in details of the contact interaction. The bottom title in each column describes the main physical interactions for the rollout. From Fig. \ref{fig.tf.task2.grasp}, we have the following comments.

(i) Fig. \ref{fig.tf.task2.grasp} clearly shows the learned reduced-order LCS $\boldsymbol{g}()$ enables generating different strategies for different targets. Particularly, $\boldsymbol{g}()$ enables the three fingertips to choose different faces (\texttt{right}, \texttt{left}, \texttt{front}, and \texttt{back}) of the cube with different contact interactions (\texttt{separate}, \texttt{stick}, and \texttt{slip}). For example, the blue fingertip chooses the \texttt{front} face in Strategies 1 and 5 while the \texttt{left} face in the others. The red fingertip is \texttt{slip} in Strategy 1, \texttt{separate} in Strategy 2, and \texttt{stick} in others.

(ii) Jointly looking at the mode activation of $\boldsymbol{g}()$ in the first row of Fig. \ref{fig.tf.task2.grasp}, we observe that  the mode activation at the beginning of all rollouts, e.g., $t\leq 4$, are quite similar since the cube  during this period is still, and all fingertips are separate from  the cube. After $t>4$, different modes in $\boldsymbol{g}$-MPC begin to activate, leading to different manipulation strategies.

(iii) Recall that all strategies shown  in Fig. \ref{fig.tf.task2.grasp} are generated by the same learned reduced-order LCS $\boldsymbol{g}()$. The results clearly show the effectiveness of the learned reduced-order $\boldsymbol{g}$-MPC to capture and make use of its task-relevant hybrid modes to produce different  strategies for  the  manipulation task.

\subsubsection{Occasional  Reorientation Failure}
We  report some failure cases in the above cube moving manipulation task. Fig. \ref{fig.tf.task2.rotationfail} shows one example of reorientation failure. Here, some  snapshots at key time steps of the $\boldsymbol{g}$-MPC rollout  are shown. At time step 10 (middle column), the three fingertips had successfully moved and turned the cube to the target pose. However, at the subsequent time steps, the green fingertip continues to slide along the  cube surface,  leading to   the misalignment of the cube orientation  (third column).

\begin{figure}[h]
	\centering	\includegraphics[width=0.48\textwidth]{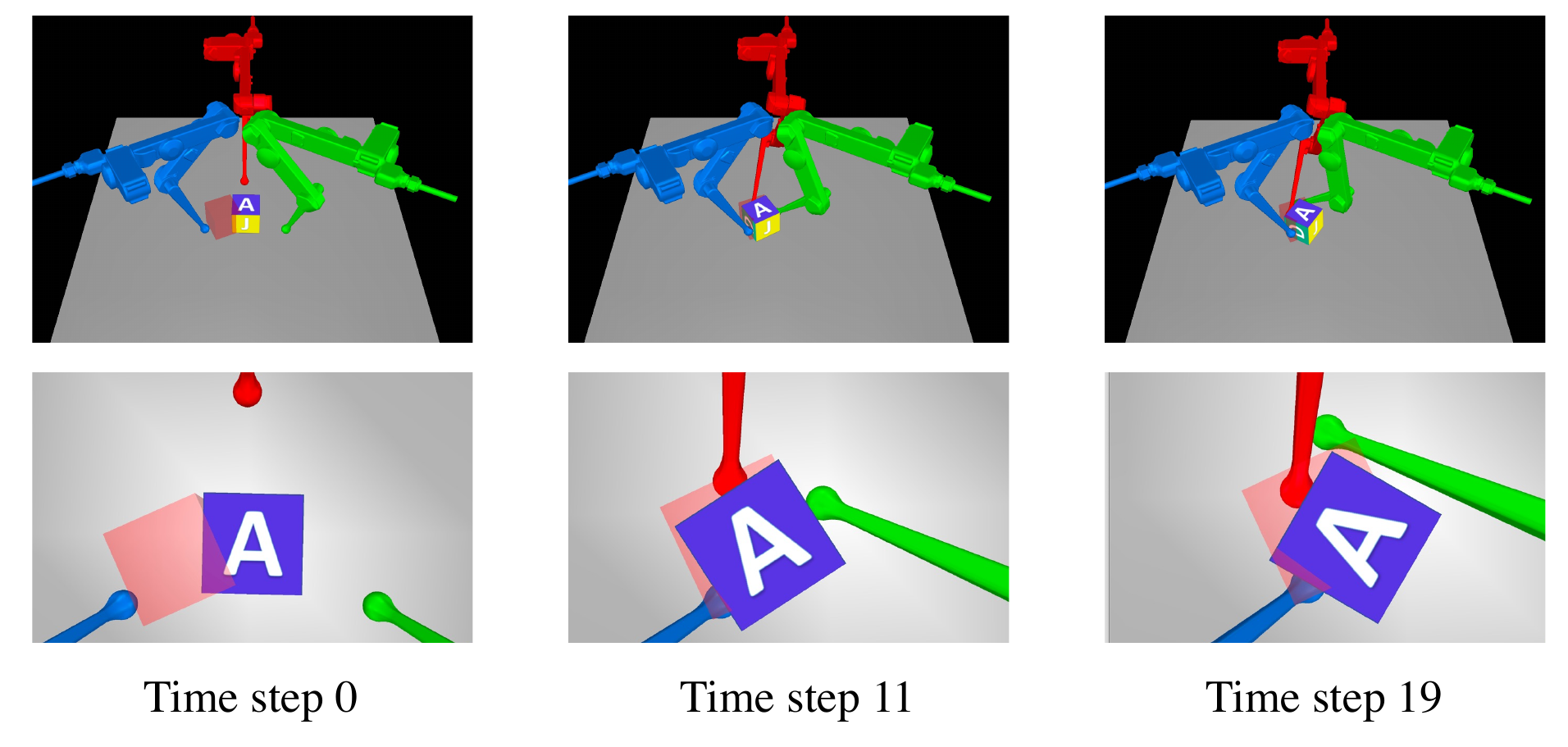}
	\caption{\small An  example of  reorientation failure in a  rollout.}
	\label{fig.tf.task2.rotationfail}
\end{figure}

We observed that  reorientation failure are target dependent,
meaning that  reorientation failures happen more frequently at some targets than at others. 
Also, changing the random seed for  target distribution (\ref{equ.tf.task2}) also changes the failure target locations. 
This makes us believe that the reorientation failure could be caused by  insufficient target sampling at such regions.  In fact, the whole training process sampled from fewer than 200 target poses from (\ref{equ.tf.task2}). Some regions of the target space  could be less  sampled than other regions, leading to the learned $\boldsymbol{g}()$ not well representing those regions. We expect that those failures could be reduced by decreasing the target space. In fact, in our previous Cube Turning task, since the target space is one-dimensional, we have not experienced the reorientation failures in that task.


\subsection{Discussion}\label{trifinger.discussion}
We conclude this section with some additional performance evaluations of the proposed method, and note some limitations (and future work) of the proposed method.

\medskip

\subsubsection{LCS with Different Hybrid Modes}
In the above three-finger manipulation tasks, we have used  LCS models $\boldsymbol{g}()$ in (\ref{equ.lcs.reduced}) with a fixed $\dim \boldsymbol{\lambda}=5$, which allows the representation of  32  modes. Results in Table \ref{table.tf.task1.table1} and Table \ref{table.tf.task2.table1}  show that the learned reduced-order LCS  has not used up all of those modes. Therefore, a natural question is whether it is possible to learn a LCS with fewer hybrid modes. To show this, we learn a reduced-order LCS  $\boldsymbol{g}()$ with different $\dim \boldsymbol{\lambda}$ for the Cube Turning  task, under the same other settings as in Section \ref{section.tf.task1}. We show the results in Fig. \ref{fig.discussion.comparedim}. Here, for each $\dim \boldsymbol{\lambda}$ case, we run the experiments with five random seeds, and the mean and variance are computed across different runs. 

  \begin{figure}[h]
	\centering
		\begin{subfigure}{.22\textwidth}
		\centering
		\includegraphics[width=\linewidth]{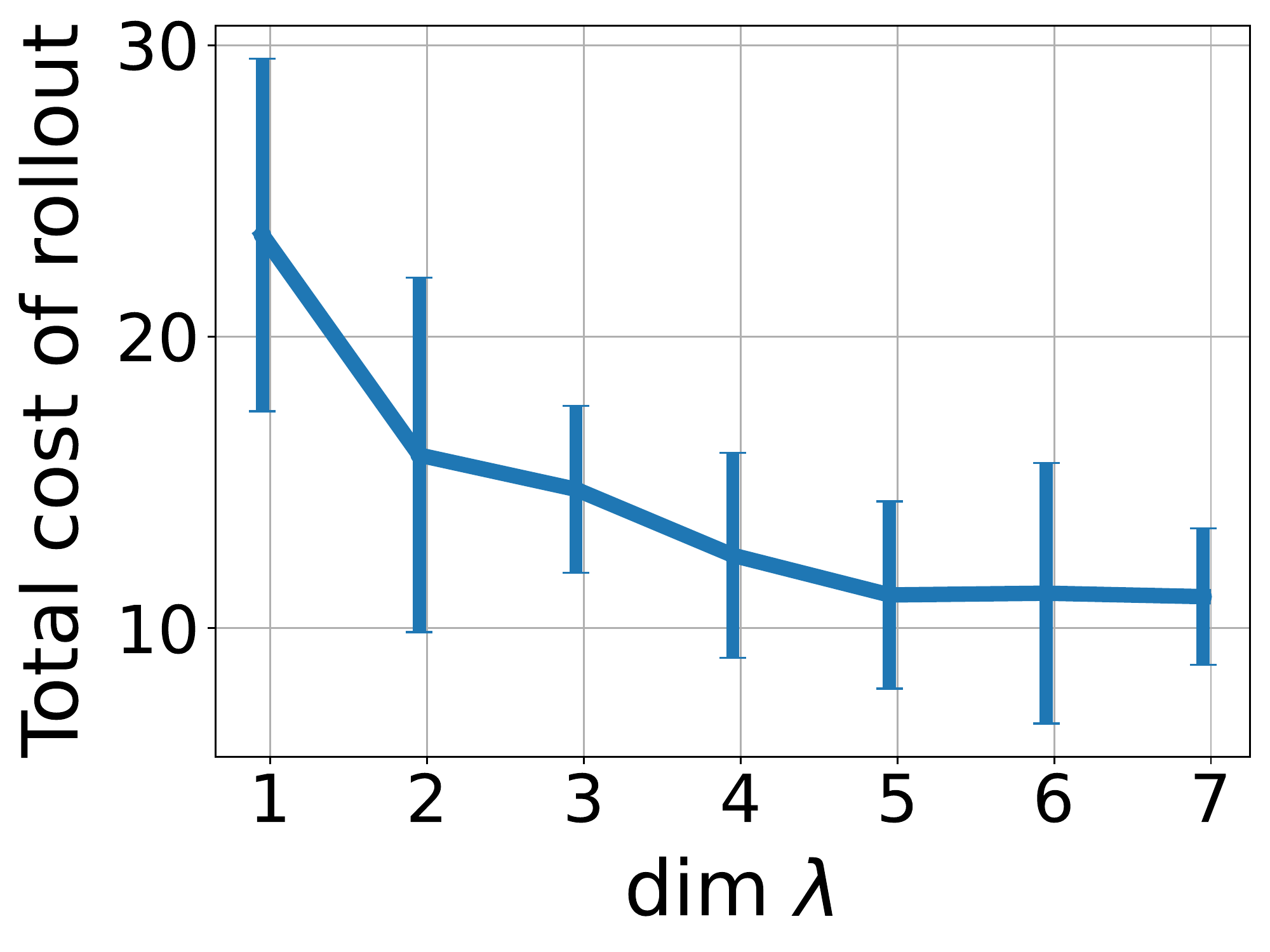}
		\caption{}
		\label{fig.discussion.comparedim.1}
	\end{subfigure}
		\begin{subfigure}{.22\textwidth}
		\centering
		\includegraphics[width=\linewidth]{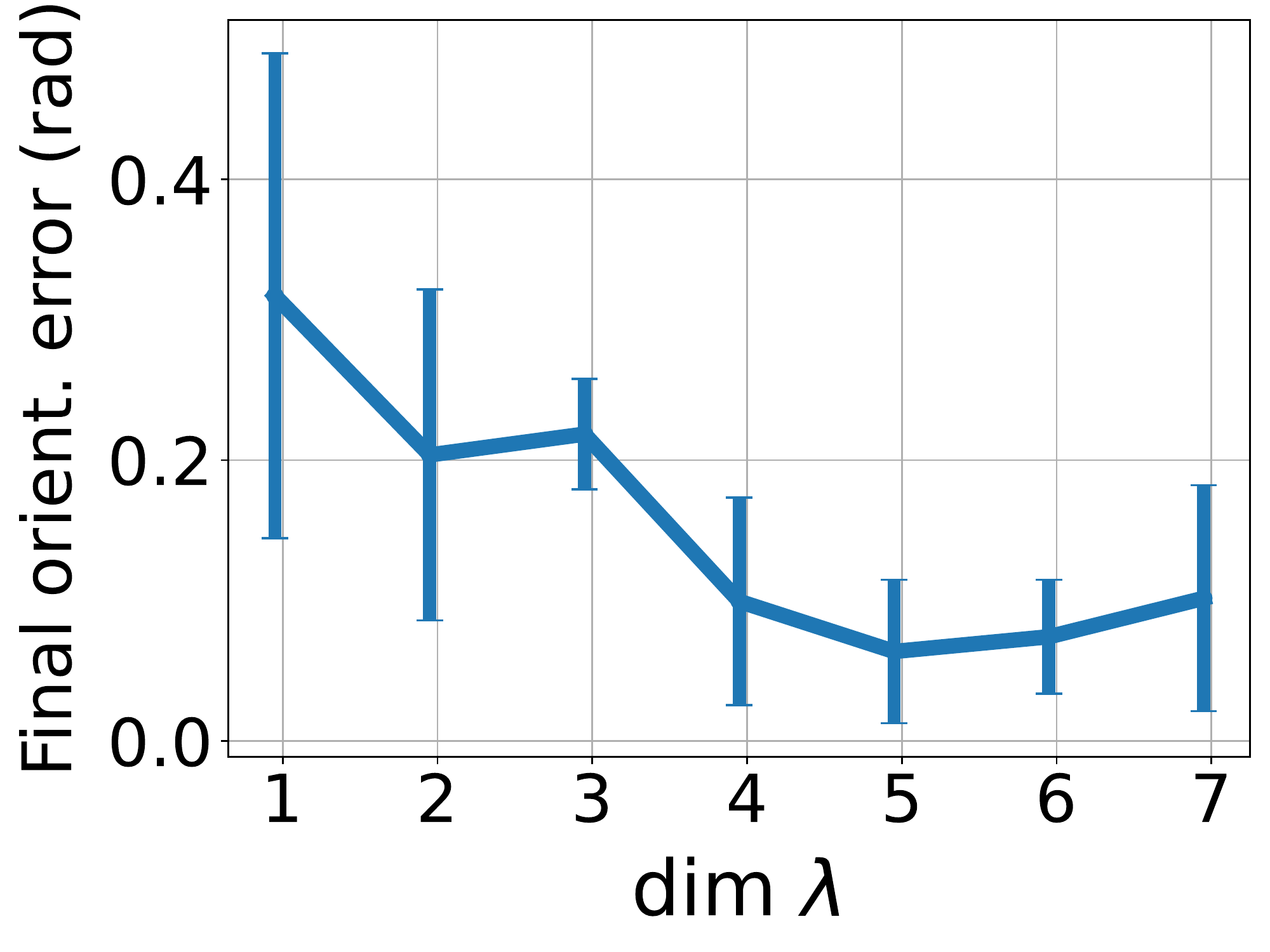}
		\caption{}
		\label{fig.discussion.comparedim.2}
	\end{subfigure}
	\begin{subfigure}{.22\textwidth}
		\centering
		\includegraphics[width=\linewidth]{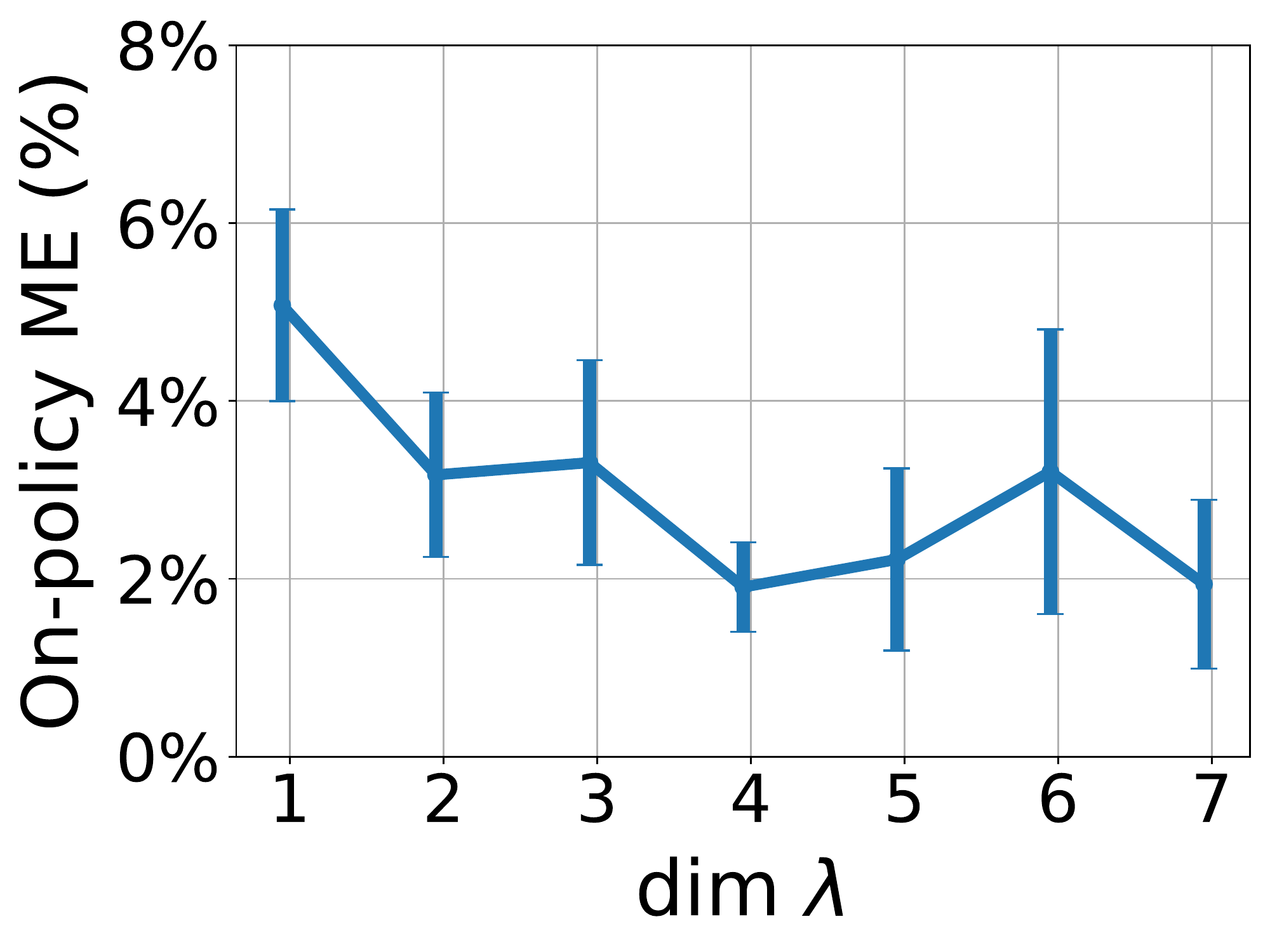}
		\caption{}
		\label{fig.discussion.comparedim.3}
	\end{subfigure}
	\begin{subfigure}{.22\textwidth}
		\centering
		\includegraphics[width=\linewidth]{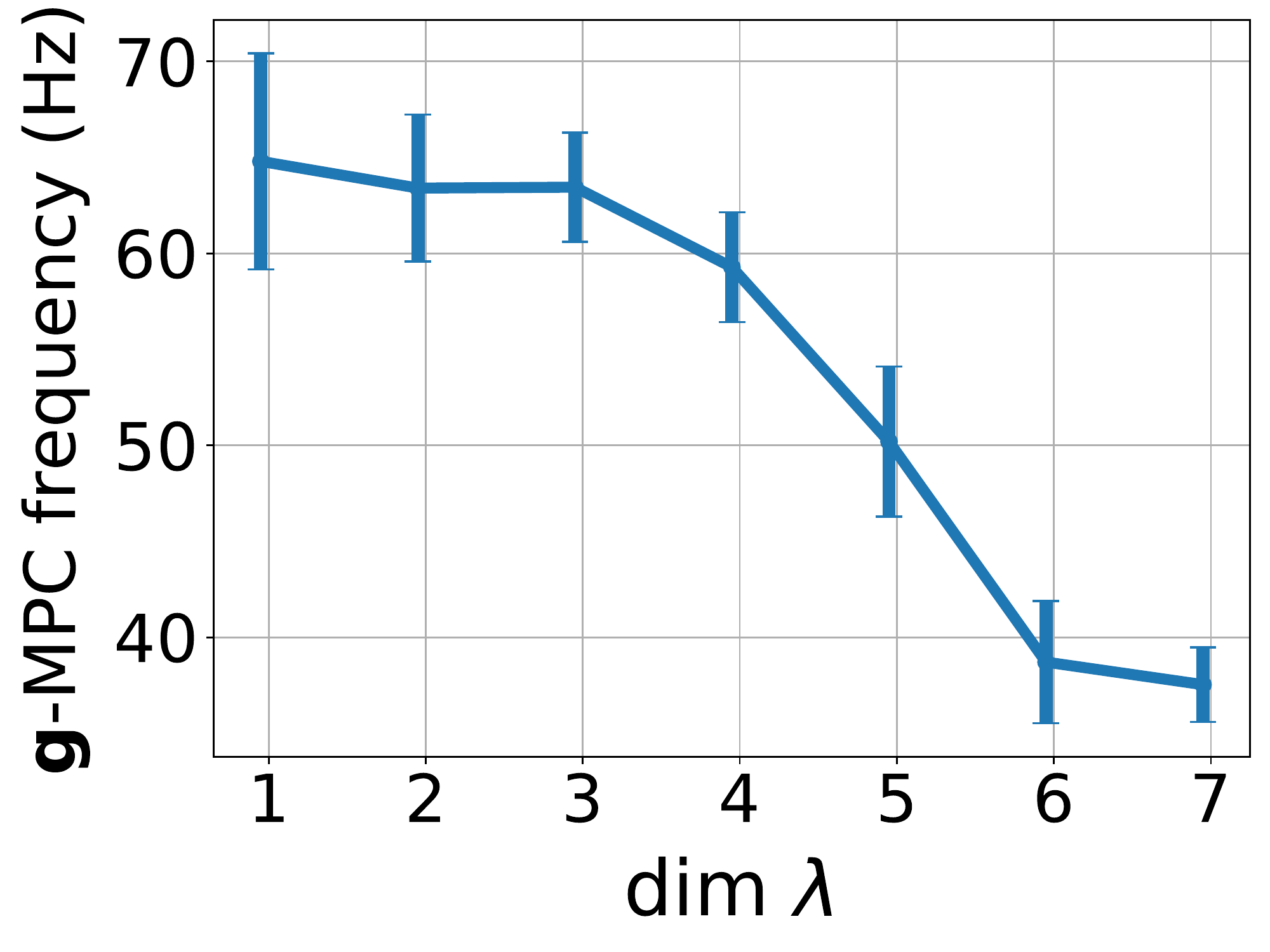}
		\caption{}
		\label{fig.discussion.comparedim.4}
	\end{subfigure}
	\caption{ \small
	Testing performance of the learned LCS $\boldsymbol{g}()$ with different $\dim \boldsymbol{\lambda}$ for the Cube Turning  task. The metrics in (a)-(c)   follow the same definitions as the ones in Section \ref{section.tf.task1},  and (d) is running frequency of $\boldsymbol{g}$-MPC. For each $\dim \boldsymbol{\lambda}$ case, the mean and variance are calculated across five  runs with different random seeds. The  other settings follow the ones  in Section \ref{section.tf.task1}. 
	} 
	\label{fig.discussion.comparedim}
\end{figure}

 Fig. \ref{fig.discussion.comparedim}  shows that   a LCS of fewer  modes, e.g.,  $\dim\boldsymbol{\lambda}\leq 3$,  leads to degraded performance. The previous Table \ref{table.tf.task1.table1} has showed that  successful manipulation  needs around 14 hybrid modes in $\boldsymbol{g}()$. Thus, the successful manipulation  on average requires   at least $\dim\boldsymbol{\lambda}\geq 4$.  Fig. \ref{fig.discussion.comparedim}  confirms this by showing   improved performances if $\dim\boldsymbol{\lambda}\geq 4$.   Fig. \ref{fig.discussion.comparedim} also shows that  further increasing of $\dim\boldsymbol{\lambda}$, say $\dim\boldsymbol{\lambda}=6$, will not help the performance too much. Also,  increasing $\dim\boldsymbol{\lambda}$ will slow  the speed of the closed-loop $\boldsymbol{g}$-MPC controller in Fig. \ref{fig.discussion.comparedim.4}.  In practice,  the  choice of $\dim \boldsymbol{\lambda}$  depends on  tasks and systems, and one typically needs to find a $\dim \boldsymbol{\lambda}$ (via trial and error) by   balancing  its   performance and computational complexity.

\medskip

\subsubsection{Choice of Cost Weights}
In this session, we  investigate how the choice of  cost  weights $\boldsymbol{w}^c=[w_1^c, w_2^c, w_3^c]\tran$ and  $\boldsymbol{w}^h=[w_1^h, w_2^h, w_3^h]\tran$
 in (\ref{equ.tf.cost1}) affects the algorithm performance. We  apply the proposed method to learn a reduced LCS   of $\dim\boldsymbol{\lambda}=5$ for  the Cube Turning task,   with the same other conditions as in
Section \ref{section.tf.task1}, except varying the values of $(w_1^c,  w_3^c)$ and $(w_1^h,  w_3^h)$ (note $w_2^c=w_2^h=0$ in the Cube Turning task). The results are shown in Fig. \ref{fig.discussion.cost_weights}.
  \begin{figure}[h]
	\centering
		\begin{subfigure}{.24\textwidth}
		\centering
		\includegraphics[width=\linewidth]{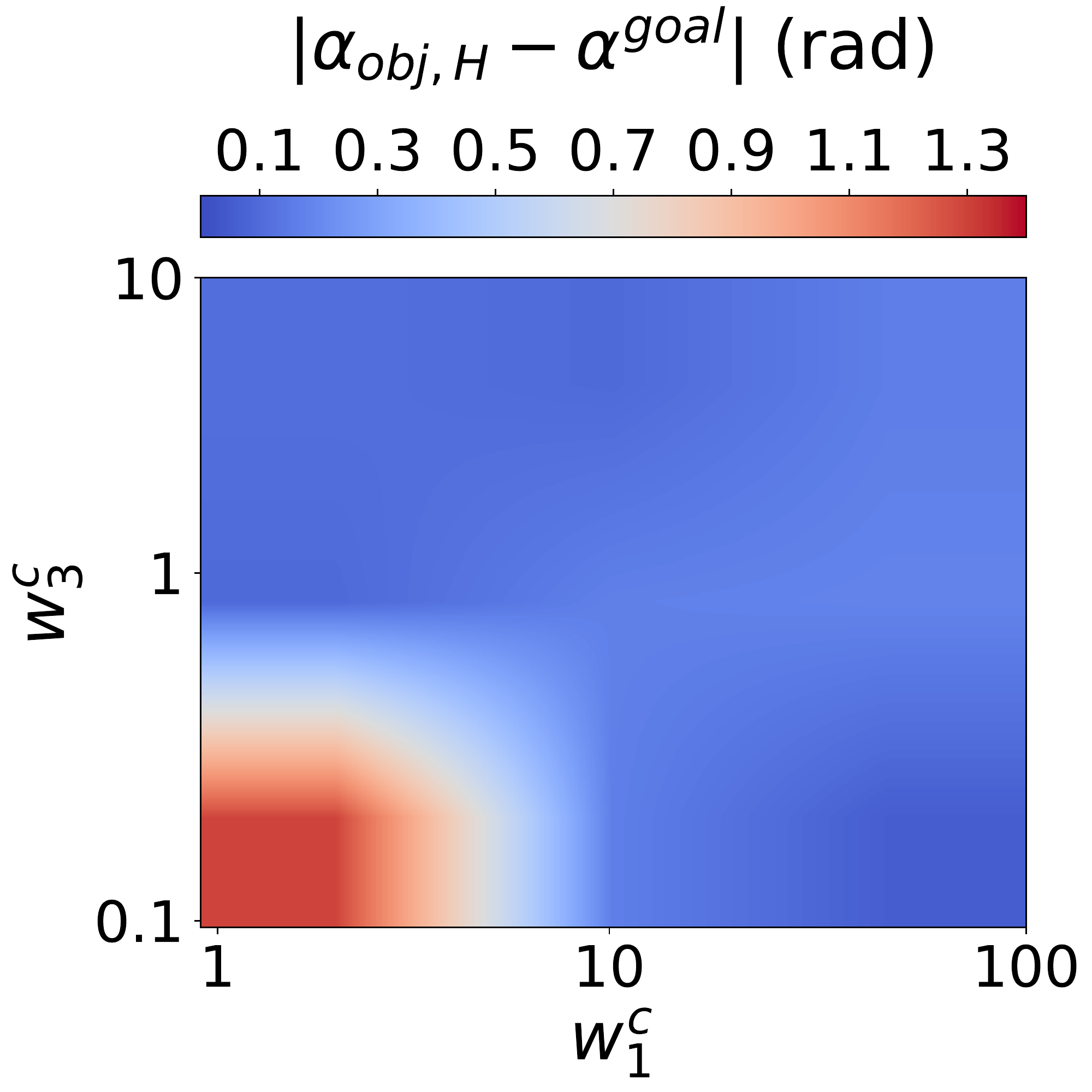}
		\caption{}
		\label{fig.discussion.path_weights.angle_error}
	\end{subfigure}
		\begin{subfigure}{.24\textwidth}
		\centering
		\includegraphics[width=\linewidth]{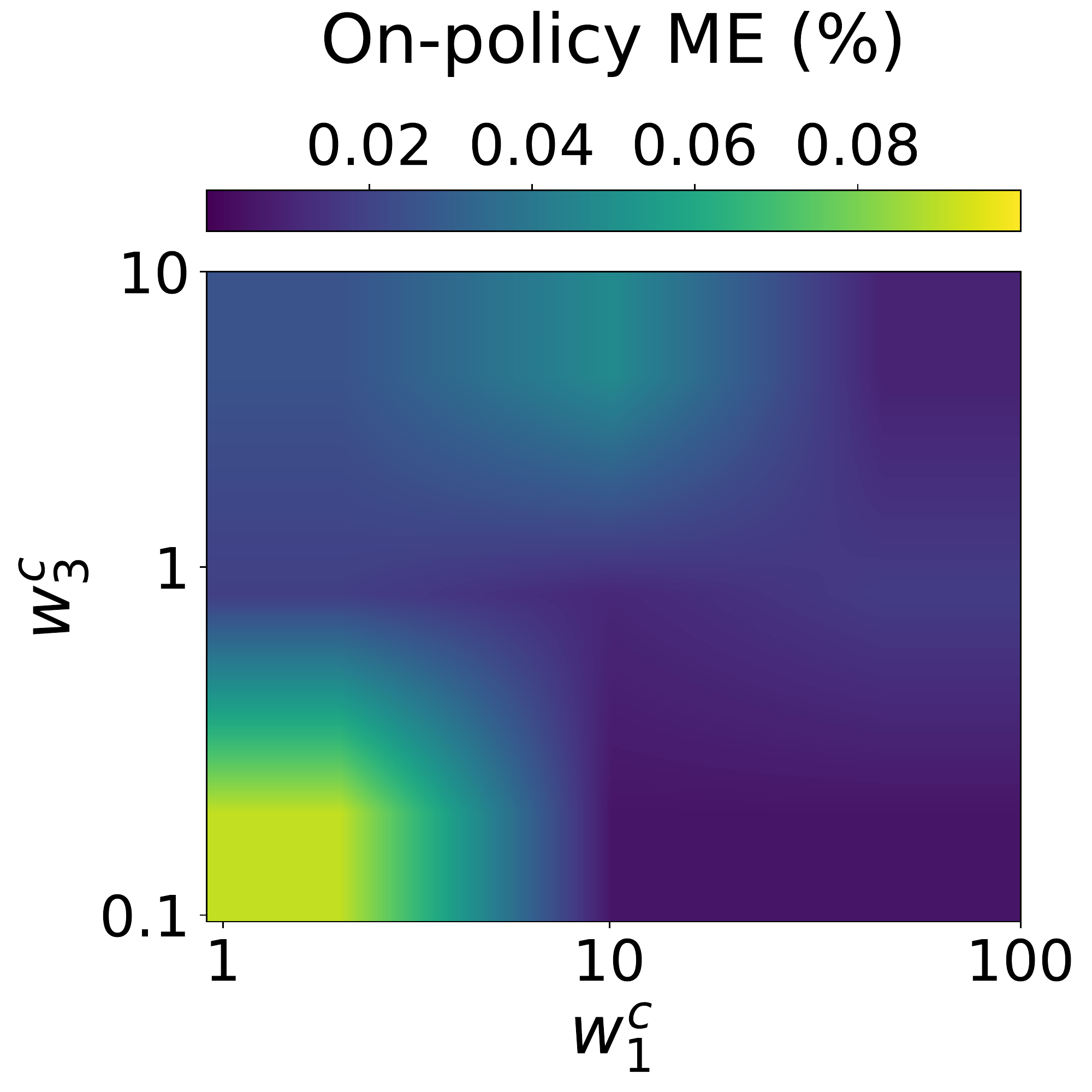}
		\caption{}
		\label{fig.discussion.path_weights.me}
	\end{subfigure}
 	\centering
		\begin{subfigure}{.24\textwidth}
		\centering
		\includegraphics[width=\linewidth]{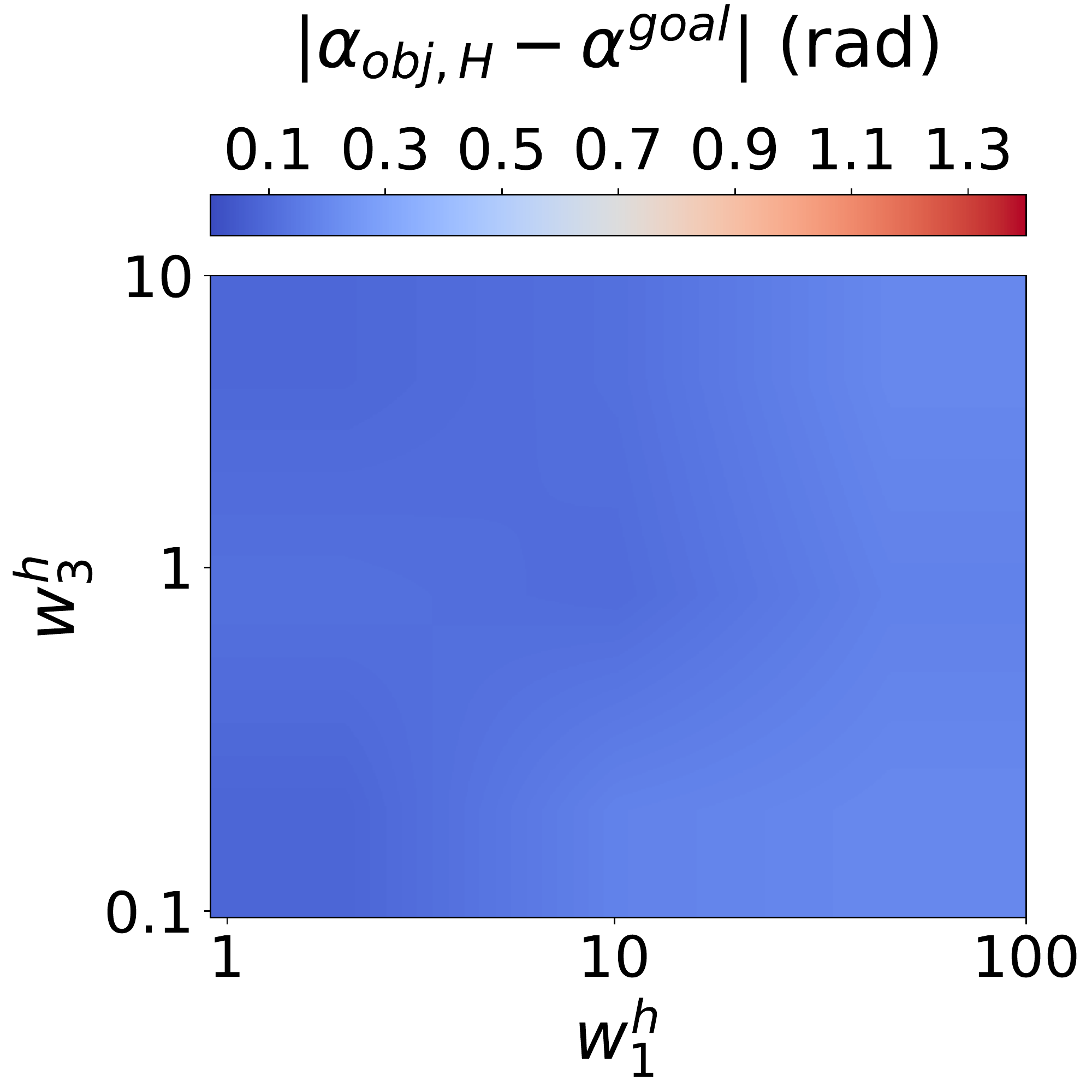}
		\caption{}
		\label{fig.discussion.final_weights.angle_error}
	\end{subfigure}
		\begin{subfigure}{.24\textwidth}
		\centering
		\includegraphics[width=\linewidth]{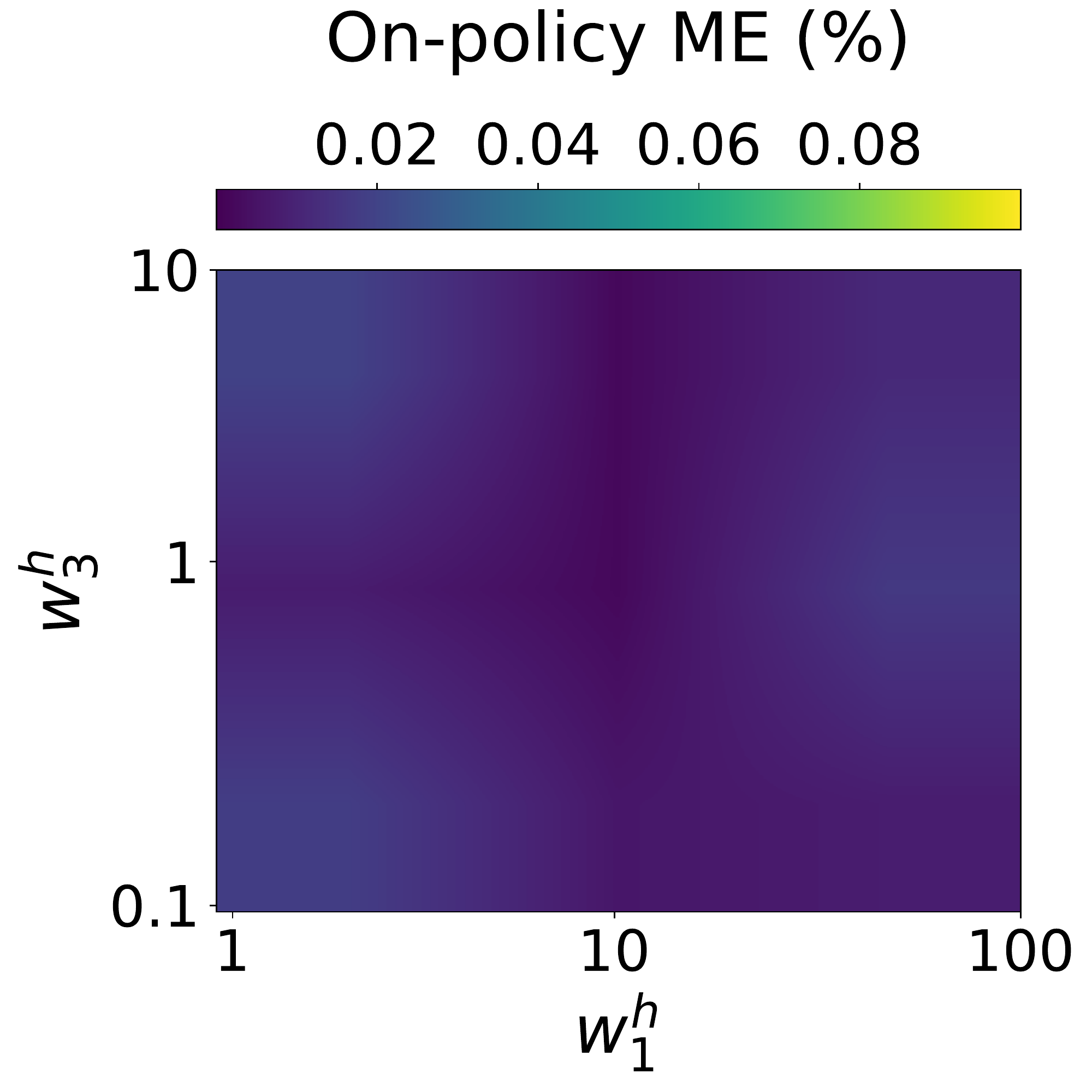}
		\caption{}
		\label{fig.discussion.final_weights.me}
	\end{subfigure}
	\caption{ \small
	Performance of the proposed method with different choices of cost weights $\boldsymbol{w}^c$ and  $\boldsymbol{w}^h$
 in (\ref{equ.tf.cost1}) for the Cube Turning  task. (a) and (b) report the performance using different  $\boldsymbol{w}^c$ while fixing  $\boldsymbol{w}^h=[2,0, 10]\tran$, and (c) and (d) using different  $\boldsymbol{w}^h$ while fixing  $\boldsymbol{w}^c=[10, 0, 2]\tran$. The metric in (a) and (c) is the cube's terminal orientation error $|\alpha_{\obj, H}-\alpha^\goal|$, as used in Table \ref{table.tf.task1.table1}, and the metric in (b) and (d) uses the on-policy model error, defined in (\ref{equ.pwa.mpe}).
	} 
	\label{fig.discussion.cost_weights}
  \vspace{-10pt}
\end{figure}

Compared to the previous performance (see Fig. \ref{fig.tf.task1.plots} and Table \ref{table.tf.task1.table1}) in Section VII.C,  Fig. \ref{fig.discussion.path_weights.angle_error} and Fig.  \ref{fig.discussion.path_weights.me} show that a similarly good performance permits a wide selection of running cost  weights $\boldsymbol{w}^c$, e.g.,  $w_3^c=0.5$ and $w_1^c\in(10,100)$. Typically, to achieve a good algorithm performance, Fig. \ref{fig.discussion.path_weights.angle_error} and Fig.  \ref{fig.discussion.path_weights.me} suggest that the selection of running cost weights $\boldsymbol{w}^c$ should  make the corresponding cost terms, here, $w_1^c$ for the distance between fingertip and cube and $w_3^c$ for the orientation cost, roughly have a similar scale. The results in Fig. \ref{fig.discussion.final_weights.angle_error} and Fig.  \ref{fig.discussion.final_weights.me} also say that the allowable choice of final cost weights $\boldsymbol{w}^h$ is more flexible, and different $\boldsymbol{w}^h$s have limited influence on the algorithm performance. Thus, we can  conclude that the proposed algorithm is not sensitive to the choice of   $\boldsymbol{w}^c$ and  $\boldsymbol{w}^h$
 in (\ref{equ.tf.cost1}). In practical implementation, it is always not difficult to find  the cost  weights to produce good performance, and one empirical principle to select cost weights is to make each cost term have a similar scale.

\subsubsection{Choice of Task Distribution}
We briefly discuss the choice of task parameter $\boldsymbol{\beta}$, which we have used to define a distribution of tasks $p(\boldsymbol{\beta})$.
In the examples above, we have considered uniformly distributed targets (e.g., $p(\boldsymbol{\beta})$ in (\ref{equ.tf.task1}) and (\ref{equ.tf.task2})). 
We note, however, that this choice is somewhat arbitrary, and for completeness we demonstrate similar performance across other distributions. The experiment settings follow Section \ref{section.tf.task1}, except using different targets distribution $p(\boldsymbol{\beta})$. 

  \begin{figure}[h]
	\centering
		\begin{subfigure}{.235\textwidth}
		\centering
		\includegraphics[width=\linewidth]{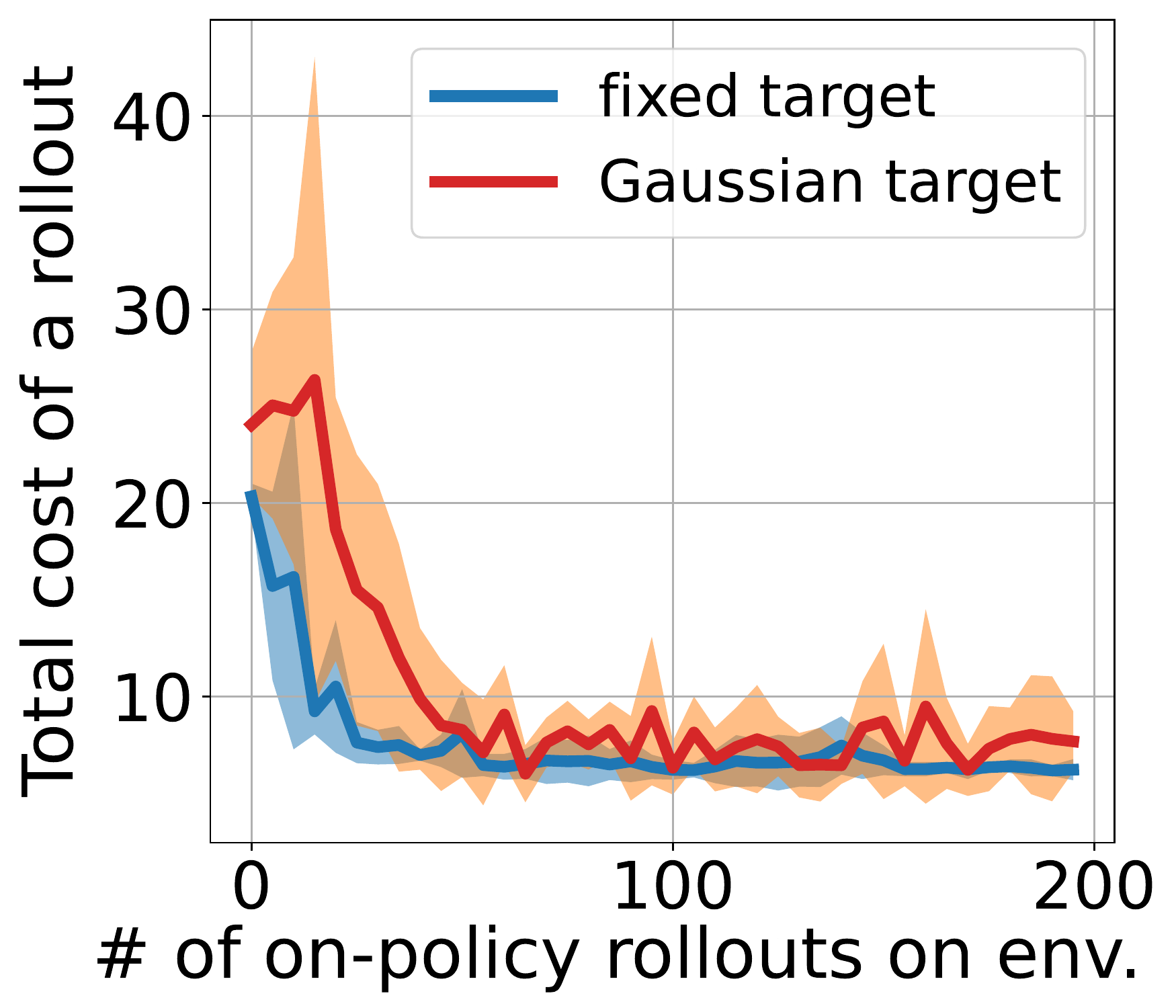}
		\caption{}
		\label{fig.discussion.beta.1}
	\end{subfigure}
		\begin{subfigure}{.235\textwidth}
		\centering
		\includegraphics[width=\linewidth]{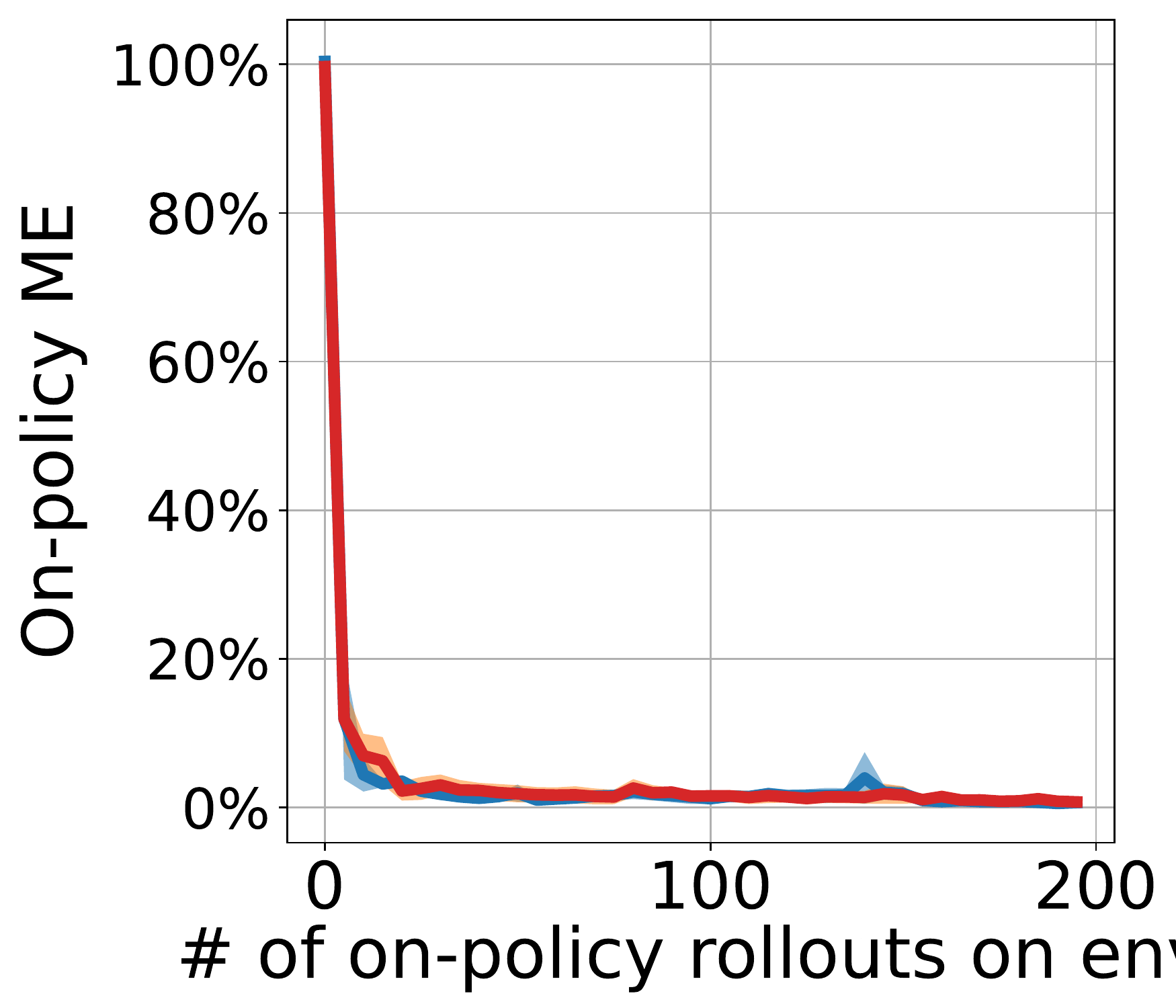}
		\caption{}
		\label{fig.discussion.beta.2}
	\end{subfigure}
	\caption{ \small
	Performance of the proposed method  under different target distribution in the Cube Turning task. We use two target distributions: ${\delta}$-distribution for a single fixed target, i.e., $p(\boldsymbol{\beta}=0.6)=1$, in the blue lines, and a Gaussian distributed target,  $p(\boldsymbol{\beta})=\mathcal{N}(0.6, 0.2)$, in the red lines. The  metric  follows the ones used in Fig. \ref{fig.tf.task1.plots}. 
	} 
	\label{fig.discussion.beta}
 \vspace{-10pt}
\end{figure}

Fig. \ref{fig.discussion.beta}, shows results from two other target distributions for the Cube Turning  task: ${\delta}$-distribution corresponding to a single fixed target,  $p(\boldsymbol{\beta}=0.6)=1$, shown in the blue lines, and a Gaussian distributed target,  $p(\boldsymbol{\beta})=\mathcal{N}(0.6, 0.2)$, shown in the red lines. The learning curves  in Fig.\ref{fig.discussion.beta.1} and Fig. \ref{fig.discussion.beta.2} indicate that for each task distribution, a reduced-order LCS model is successfully learned. Also, we notice that compared to the single target task, multiple targets bring more variance. 
In conclusion, given a different task distribution, the proposed method always finds a reduced order model to minimize the task performance gap in expectation over the task distribution, as in (\ref{equ.loss1}). In practice, we expect $p(\boldsymbol{\beta})$ to be chosen either by the practitioner, to best represent the tasks the robot must accomplish, or to be independently identified (e.g., via the output from some higher-level planner).

\subsubsection{Limitation of PWA Models} In the paper, we use PWA models (\ref{equ.lcs.reduced}) as the reduced-order hybrid representation. As these PWA models are inherently based on linearization, they do not naturally apply to all manipulation tasks, for example large rotations (e.g. full 360-degree rotation of the cube). Such large rotations involve  significant non-linearity that local PWA models cannot capture well, unless one  adds more `pieces' in PWA to approximate it \cite{dai2019global}. However, 
using more `pieces' to approximate smooth non-linearity is not the interest of this paper; instead, we focus on using pieces to capture the hybrid structure (i.e.,  mode boundaries) of a non-linear hybrid system. But this limitation motivates a future direction to extend LCS representation for non-linear complementarity models.

\section{Conclusions And Future Work}\label{section.conclusion}
This paper proposes the method of task-driven hybrid model reduction for multi-contact dexterous manipulation. Building upon our prior work of learning linear complementarity systems, we propose learning a reduced-order hybrid model with a limited number of task-relevant hybrid modes, such that  it enables real-time closed-loop MPC control and is sufficient to achieve high performance on hybrid systems like multi-finger dexterous manipulation. We have shown that learning a reduced-order hybrid model attains  a provably upper-bounded closed-loop performance. We have demonstrated the proposed method in reducing the mode count of synthetic hybrid control systems by multiple orders of magnitude while achieving task performance loss of less than $5\%$. We apply the proposed method to solve three-finger robotic hand manipulation for object reorientation. Without any prior knowledge, the proposed method achieves state-of-the-art closed-loop performance in less than five minutes of data collection and model learning. The future work includes building the hardware and testing it on the hardware robotic manipulation system, as well as extension into nonlinear reduced-order hybrid models.

\section*{Acknowledgements}
Toyota Research Institute  funded and supported this work.

\appendices
\section{Proof of Lemma \ref{lemma.1}}\label{appendix.0}
\begin{proof} 
\begin{equation}
\begin{aligned}
     &J_{{\boldsymbol{\beta}}}\big(\mathbf{u}^{\boldsymbol{g}}, \mathbf{F}(\mathbf{u}^{\boldsymbol{g}},\boldsymbol{x}_0)\big) -J_{{\boldsymbol{\beta}}}\big(\mathbf{u}^{\boldsymbol{f}}, \mathbf{F}(\mathbf{u}^{\boldsymbol{f}},\boldsymbol{x}_0)\big)
     \\[5pt]
    = \,\, &\underbrace{J_{{\boldsymbol{\beta}}}\big(\mathbf{u}^{\boldsymbol{g}}, \mathbf{F}(\mathbf{u}^{\boldsymbol{g}},\boldsymbol{x}_0)\big)
    -J_{{\boldsymbol{\beta}}}\big(\mathbf{u}^{\boldsymbol{g}}, \mathbf{G}(\mathbf{u}^{\boldsymbol{g}},\boldsymbol{x}_0)\big)}_{\text{Term I}}
    \\
    & + \,\, \underbrace{J_{{\boldsymbol{\beta}}}\big(\mathbf{u}^{\boldsymbol{g}}, \mathbf{G}(\mathbf{u}^{\boldsymbol{g}},\boldsymbol{x}_0)\big)
    -J_{{\boldsymbol{\beta}}}\big(\mathbf{u}^{\boldsymbol{f}}, \mathbf{G}(\mathbf{u}^{\boldsymbol{f}},\boldsymbol{x}_0)\big)}_{\text{Term II}}
    \\
    & + \,\,\underbrace{J_{{\boldsymbol{\beta}}}\big(\mathbf{u}^{\boldsymbol{f}}, \mathbf{G}(\mathbf{u}^{\boldsymbol{f}},\boldsymbol{x}_0)\big)
    -J_{{\boldsymbol{\beta}}}\big(\mathbf{u}^{\boldsymbol{f}}, \mathbf{F}(\mathbf{u}^{\boldsymbol{f}},\boldsymbol{x}_0)\big)}_{\text{Term III}},
\end{aligned}
\end{equation}
Here, Term I and Term III can follow the Lipschitz continuity:
\begin{equation}\label{equ.proof1}
    \begin{aligned}
          \text{Term I}&\leq M\norm{\mathbf{G}(\mathbf{u}^{\boldsymbol{g}},\boldsymbol{x}_0)-\mathbf{F}(\mathbf{u}^{\boldsymbol{g}},\boldsymbol{x}_0)} \\
         \text{Term III}&\leq M\norm{\mathbf{G}(\mathbf{u}^{\boldsymbol{f}},\boldsymbol{x}_0)-\mathbf{F}(\mathbf{u}^{\boldsymbol{f}},\boldsymbol{x}_0)}. \\      
    \end{aligned}
\end{equation}
Term II trivially satisfies 
\begin{equation}\label{equ.proof2}
    J_{{\boldsymbol{\beta}}}\big(\mathbf{u}^{\boldsymbol{g}}, \mathbf{G}(\mathbf{u}^{\boldsymbol{g}},\boldsymbol{x}_0)\big)
    -J_{{\boldsymbol{\beta}}}\big(\mathbf{u}^{\boldsymbol{f}}, \mathbf{G}(\mathbf{u}^{\boldsymbol{f}},\boldsymbol{x}_0)\big)\leq 0
\end{equation}
because $\mathbf{u}^{\boldsymbol{g}}$ is the minimum by the definition of $\boldsymbol{g}$-MPC  (\ref{equ.mpcsimple}). 
Putting (\ref{equ.proof1}) and (\ref{equ.proof2}) together, one has 
\begin{multline}
    J_{{\boldsymbol{\beta}}}\big(\mathbf{u}^{\boldsymbol{g}}, \mathbf{F}(\mathbf{u}^{\boldsymbol{g}},\boldsymbol{x}_0)\big) {-}J_{{\boldsymbol{\beta}}}\big(\mathbf{u}^{\boldsymbol{f}}, \mathbf{F}(\mathbf{u}^{\boldsymbol{f}},\boldsymbol{x}_0)\big)
    \leq\\ M
     \Big(\norm{\mathbf{G}(\mathbf{u}^{\boldsymbol{g}},\boldsymbol{x}_0){-}\mathbf{F}(\mathbf{u}^{\boldsymbol{g}},\boldsymbol{x}_0)}
    + 
     \norm{\mathbf{G}(\mathbf{u}^{\boldsymbol{f}},\boldsymbol{x}_0)-\mathbf{F}(\mathbf{u}^{\boldsymbol{f}},\boldsymbol{x}_0)}\Big). \nonumber
\end{multline}
The above still holds with expectation of both sides over $ p(\boldsymbol{x}_0)$ and $ p({\boldsymbol{\beta}})$,  yielding (\ref{equ.lossupperbound}). This completes the proof.
\end{proof}

\begin{remark*}  $
          \E_{ p({\boldsymbol{\beta}})} \E_{ p_{\boldsymbol{\beta}}(\boldsymbol{x}_0)}  
     \Big\lVert\mathbf{G}(\mathbf{u}^{\boldsymbol{f}},\boldsymbol{x}_0)-\mathbf{F}(\mathbf{u}^{\boldsymbol{f}},\boldsymbol{x}_0)\Big\rVert
$ in (\ref{equ.lossupperbound}) is called the  domain adaption    in reinforcement learning  \cite{rajeswaran2020game}. Specifically, if $\boldsymbol{g}()$ is trained only with   $\boldsymbol{g}\text{-MPC}$  data  $\mathcal{D}^{\boldsymbol{g}}$, the \emph{domain adaption} term  captures the model error when the   $\boldsymbol{g}()$ is evaluated with  $\boldsymbol{f}\text{-MPC}$  data $\mathcal{D}^{\boldsymbol{f}}$. This domain adaption term  is inevitable  if want a learned proxy model $\boldsymbol{g}()$ trained on its generated data to capture the  true dynamics $\boldsymbol{f}()$.
\end{remark*}

\section{Proof of Lemma \ref{lemma.2}}\label{appendix.1}
\begin{proof}
Given any $\boldsymbol{x}_0\sim p(\boldsymbol{x}_0)$ and ${\boldsymbol{\beta}}\sim p({\boldsymbol{\beta}})$, as $\mathbf{u}^{\boldsymbol{g}}(\boldsymbol{x}_0,{\boldsymbol{\beta}})$ is a  solution to (\ref{equ.mpcsimple}), it satisfies the  first-order condition
\begin{multline}\label{equ.proof.lemma2.1}
\small
         \nabla_{\mathbf{u}} J_{\boldsymbol{\beta}}\big(\mathbf{u}^{\boldsymbol{g}}, \mathbf{G}(\mathbf{u}^{\boldsymbol{g}},\boldsymbol{x}_0)\big)=\left(\frac{\partial J_{\boldsymbol{\beta}}}{\partial \mathbf{u}^{\boldsymbol{g}}}+\frac{\partial J_{\boldsymbol{\beta}}}{\partial \mathbf{G}(\mathbf{u}^{\boldsymbol{g}})}\frac{\partial \mathbf{G}}{\partial \mathbf{u}^{\boldsymbol{g}}}\right)\tran=\boldsymbol{0},
\end{multline}
where $\frac{\partial J_{\boldsymbol{\beta}}}{\partial \mathbf{u}^{\boldsymbol{g}}}$ denotes  the partial gradient of $J_{\boldsymbol{\beta}}\big(\mathbf{u}, \mathbf{G}(\mathbf{u},\boldsymbol{x}_0)\big)$ with respect to $\boldsymbol{\mathbf{u}}$ evaluated at $\mathbf{u}^{\boldsymbol{g}}(\boldsymbol{x}_0,{\boldsymbol{\beta}})$, and similar notations applies to $\frac{\partial J_{\boldsymbol{\beta}}}{\partial \mathbf{G}(\mathbf{u}^{\boldsymbol{g}})}$ and $\frac{\partial \mathbf{G}}{\partial \mathbf{u}^{\boldsymbol{g}}}$ and  below. Using (\ref{equ.proof.lemma2.1}), one has
\begin{equation}\label{equ.proof.lemma2.2}
\small
    \begin{aligned}
             &\Big
 \lVert\nabla_{\mathbf{u}} J_{\boldsymbol{\beta}}\big(\mathbf{u}^{\boldsymbol{g}}, \mathbf{F}(\mathbf{u}^{\boldsymbol{g}},\boldsymbol{x}_0)\big)\Big
 \rVert\\[5pt]
             = &\Big
 \lVert\nabla_{\mathbf{u}} J_{\boldsymbol{\beta}}\big(\mathbf{u}^{\boldsymbol{g}}, \mathbf{F}(\mathbf{u}^{\boldsymbol{g}},\boldsymbol{x}_0)\big)-\nabla_{\mathbf{u}} J_{\boldsymbol{\beta}}\big(\mathbf{u}^{\boldsymbol{g}}, \mathbf{G}(\mathbf{u}^{\boldsymbol{g}},\boldsymbol{x}_0)\big)
             \Big
 \rVert\\[5pt]
             =&\left
             \lVert 
             \frac{\partial J_{\boldsymbol{\beta}}\big(\mathbf{u}, \mathbf{F}\big)}{\partial \mathbf{u}^{\boldsymbol{g}}}{+}\frac{\partial J_{\boldsymbol{\beta}}}{\partial \mathbf{F}(\mathbf{u}^{\boldsymbol{g}})}\frac{\partial \mathbf{F}}{\partial \mathbf{u}^{\boldsymbol{g}}}
             {-}
              \frac{\partial J_{\boldsymbol{\beta}}\big(\mathbf{u}, \mathbf{G}\big)}{\partial \mathbf{u}^{\boldsymbol{g}}}{-}\frac{\partial J_{\boldsymbol{\beta}}}{\partial \mathbf{G}(\mathbf{u}^{\boldsymbol{g}})}\frac{\partial \mathbf{G}}{\partial \mathbf{u}^{\boldsymbol{g}}}
             \right
             \rVert
             \\
             \leq&\left
             \lVert 
             \frac{\partial J_{\boldsymbol{\beta}}\big(\mathbf{u}, \mathbf{F}\big)}{\partial \mathbf{u}^{\boldsymbol{g}}}
             {-}
             \frac{\partial J_{\boldsymbol{\beta}}\big(\mathbf{u}, \mathbf{G}\big)}{\partial \mathbf{u}^{\boldsymbol{g}}}
              \right
             \rVert
             {+}
             \left
             \lVert 
              \frac{\partial J_{\boldsymbol{\beta}}}{\partial \mathbf{F}(\mathbf{u}^{\boldsymbol{g}})}\frac{\partial \mathbf{F}}{\partial \mathbf{u}^{\boldsymbol{g}}}
              {-}\frac{\partial J_{\boldsymbol{\beta}}}{\partial \mathbf{G}(\mathbf{u}^{\boldsymbol{g}})}\frac{\partial \mathbf{G}}{\partial \mathbf{u}^{\boldsymbol{g}}}
             \right
             \rVert,
    \end{aligned}
\end{equation}
where last inequality is due to Cauchy–Schwarz inequality, and for clarity, we drop the dependence on $\boldsymbol{x}_0$ momentarily.
The first term in (\ref{equ.proof.lemma2.2}) has
\begin{equation}\label{equ.proof.lemma2.3}
\small
    \left
     \lVert 
     \frac{\partial J_{\boldsymbol{\beta}}\big(\mathbf{u}, \mathbf{F}\big)}{\partial \mathbf{u}^{\boldsymbol{g}}}
     -
     \frac{\partial J_{\boldsymbol{\beta}}\big(\mathbf{u}, \mathbf{G}\big)}{\partial \mathbf{u}^{\boldsymbol{g}}}
      \right
     \rVert\leq L_2\Big
 \lVert\mathbf{F}(\mathbf{u}^{\boldsymbol{g}})-\mathbf{G}(\mathbf{u}^{\boldsymbol{g}})\Big
 \rVert.
\end{equation}
because of the $L_2$-Lipschitz continuity of $\nabla_{\mathbf{u}}{J}_{\boldsymbol{\beta}}({\mathbf{u}}, {\mathbf{x}})$ assumed in Lemma \ref{lemma.2}. The second term of (\ref{equ.proof.lemma2.2}) has 
\begin{equation}\label{equ.proof.lemma2.4}
\small
\begin{aligned}
&\left
 \lVert 
  \frac{\partial J_{\boldsymbol{\beta}}}{\partial \mathbf{F}(\mathbf{u}^{\boldsymbol{g}})}\frac{\partial \mathbf{F}}{\partial \mathbf{u}^{\boldsymbol{g}}}
  {-}\frac{\partial J_{\boldsymbol{\beta}}}{\partial \mathbf{G}(\mathbf{u}^{\boldsymbol{g}})}\frac{\partial \mathbf{G}}{\partial \mathbf{u}^{\boldsymbol{g}}}
 \right
 \rVert\\
 =&
 \bigg
 \lVert 
  \frac{\partial J_{\boldsymbol{\beta}}}{\partial \mathbf{F}(\mathbf{u}^{\boldsymbol{g}})}\frac{\partial \mathbf{F}}{\partial \mathbf{u}^{\boldsymbol{g}}}-
    \frac{\partial J_{\boldsymbol{\beta}}}{\partial \mathbf{F}(\mathbf{u}^{\boldsymbol{g}})}\frac{\partial \mathbf{G}}{\partial \mathbf{u}^{\boldsymbol{g}}} 
    {+} 
      \frac{\partial J_{\boldsymbol{\beta}}}{\partial \mathbf{F}(\mathbf{u}^{\boldsymbol{g}})}\frac{\partial \mathbf{G}}{\partial \mathbf{u}^{\boldsymbol{g}}}
  {-}\frac{\partial J_{\boldsymbol{\beta}}}{\partial \mathbf{G}(\mathbf{u}^{\boldsymbol{g}})}\frac{\partial \mathbf{G}}{\partial \mathbf{u}^{\boldsymbol{g}}}
 \bigg
 \rVert
 \\
 \leq& \bigg
 \lVert 
  \frac{\partial J_{\boldsymbol{\beta}}}{\partial \mathbf{F}(\mathbf{u}^{\boldsymbol{g}})}\frac{\partial \mathbf{F}}{\partial \mathbf{u}^{\boldsymbol{g}}}{-}
    \frac{\partial J_{\boldsymbol{\beta}}}{\partial \mathbf{F}(\mathbf{u}^{\boldsymbol{g}})}\frac{\partial \mathbf{G}}{\partial \mathbf{u}^{\boldsymbol{g}}}  \bigg
 \rVert
 {+}\bigg
 \lVert 
       \frac{\partial J_{\boldsymbol{\beta}}}{\partial \mathbf{F}(\mathbf{u}^{\boldsymbol{g}})}\frac{\partial \mathbf{G}}{\partial \mathbf{u}^{\boldsymbol{g}}}
  {-}\frac{\partial J_{\boldsymbol{\beta}}}{\partial \mathbf{G}(\mathbf{u}^{\boldsymbol{g}})}\frac{\partial \mathbf{G}}{\partial \mathbf{u}^{\boldsymbol{g}}}
 \bigg
 \rVert\\
 \leq &M_1 \bigg
 \lVert
\frac{\partial \mathbf{F}}{\partial \mathbf{u}^{\boldsymbol{g}}}-\frac{\partial \mathbf{G}}{\partial \mathbf{u}^{\boldsymbol{g}}}
  \bigg
 \rVert
 +M_gL_1\Big
 \lVert\mathbf{F}(\mathbf{u}^{\boldsymbol{g}})-\mathbf{G}(\mathbf{u}^{\boldsymbol{g}})\Big
 \rVert,
\end{aligned}
\end{equation}
where the last inequality is due to the bound $\norm{\nabla_{\mathbf{x}}{J}_{\boldsymbol{\beta}}({\mathbf{u}}, {\mathbf{x}})}\leq M_1$,
$L_1$-Lipschitz continuity of $\nabla_{\mathbf{x}}{J}_{\boldsymbol{\beta}}({\mathbf{u}}, {\mathbf{x}})$, and the bound $\norm{\nabla_{\mathbf{u}}{G}({\mathbf{u}})}\leq M_g$ given in  Lemma \ref{lemma.2}.

Combining (\ref{equ.proof.lemma2.2})-(\ref{equ.proof.lemma2.4}) and  replacing $\big
 \lVert
\frac{\partial \mathbf{F}}{\partial \mathbf{u}^{\boldsymbol{g}}}-\frac{\partial \mathbf{G}}{\partial \mathbf{u}^{\boldsymbol{g}}}
  \big
 \rVert$ compactly with
 $\big
 \lVert
\nabla_{\mathbf{u}}\mathbf{F}({\mathbf{u}})-\nabla_{\mathbf{u}}\mathbf{G}({\mathbf{u}})
  \big
 \rVert$
 we have  (\ref{equ.lemma2.1}) and (\ref{equ.lemma2.2}) in Lemma \ref{lemma.2}. This completes the proof.
\end{proof}


\bibliographystyle{IEEEtran}
\bibliography{trobib}

\begin{thebibliography}{10}
\providecommand{\url}[1]{#1}
\csname url@samestyle\endcsname
\providecommand{\newblock}{\relax}
\providecommand{\bibinfo}[2]{#2}
\providecommand{\BIBentrySTDinterwordspacing}{\spaceskip=0pt\relax}
\providecommand{\BIBentryALTinterwordstretchfactor}{4}
\providecommand{\BIBentryALTinterwordspacing}{\spaceskip=\fontdimen2\font plus
\BIBentryALTinterwordstretchfactor\fontdimen3\font minus
  \fontdimen4\font\relax}
\providecommand{\BIBforeignlanguage}[2]{{%
\expandafter\ifx\csname l@#1\endcsname\relax
\typeout{** WARNING: IEEEtran.bst: No hyphenation pattern has been}%
\typeout{** loaded for the language `#1'. Using the pattern for}%
\typeout{** the default language instead.}%
\else
\language=\csname l@#1\endcsname
\fi
#2}}
\providecommand{\BIBdecl}{\relax}
\BIBdecl

\bibitem{parmar2021fundamental}
M.~Parmar, M.~Halm, and M.~Posa, ``Fundamental challenges in deep learning for
  stiff contact dynamics,'' in \emph{IEEE/RSJ International Conference on
  Intelligent Robots and Systems}.\hskip 1em plus 0.5em minus 0.4em\relax IEEE,
  2021, pp. 5181--5188.

\bibitem{bianchini2022generalization}
B.~Bianchini, M.~Halm, N.~Matni, and M.~Posa, ``Generalization bounded implicit
  learning of nearly discontinuous functions,'' in \emph{Learning for Dynamics
  and Control Conference}.\hskip 1em plus 0.5em minus 0.4em\relax PMLR, 2022,
  pp. 1112--1124.

\bibitem{pfrommer2020contactnets}
S.~Pfrommer, M.~Halm, and M.~Posa, ``Contactnets: Learning discontinuous
  contact dynamics with smooth, implicit representations,'' in \emph{Conference
  on Robot Learning}, 2020.

\bibitem{jin2022learning}
W.~Jin, T.~D. Murphey, D.~Kuli{\'c}, N.~Ezer, and S.~Mou, ``Learning from
  sparse demonstrations,'' \emph{IEEE Transactions on Robotics}, 2022.

\bibitem{chen2020optimal}
Y.-M. Chen and M.~Posa, ``Optimal reduced-order modeling of bipedal
  locomotion,'' in \emph{IEEE International Conference on Robotics and
  Automation}.\hskip 1em plus 0.5em minus 0.4em\relax IEEE, 2020, pp.
  8753--8760.

\bibitem{AydinogluReal}
A.~Aydinoglu and M.~Posa, ``Real-time multi-contact model predictive control
  via admm,'' in \emph{IEEE International Conference on Robotics and
  Automation}, 2022, pp. 3414--3421.

\bibitem{cleac2021fast}
S.~L. Cleac'h, T.~Howell, M.~Schwager, and Z.~Manchester, ``Fast
  contact-implicit model-predictive control,'' \emph{arXiv preprint
  arXiv:2107.05616}, 2021.

\bibitem{de2018end}
F.~de~Avila Belbute-Peres, K.~Smith, K.~Allen, J.~Tenenbaum, and J.~Z. Kolter,
  ``End-to-end differentiable physics for learning and control,''
  \emph{Advances in neural information processing systems}, vol.~31, 2018.

\bibitem{geilinger2020add}
M.~Geilinger, D.~Hahn, J.~Zehnder, M.~B{\"a}cher, B.~Thomaszewski, and
  S.~Coros, ``Add: Analytically differentiable dynamics for multi-body systems
  with frictional contact,'' \emph{ACM Transactions on Graphics}, vol.~39,
  no.~6, pp. 1--15, 2020.

\bibitem{heiden2021neuralsim}
E.~Heiden, D.~Millard, E.~Coumans, Y.~Sheng, and G.~S. Sukhatme, ``Neuralsim:
  Augmenting differentiable simulators with neural networks,'' in \emph{IEEE
  International Conference on Robotics and Automation}.\hskip 1em plus 0.5em
  minus 0.4em\relax IEEE, 2021, pp. 9474--9481.

\bibitem{howell2022dojo}
T.~A. Howell, S.~L. Cleac'h, J.~Z. Kolter, M.~Schwager, and Z.~Manchester,
  ``Dojo: A differentiable simulator for robotics,'' \emph{arXiv preprint
  arXiv:2203.00806}, 2022.

\bibitem{suh2022differentiable}
H.~J. Suh, M.~Simchowitz, K.~Zhang, and R.~Tedrake, ``Do differentiable
  simulators give better policy gradients?'' in \emph{International Conference
  on Machine Learning}.\hskip 1em plus 0.5em minus 0.4em\relax PMLR, 2022, pp.
  20\,668--20\,696.

\bibitem{todorov2012mujoco}
E.~Todorov, T.~Erez, and Y.~Tassa, ``Mujoco: A physics engine for model-based
  control,'' in \emph{IEEE/RSJ International Conference on Intelligent Robots
  and Systems}, 2012, pp. 5026--5033.

\bibitem{stewart2000implicit}
D.~Stewart and J.~C. Trinkle, ``An implicit time-stepping scheme for rigid body
  dynamics with coulomb friction,'' in \emph{IEEE International Conference on
  Robotics and Automation}, vol.~1, 2000, pp. 162--169.

\bibitem{coumans2013bullet}
E.~Coumans \emph{et~al.}, ``Bullet physics library,'' \emph{Open source:
  bulletphysics. org}, vol.~15, no.~49, p.~5, 2013.

\bibitem{drake}
\BIBentryALTinterwordspacing
R.~Tedrake and the Drake Development~Team, ``Drake: Model-based design and
  verification for robotics,'' 2019. [Online]. Available:
  \url{https://drake.mit.edu}
\BIBentrySTDinterwordspacing

\bibitem{heemels2001equivalence}
W.~P. Heemels, B.~De~Schutter, and A.~Bemporad, ``Equivalence of hybrid
  dynamical models,'' \emph{Automatica}, vol.~37, no.~7, pp. 1085--1091, 2001.

\bibitem{bemporad2000piecewise}
A.~Bemporad, F.~Borrelli, and M.~Morari, ``Piecewise linear optimal controllers
  for hybrid systems,'' in \emph{American Control Conference}, vol.~2.\hskip
  1em plus 0.5em minus 0.4em\relax IEEE, 2000, pp. 1190--1194.

\bibitem{marcucci2019mixed}
T.~Marcucci and R.~Tedrake, ``Mixed-integer formulations for optimal control of
  piecewise-affine systems,'' in \emph{ACM International Conference on Hybrid
  Systems: Computation and Control}, 2019, pp. 230--239.

\bibitem{lauer2015complexity}
F.~Lauer, ``On the complexity of piecewise affine system identification,''
  \emph{Automatica}, vol.~62, pp. 148--153, 2015.

\bibitem{ferrari2003clustering}
G.~Ferrari-Trecate, M.~Muselli, D.~Liberati, and M.~Morari, ``A clustering
  technique for the identification of piecewise affine systems,''
  \emph{Automatica}, vol.~39, no.~2, pp. 205--217, 2003.

\bibitem{bemporad2022piecewise}
A.~Bemporad, ``A piecewise linear regression and classification algorithm with
  application to learning and model predictive control of hybrid systems,''
  \emph{IEEE Transactions on Automatic Control}, 2022.

\bibitem{sleiman2021unified}
J.-P. Sleiman, F.~Farshidian, M.~V. Minniti, and M.~Hutter, ``A unified mpc
  framework for whole-body dynamic locomotion and manipulation,'' \emph{IEEE
  Robotics and Automation Letters}, vol.~6, no.~3, pp. 4688--4695, 2021.

\bibitem{mastalli2020crocoddyl}
C.~Mastalli, R.~Budhiraja, W.~Merkt, G.~Saurel, B.~Hammoud, M.~Naveau,
  J.~Carpentier, L.~Righetti, S.~Vijayakumar, and N.~Mansard, ``Crocoddyl: An
  efficient and versatile framework for multi-contact optimal control,'' in
  \emph{IEEE International Conference on Robotics and Automation}.\hskip 1em
  plus 0.5em minus 0.4em\relax IEEE, 2020, pp. 2536--2542.

\bibitem{winkler2018gait}
A.~W. Winkler, C.~D. Bellicoso, M.~Hutter, and J.~Buchli, ``Gait and trajectory
  optimization for legged systems through phase-based end-effector
  parameterization,'' \emph{IEEE Robotics and Automation Letters}, vol.~3,
  no.~3, pp. 1560--1567, 2018.

\bibitem{hogan2020reactive}
F.~R. Hogan and A.~Rodriguez, ``Reactive planar non-prehensile manipulation
  with hybrid model predictive control,'' \emph{The International Journal of
  Robotics Research}, vol.~39, no.~7, pp. 755--773, 2020.

\bibitem{posa2014direct}
M.~Posa, C.~Cantu, and R.~Tedrake, ``A direct method for trajectory
  optimization of rigid bodies through contact,'' \emph{The International
  Journal of Robotics Research}, vol.~33, no.~1, pp. 69--81, 2014.

\bibitem{andersson2019casadi}
J.~A. Andersson, J.~Gillis, G.~Horn, J.~B. Rawlings, and M.~Diehl, ``Casadi: a
  software framework for nonlinear optimization and optimal control,''
  \emph{Mathematical Programming Computation}, vol.~11, no.~1, pp. 1--36, 2019.

\bibitem{nagabandi2020deep}
A.~Nagabandi, K.~Konolige, S.~Levine, and V.~Kumar, ``Deep dynamics models for
  learning dexterous manipulation,'' in \emph{Conference on Robot
  Learning}.\hskip 1em plus 0.5em minus 0.4em\relax PMLR, 2020, pp. 1101--1112.

\bibitem{allshire2021transferring}
A.~Allshire, M.~Mittal, V.~Lodaya, V.~Makoviychuk, D.~Makoviichuk, F.~Widmaier,
  M.~W{\"u}thrich, S.~Bauer, A.~Handa, and A.~Garg, ``Transferring dexterous
  manipulation from gpu simulation to a remote real-world trifinger,''
  \emph{arXiv preprint arXiv:2108.09779}, 2021.

\bibitem{andrychowicz2020learning}
O.~M. Andrychowicz, B.~Baker, M.~Chociej, R.~Jozefowicz, B.~McGrew,
  J.~Pachocki, A.~Petron, M.~Plappert, G.~Powell, A.~Ray \emph{et~al.},
  ``Learning dexterous in-hand manipulationnagabandi2020deep,'' \emph{The
  International Journal of Robotics Research}, vol.~39, no.~1, pp. 3--20, 2020.

\bibitem{de2005tutorial}
P.-T. De~Boer, D.~P. Kroese, S.~Mannor, and R.~Y. Rubinstein, ``A tutorial on
  the cross-entropy method,'' \emph{Annals of operations research}, vol. 134,
  no.~1, pp. 19--67, 2005.

\bibitem{wang2019benchmarking}
T.~Wang, X.~Bao, I.~Clavera, J.~Hoang, Y.~Wen, E.~Langlois, S.~Zhang, G.~Zhang,
  P.~Abbeel, and J.~Ba, ``Benchmarking model-based reinforcement learning,''
  \emph{arXiv preprint arXiv:1907.02057}, 2019.

\bibitem{kajita1991study}
S.~Kajita and K.~Tani, ``Study of dynamic biped locomotion on rugged
  terrain-derivation and application of the linear inverted pendulum mode,'' in
  \emph{IEEE International Conference on Robotics and Automation}.\hskip 1em
  plus 0.5em minus 0.4em\relax IEEE Computer Society, 1991, pp. 1405--1406.

\bibitem{orin2013centroidal}
D.~E. Orin, A.~Goswami, and S.-H. Lee, ``Centroidal dynamics of a humanoid
  robot,'' \emph{Autonomous robots}, vol.~35, no.~2, pp. 161--176, 2013.

\bibitem{pandala2022robust}
A.~Pandala, R.~T. Fawcett, U.~Rosolia, A.~D. Ames, and K.~A. Hamed, ``Robust
  predictive control for quadrupedal locomotion: Learning to close the gap
  between reduced-and full-order models,'' \emph{IEEE Robotics and Automation
  Letters}, vol.~7, no.~3, pp. 6622--6629, 2022.

\bibitem{stewart2000rigid}
D.~E. Stewart, ``Rigid-body dynamics with friction and impact,'' \emph{SIAM
  review}, vol.~42, no.~1, pp. 3--39, 2000.

\bibitem{todorov2011convex}
E.~Todorov, ``A convex, smooth and invertible contact model for trajectory
  optimization,'' in \emph{IEEE International Conference on Robotics and
  Automation}, 2011, pp. 1071--1076.

\bibitem{tsatsomeros2002generating}
M.~J. Tsatsomeros, ``Generating and detecting matrices with positive principal
  minors,'' \emph{Asian Information-Science-Life: An International Journal},
  vol.~1, no.~2, pp. 115--132, 2002.

\bibitem{cottle2009linear}
R.~W. Cottle, J.-S. Pang, and R.~E. Stone, \emph{The linear complementarity
  problem}.\hskip 1em plus 0.5em minus 0.4em\relax SIAM, 2009.

\bibitem{jin2021learning}
W.~Jin, A.~Aydinoglu, M.~Halm, and M.~Posa, ``Learning linear complementarity
  systems,'' \emph{Annual Learning for Dynamics and Control Conference}, 2022.

\bibitem{osqp}
\BIBentryALTinterwordspacing
B.~Stellato, G.~Banjac, P.~Goulart, A.~Bemporad, and S.~Boyd, ``{OSQP}: an
  operator splitting solver for quadratic programs,'' \emph{Mathematical
  Programming Computation}, vol.~12, no.~4, pp. 637--672, 2020. [Online].
  Available: \url{https://doi.org/10.1007/s12532-020-00179-2}
\BIBentrySTDinterwordspacing

\bibitem{afriat1971theory}
S.~Afriat, ``Theory of maxima and the method of lagrange,'' \emph{SIAM Journal
  on Applied Mathematics}, vol.~20, no.~3, pp. 343--357, 1971.

\bibitem{yuan2000review}
Y.-x. Yuan, ``A review of trust region algorithms for optimization,'' in
  \emph{International Council for Industrial and Applied Mathematics}, vol.~99,
  no.~1, 2000, pp. 271--282.

\bibitem{wachter2006implementation}
A.~W{\"a}chter and L.~T. Biegler, ``On the implementation of an interior-point
  filter line-search algorithm for large-scale nonlinear programming,''
  \emph{Mathematical programming}, vol. 106, no.~1, pp. 25--57, 2006.

\bibitem{wuthrich2020trifinger}
M.~W{\"u}thrich, F.~Widmaier, F.~Grimminger, J.~Akpo, S.~Joshi, V.~Agrawal,
  B.~Hammoud, M.~Khadiv, M.~Bogdanovic, V.~Berenz \emph{et~al.}, ``Trifinger:
  An open-source robot for learning dexterity,'' \emph{arXiv preprint
  arXiv:2008.03596}, 2020.

\bibitem{wensing2013generation}
P.~M. Wensing and D.~E. Orin, ``Generation of dynamic humanoid behaviors
  through task-space control with conic optimization,'' in \emph{IEEE
  International Conference on Robotics and Automation}, 2013, pp. 3103--3109.

\bibitem{dai2019global}
H.~Dai, G.~Izatt, and R.~Tedrake, ``Global inverse kinematics via mixed-integer
  convex optimization,'' \emph{The International Journal of Robotics Research},
  vol.~38, no. 12-13, pp. 1420--1441, 2019.

\bibitem{rajeswaran2020game}
A.~Rajeswaran, I.~Mordatch, and V.~Kumar, ``A game theoretic framework for
  model based reinforcement learning,'' in \emph{International conference on
  machine learning}, 2020, pp. 7953--7963.

\end{thebibliography}

\vspace{0pt}

\begin{IEEEbiography}[{\includegraphics[width=1.2in,height=1.2in,keepaspectratio]{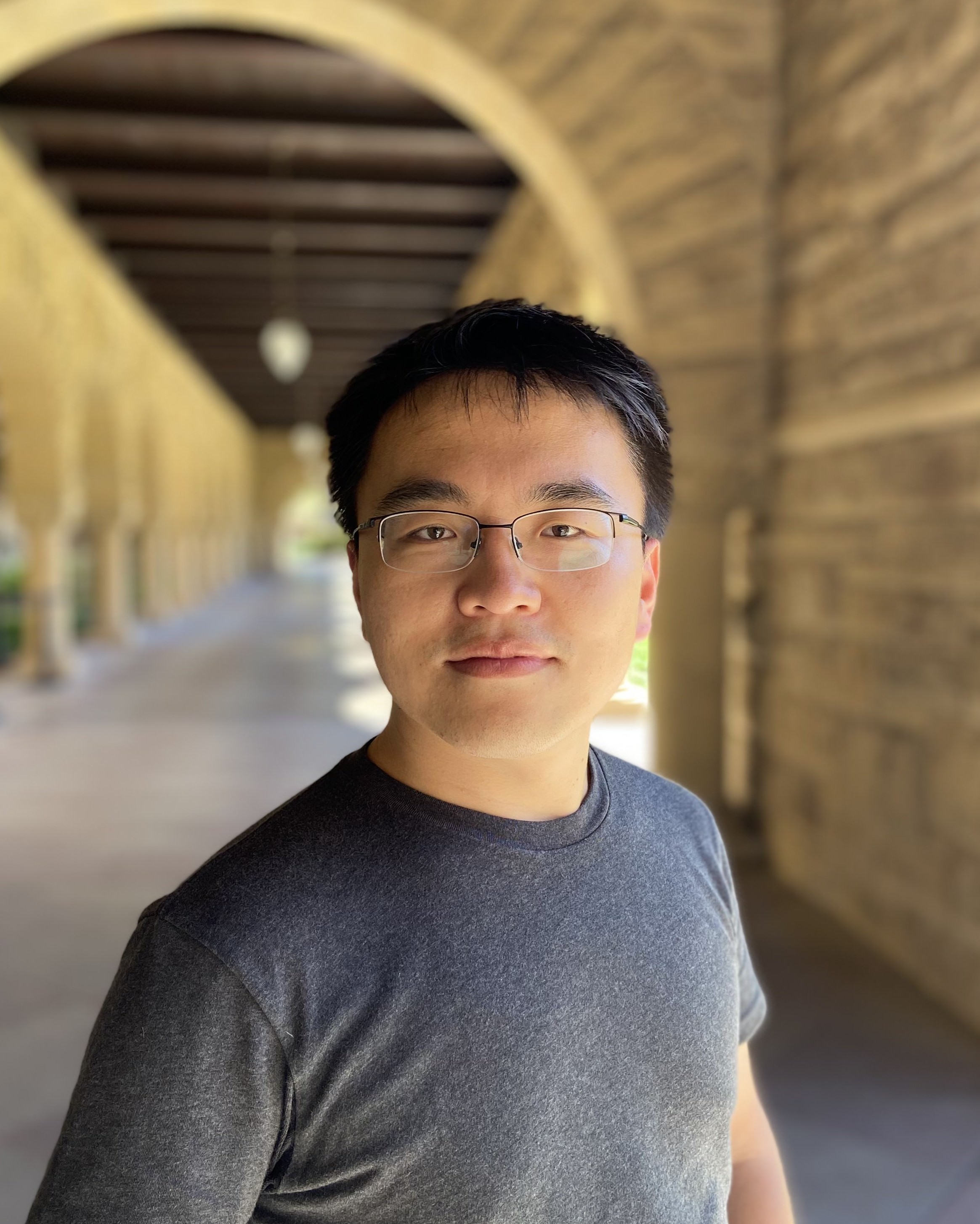}}]
{Wanxin Jin} is an Assistant Professor in the School for Engineering of Matter, Transport, and Energy at Arizona State University in Tempe, AZ, USA.  He earned his Ph.D. degree from Purdue University in West Lafayette, IN, USA, in 2021. Between 2021 and 2023, Dr. Jin held the position of Postdoctoral Researcher at the GRASP Lab, University of Pennsylvania, Philadelphia, PA, USA. At Arizona State University, Dr. Jin leads the Intelligent Robotics and Interactive Systems Lab, focusing on developing fundamental methods at the convergence of control, machine learning, and optimization, with the goal to enable robots to  safely and efficiently interact with humans and physical objects.
\end{IEEEbiography}

\begin{IEEEbiography}[{\includegraphics[width=1.2in,height=1.2in,keepaspectratio]{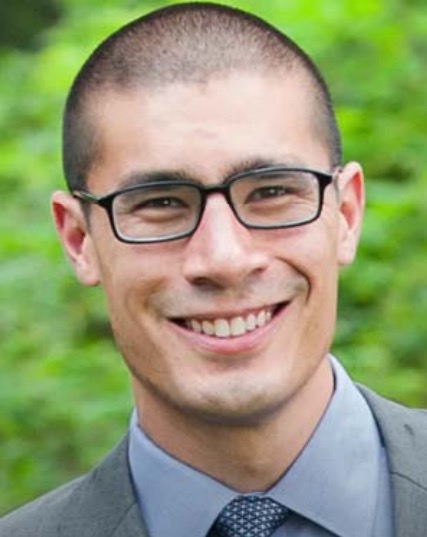}}]
	{Michael Posa} received the B.S. and M.S. degrees in mechanical engineering from Stanford University, Stanford, CA, USA, in 2007 and 2008, respectively. He received the Ph.D. degree in electrical engineering and computer science from the Massachusetts Institute of Technology, Cambridge, MA, USA, in 2017. He is currently an Assistant Professor of Mechanical Engineering and Applied Mechanics with the University of Pennsylvania, Philadelphia, PA, USA, where he is a Member of the General Robotics, Automation, Sensing and Perception (GRASP) Lab. He holds secondary appointments in electrical and systems engineering and in computer and information science. He leads the Dynamic Autonomy and Intelligent Robotics Lab, University of Pennsylvania, which focuses on developing computationally tractable algorithms to enable robots to operate both dynamically and safely as they maneuver through and interact with their environments, with applications including legged locomotion and manipulation. Dr. Posa was a recipient of multiple awards, including the NSF CAREER Award, RSS Early Career Spotlight Award, and best paper awards. He is an Associate Editor for IEEE Transactions on Robotics.
\end{IEEEbiography}

\end{document}